\def\preamble{}
\tikzset{
    -Latex,auto,node distance =1 cm and 1 cm,semithick,
    state/.style ={ellipse, draw, minimum width = 0.7 cm},
    point/.style = {circle, draw, inner sep=0.04cm,fill,node contents={}},
    bidirected/.style={Latex-Latex,dashed},
    el/.style = {inner sep=2pt, align=left, sloped}
}
\definecolor{myRed}{rgb}{0.75,0,0}
\definecolor{myGreen}{rgb}{0,0.75,0}
\declaretheorem{lemma}
\declaretheorem{fact}
\declaretheorem{proposition}
\theoremstyle{definition}
\declaretheorem{definition}
\declaretheorem{example}
\newcommand{\M}{\mathcal{M}}
\newcommand{\Ll}{\mathcal{L}}
\newcommand{\Bernoulli}{\textsf{Bernoulli}}
\newcommand{\Parents}{Pa}
\DeclareSymbolFont{symbolsC}{U}{txsyc}{m}{n}
\DeclareMathSymbol{\boxright}{\mathrel}{symbolsC}{128}
\newcommand{\norm}[1]{\left\lVert#1\right\rVert}
\def\*#1{\mathbf{#1}}
\newcommand{\pai}[1]{\*{pa}_{#1}}
\newcommand{\Pai}[1]{\*{Pa}_{#1}}
\newcommand{\ui}[1]{\*{u}_{#1}}
\newcommand{\Ui}[1]{\*{U}_{#1}}
\theoremstyle{definition}
\DeclareMathOperator{\rcm}{\textsf{RCM}}
\DeclareMathOperator{\cbn}{\textsf{CBN}}
\DeclareMathOperator{\unif}{Unif}
\DeclareMathOperator{\bern}{Bernoulli}
\DeclareMathOperator{\mlp}{MLP}
\DeclareMathOperator{\gp}{GP}
\DeclareMathOperator{\kl}{KL}
\newcommand{\ncm}{\text{NCM}}
\newcommand{\xdasharrow}[2][->]{
\tikz[baseline=-\the\dimexpr\fontdimen22\textfont2\relax]{
\node[anchor=south,font=\scriptsize, inner ysep=1.5pt,outer xsep=8pt](x){#2};
\draw[shorten <=3.4pt,shorten >=3.4pt,dashed,#1](x.south west)--(x.south east);
}
}
\def\ddefloop#1{\ifx\ddefloop#1\else\ddef{#1}\expandafter\ddefloop\fi}
\def\ddef#1{\expandafter\def\csname bb#1\endcsname{\ensuremath{\mathbb{#1}}}}
\def\ddef#1{\expandafter\def\csname c#1\endcsname{\ensuremath{\mathcal{#1}}}}
\def\ddef#1{\expandafter\def\csname h#1\endcsname{\ensuremath{\widehat{#1}}}}
\def\ddef#1{\expandafter\def\csname v#1\endcsname{\ensuremath{\boldsymbol{#1}}}}
\def\ddef#1{\expandafter\def\csname v#1\endcsname{\ensuremath{\boldsymbol{\csname #1\endcsname}}}}
\newcommand*{\indep}{%
  \mathbin{%
    \mathpalette{\@indep}{}%
  }%
}
\newcommand*{\nindep}{%
  \mathbin{
    \mathpalette{\@indep}{\not}
  }%
}
\newcommand*{\@indep}[2]{%
  \sbox0{$#1\perp\m@th$}
  \sbox2{$#1=$}
  \sbox4{$#1\vcenter{}$}
  \rlap{\copy0}
  \dimen@=\dimexpr\ht2-\ht4-.2pt\relax
  \kern\dimen@
  {#2}%
  \kern\dimen@
  \copy0 
} 
\title{Neural Causal Models for Counterfactual Identification and Estimation}
\author{%
  Kevin Xia \normalfont{and} \textbf{Yushu Pan} \text{and} \textbf{Elias Bareinboim}\\
  {\footnotesize Causal Artificial Intelligence Laboratory} \\
  {\footnotesize Columbia University, USA} \\
  {\footnotesize \texttt{\{kevinmxia,yushupan,eb\}@cs.columbia.edu}} \\
}
\begin{document}
\thispagestyle{empty}

\maketitle

\vspace{-0.1in}

\begin{abstract}
Evaluating hypothetical statements about how the world would be had a different course of action been taken is arguably one key capability expected from modern AI systems. Counterfactual reasoning underpins discussions in fairness, the determination of blame and responsibility, credit assignment, and regret. In this paper, we study the evaluation of counterfactual statements through neural models. Specifically, we tackle two causal problems required to make such evaluations, i.e., counterfactual identification and estimation from an arbitrary combination of observational and experimental data. First, we show that neural causal models (NCMs) are expressive enough and encode the structural constraints necessary for performing counterfactual reasoning. Second, we develop an algorithm for simultaneously identifying and estimating counterfactual distributions. We show that this algorithm is sound and complete for deciding counterfactual identification in general settings. Third, considering the practical implications of these results, we introduce a new strategy for modeling NCMs using generative adversarial networks. Simulations corroborate with the proposed methodology.

\end{abstract}

\section{Introduction}

Counterfactual reasoning is one of human's high-level cognitive capabilities, used across a wide range of affairs, including determining how objects interact, assigning responsibility, credit and blame, and articulating explanations. Counterfactual statements underpin prototypical questions of the form "what if--" and "why--", which inquire about hypothetical worlds that have not necessarily been realized   \citep{pearl:mackenzie2018}.
If a patient Alice had taken a drug and died, one may wonder, "why did Alice die?"; "was it the drug that killed her?"; "would she be alive had she not taken the drug?".
In the context of fairness, why did an applicant, Joe, not get the job offer? Would the outcome have changed had Joe been a Ph.D.? Or perhaps of a different race? These are examples of fundamental questions about \textit{attribution} and \textit{explanation}, which evoke hypothetical scenarios that disagree with the current reality and which not even experimental studies can reconstruct.

We build on the semantics of counterfactuals based on a generative process called \textit{structural causal model} (SCM) \citep{pearl:2k}. 
A fully instantiated  SCM $\M^*$ describes a collection of causal mechanisms and distribution over exogenous conditions. Each $\M^*$ induces families of qualitatively different distributions
related to the activities of seeing (called observational), doing (interventional), and imagining (counterfactual), which together are known as the \textit{ladder of causation}  \citep{pearl:mackenzie2018,bareinboim:etal20}; also called the \textit{Pearl Causal Hierarchy} (PCH).  
The PCH is a containment hierarchy in which distributions can be put in increasingly refined layers: observational content goes into layer 1 ($\Ll_1$); experimental to layer 2 ($\Ll_2$); counterfactual to layer 3 ($\Ll_3$). 
It is understood that there are questions about layers 2 and 3 that cannot be answered (i.e. are underdetermined), even given all information in the world about layer 1; further, layer 3 questions are still underdetermined given data from layers 1 and 2
\citep{bareinboim:etal20, ibeling2020probabilistic}.

Counterfactuals represent the more detailed, finest type of knowledge encoded in the PCH, so naturally, having the ability to evaluate counterfactual distributions is an attractive proposition. In practice, a fully specified model $\M^*$ is almost never observable, which leads to the question -- how can a counterfactual statement, from $\mathcal{L}_3^*$, be evaluated using a combination of observational and experimental data (from $\mathcal{L}_1^*$ and $\mathcal{L}_2^*$)?  This question embodies the challenge of cross-layer inferences, which entail solving two challenging causal problems in tandem, identification and estimation. 

\begin{wrapfigure}{r}{0.45\textwidth}
    \vspace{-0.1in}
  \begin{center}
    \begin{tikzpicture}[-, scale=0.6, every node/.append style={transform shape}]
        \draw [line width=0.05mm] (-5.5,1.9) -- (5.5,1.9);
        \draw [line width=0.05mm, draw=gray] (0,3) -- (0,-2.7);
        
        \node [align=center, font=\fontsize{11}{0}\selectfont] (atext1) at (-3, 3) {(a)};
        \node [align=center, font=\fontsize{11}{0}\selectfont] (atext2) at (-3, 2.6) {Unobserved};
        \node [align=center, font=\fontsize{11}{0}\selectfont] (atext3) at (-3, 2.2) {Nature/Truth};
        \node [align=center, font=\fontsize{11}{0}\selectfont] (btext1) at (3, 3) {(b)};
        \node [align=center, font=\fontsize{11}{0}\selectfont] (btext2) at (3, 2.6) {Learned/};
        \node [align=center, font=\fontsize{11}{0}\selectfont] (btext3) at (3, 2.2) {Hypothesized};
        
        \filldraw [fill=gray!40,line width=0.01mm] (-4.5, -0.65) rectangle (-1.5, -1.35);
        
        \node (pch) at (-5, -1) {PCH:};
    
        \filldraw [fill=gray!20,line width=0.01mm] (-4.5, 1.5) rectangle (-1.5, 0);
        \node [align=center, font=\fontsize{11}{0}\selectfont] (Pu1) at (-3,1.05) {SCM $\cM^*$};
        \node [align=center, font=\fontsize{11}{0}\selectfont] (Pu) at (-3,0.45) {$=\langle \cF^*, P(\*U^*) \rangle$};
    
        \node [] (L1L2ref) at (-3.5,-1.25) {};
    
       	\node [fill=gray!20, draw=gray, line width=0.08mm, align=center, minimum width=0.9cm] (L1Pv) at (-4,-1) {$\Ll_1^*$};
       	\node [fill=gray!20, draw=gray, line width=0.08mm, align=center, minimum width=0.9cm] (L2Pv) at (-3,-1) {$\Ll_2^*$};
       	\node [fill=gray!20, draw=gray, line width=0.08mm, align=center, minimum width=0.9cm] (L3Pv) at (-2,-1) {$\Ll_3^*$};
       	
       	\path [-Latex, line width=0.1mm] (Pu) edge (L1Pv.north);
       	\path [-Latex, line width=0.1mm] (Pu) edge (L2Pv.north);
       	\path [-Latex, line width=0.1mm] (Pu) edge (L3Pv.north);

       	\filldraw [fill=gray!0,line width=0.01mm] (4.5, 1.5) rectangle (1.5, 0);
        \node [align=center, font=\fontsize{11}{0}\selectfont] (nPu1) at (3,1.05) {NCM $\widehat{M}$};
        \node [align=center, font=\fontsize{11}{0}\selectfont] (nPu) at (3,0.45) {$= \langle \widehat{\cF}, P(\widehat{\*U})\rangle$};
    
       	\node [fill=gray!20, draw=gray, line width=0.08mm, align=center, minimum width=0.9cm] (nL1Pv) at (2,-1) {$\Ll_1$};
       	\node [fill=gray!20, draw=gray, line width=0.08mm, align=center, minimum width=0.9cm] (nL2Pv) at (3,-1) {$\Ll_2$};
       	\node [draw=gray, line width=0.08mm, align=center, minimum width=0.9cm] (nL3Pv) at (4,-1) {$\Ll_3$};
    
       	\path [-Latex, line width=0.1mm] (nPu) edge (nL1Pv.north);
       	\path [-Latex, line width=0.1mm] (nPu) edge (nL2Pv.north);
       	\path [-Latex, line width=0.1mm] (nPu) edge (nL3Pv.north);
       	
       	\path [-Latex, line width=0.2mm] (L1L2ref.south) edge[densely dotted, out=-15, in=-135] (nL1Pv.south);
       	\node [align=center, font=\fontsize{8}{0}\selectfont, inner sep=0.5mm] (trainingtext) at (-2.0, -2.3) {Training ($\cL_1 = \cL_1 ^*,$};
       	\node [align=center, font=\fontsize{8}{0}\selectfont, inner sep=0.5mm] (trainingtext2) at (-1.5, -2.6) {$\cL_2 = \cL_2^*$)};
       	
       	\filldraw [fill=gray!0,line width=0.01mm] (0.6, 1.5) rectangle (-0.6, 0);
        \node [align=center, font=\fontsize{8}{0}\selectfont] (nG1) at (0,1.2) {Causal};
        \node [align=center, font=\fontsize{8}{0}\selectfont] (nG2) at (0,0.75) {Diagram};
        \node [align=center, font=\fontsize{11}{0}\selectfont] (nG3) at (0,0.3) {$\cG$};
        
        \node [inner sep=0] (scmright) at (-1.5,0.75) {};
        \node [inner sep=0] (cgleft) at (-0.6,0.75) {};
        \node [inner sep=0] (cgright) at (0.6,0.75) {};
        \node [inner sep=0] (ncmleft) at (1.5,0.75) {};
        \path [-Latex, line width=0.1mm] (scmright) edge (cgleft);
        \path [-Latex, line width=0.1mm] (cgright) edge (ncmleft);
        
        \node [align=center, font=\fontsize{8}{0}\selectfont, fill=white] (gconstrainttext) at (0.5, -0.3) {$\cG$-Constraint};
        
    \end{tikzpicture}
    \caption{The l.h.s. contains the true SCM $\cM^*$ that induces PCH's three layers. The r.h.s. contains a neural model $\hM$ constrained by inductive bias $\cG$ (entailed by $\cM^*$) and matching $\cM^*$ on $\cL_1$ and $\cL_2$ through training.}
    \label{fig:cht}
    \end{center}
    \vspace{-0.2in}
\end{wrapfigure}
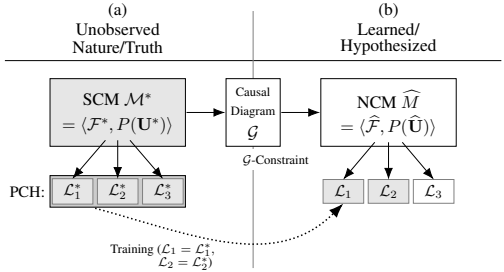

In the more traditional literature of causal inference, there are different symbolic methods for solving these problems in various settings and under different assumptions. In the context of identification, there exists an arsenal of results that includes celebrated methods such as Pearl's do-calculus \citep{pearl:95a}, and go through different algorithmic methods when considering inferences for $\Ll_2$-  \citep{tian:pea02-general-id,shpitser:pea06a,Huang2006,bareinboim:pea12-zid,lee:etal19,lee2020gidpo,lee2021matrix} and $\Ll_3$-distributions  \citep{heckman:92,pearl:01,avin:etal05,shpitser:pea09,shpitser:she18,zhang2018fairness,correa:etal21}. On the estimation side, there are various methods including the celebrated Propensity Score/IPW for the backdoor case  \citep{rubins:78,horvitz1952generalization,kennedy2019nonparametric,kallus20}, and some more relaxed settings \citep{fulcher2017robust,jung2020werm,jung2021dml}, but the literature is somewhat scarcer and less developed. 
In fact, there is a lack of estimation methods for $\cL_3$ quantities in most settings. 

On another thread in the literature, deep learning methods have achieved outstanding empirical success in solving a wide range of tasks in fields such as computer vision \citep{NIPS2012_c399862d}, speech recognition \citep{pmlr-v32-graves14}, and game playing \citep{volodymr:etal13}.
One key feature of deep learning is its ability to allow inferences to scale with the data to high dimensional settings.
We study here the suitability of the neural approach to tackle the problems of causal identification and estimation while trying to leverage the benefits of these new advances experienced in non-causal settings. 
\footnote{\label{ft:bd-models} One  of our motivations is that these methods showed great promise at estimating effects from observational data under backdoor/ignorability conditions  \citep{shalit2017estimating, louizos2017causal,  NIPS2017_b2eeb736, johansson2018learning, NEURIPS2018_a50abba8, yoon2018ganite,  kallus2018deepmatch, shi2019adapting,  du2020adversarial, Guo_2020}. } 
The idea behind the approach pursued here is illustrated in Fig.~\ref{fig:cht}. 
 Specifically, we  will search for a neural model $\hM$ (r.h.s.) that has the same generative capability of the true, unobserved SCM $\M^*$ (l.h.s.); in other words, $\hM$ should be able to generate the same observed/inputted data, i.e.,  $\mathcal{L}_1 = \mathcal{L}_1^*$ and $\mathcal{L}_2 = \mathcal{L}_2^*$.
\footnote{
This represents an extreme case where all $\cL_1$- and $\cL_2$-distributions are provided as data. In practice, this may be unrealistic, and our method takes as input any arbitrary subset of distributions from $\cL_1$ and $\cL_2$. 
}
To tackle this task in practice, we use an inductive bias for the neural model in the form of a \emph{causal diagram} \citep{pearl:2k,spirtes:etal00,bareinboim:pea16}, which is a parsimonious description of the mechanisms ($\mathcal{F}^*$) and exogenous conditions ($P(\*U^*)$) of the generating SCM. \footnote{When imposed on neural models, they enforce equality constraints connecting layer 1 and layer 2 quantities, defined formally through the \emph{causal Bayesian network} (CBN) data structure \citep[Def.~16]{bareinboim:etal20}.} 
The question then becomes: under what conditions can a model trained using this combination of qualitative inductive bias and the available data be suitable to answer questions about hypothetical counterfactual worlds, \textit{as if} we had access to the true $\M^*$? 

There exists a growing literature that leverages modern neural methods to solve causal inference tasks.\footnotemark[1] 
Our approach based on proxy causal models will answer causal queries by direct evaluation through a parameterized neural model $\hM$ fitted on the data generated by $\mathcal{M}^*$. \footnote{In general,  $\hM$ does not need to, and will not be equal to the true SCM $\M^*$.} 
For instance, some recent work solves the estimation of interventional ($\Ll_2$) or counterfactual ($\Ll_3$) distributions from observational ($\Ll_1$) data in Markovian settings,  implemented through architectures such as GANs, flows, GNNs, and VGAEs \citep{kocaoglu2018causalgan,pawlowski2020deep,zecevic2021relating,sanchezmartin2022}. 
In some real-world settings, Markovianity is a too stringent condition (see discussion in App.~\ref{app:markovianity}) and may be violated, which leads to the separation between layers 1 and 2, and, in turn, issues of causal identification. \footnote{Layer 3 differs from lower layers even in Markovian models; see \citet[Ex.~7]{bareinboim:etal20}. }
The proxy approach discussed above was pursued in \citet{xia:etal21} to solve the identification and estimation of interventional distributions  ($\Ll_2$) from observational data ($\Ll_1$) in non-Markovian settings. \footnote{\citet{witty2021} shows a related approach taking the Bayesian route; further details, see Appendix \ref{app:related-works}.} This work introduced an object we leverage throughout this paper called \textit{Neural Causal Model} (NCM, for short), which is a class of SCMs constrained to neural network functions and fixed distributions over the exogenous variables. While NCMs have been shown to be able to solve the identification and estimation tasks for $\cL_2$ queries, their potential for counterfactual inferences is still largely unexplored, and existing implementations have been constrained to low-dimensional settings.

Despite all the progress achieved so far, no practical methods exist for estimating counterfactual ($\Ll_3$) distributions in the general setting where an arbitrary combination of observational ($\Ll_1$) and experimental ($\Ll_2$) distributions is available, and unobserved confounders exist (i.e. Markovianity does not hold). Hence, in addition to providing the first neural method of counterfactual identification, this paper establishes the first general counterfactual estimation technique even among non-neural methods, leveraging the neural toolkit for scalable inferences.
Specifically, our contributions are: 
1. We prove that when fitted with a graphical inductive bias, the NCM encodes the $\cL_3$-constraints necessary for performing counterfactual inference (Thm.~\ref{thm:gl3-consistency}), and that they are still expressive enough to model the underlying data-generating model, which is not necessarily a neural network  (Thm.~\ref{thm:l3-g-expressiveness}). \\
2.  We show that counterfactual identification within a neural proxy model setting is equivalent to established symbolic approaches (Thm.~\ref{thm:ncm-ctfid-equivalence}). We leverage this duality to develop an optimization procedure (Alg.~\ref{alg:ncm-solve-ctfid}) for counterfactual identification and estimation that is both sound and complete (Corol.~\ref{thm:ncm-ctfid-correctness}). The approach is general in that it accepts any combination of inputs from $\cL_1$ and $\cL_2$, it works in any causal diagram setting, and it does not require the Markovianity assumption to hold.\\
3. We develop a new approach to modeling the NCM using generative adversarial networks (GANs) \citep{NIPS2014_5ca3e9b1}, capable of robustly scaling inferences to high dimensions (Alg.~\ref{alg:ncm-learn-pv}). We then show how GAN-NCMs can solve the challenging optimization problems in identifying and estimating counterfactuals in practice.
Experiments are provided in Sec.~5 and proofs in Appendix \ref{app:proofs}.


\noindent \textbf{Preliminaries.}
We now introduce the notation and definitions used throughout the paper. We use uppercase letters ($X$) to denote random variables and lowercase letters ($x$) to denote corresponding values. Similarly, bold uppercase ($\*X$) and lower case ($\*x$) letters are used to denote sets of random variables and values respectively. We use $\cD_{X}$ to denote the domain of $X$ and $\cD_{\mathbf{X}} = \cD_{X_1} \times \dots \times \cD_{X_k}$ for the domain of $\mathbf{X} = \{X_1, \dots, X_k\}$. We denote $P(\*X = \*x)$ (which we will often shorten to $P(\*x)$) as the probability of  $\*X$ taking the values $\*x$ under the probability distribution $P(\*X)$.

We utilize the basic semantic framework of structural causal models (SCMs), as defined in  \citep[Ch.~7]{pearl:2k}.
An SCM $\cM$ consists of endogenous variables $\*V$, exogenous variables $\*U$ with distribution $P(\*U)$, and mechanisms $\cF$. $\cF$ contains a function $f_{V_i}$ for each variable $V_i$ that maps endogenous parents $\Pai{V_i}$ and exogenous parents $\Ui{V_i}$ to $V_i$. Each $\cM$  induces a causal diagram $\cG$, where every $V_i \in \*V$ is a vertex, there is a directed arrow $(V_j \rightarrow V_i)$ for every $V_i \in \*V$ and $V_j \in \Pai{V_i}$, and there is a dashed-bidirected arrow $(V_j  \dashleftarrow \dashrightarrow V_i)$ for every pair $V_i, V_j \in \*V$ such that $\Ui{V_i}$ and $\Ui{V_j}$ are not independent. For further details, see \citep[Def.~13/16,~Thm.~4]{bareinboim:etal20}. The exogenous $\*U_{V_i}$'s are not assumed independent (i.e.\ Markovianity is not required). Our treatment is constrained to \emph{recursive} SCMs, which implies acyclic causal diagrams, with finite domains over  $\mathbf{V}$. 

Each SCM $\cM$ assigns values to each counterfactual distribution as follows:
\begin{definition}[Layer 3 Valuation] 
\label{def:l3-semantics}
An SCM $\cM$ induces layer $\cL_3(\cM)$, a set of distributions over $\*V$, each with the form $P(\*Y_*) = P(\*Y_{1[\*x_1]}, \*Y_{2[\*x_2], \dots})$ such that 
\begin{align}
    \label{eq:def:l3-semantics}
    P^{\cM}(\*y_{1[\*x_1]}, \*y_{2[\*x_2]}, \dots) = 
    \int_{\cD_{\mathbf{U}}} \mathbbm{1}\left[\*Y_{1[\*x_1]}(\*u)=\*y_1, \*Y_{2[\*x_2]}(\*u) = \*y_2, \dots \right] dP(\*u),
\end{align}
where ${\*Y}_{i[\*x_i]}(\*u)$ is evaluated under 
 $\mathcal{F}_{\*x_i}\! :=\! \{f_{V_j}\! :\! V_j \in \*V \setminus \*X_i\} \cup \{f_X \leftarrow x\! :\! X \in \*X_i\}$.  \hfill $\blacksquare$

\end{definition}
Each $\*Y_i$ corresponds to a set of variables in a world where the original mechanisms $f_X$ are replaced with constants $\*x_i$ for each $X \in \*X_i$; this is also known as the mutilation procedure. This procedure corresponds to interventions, and we use subscripts to denote the intervening variables (e.g. $\*Y_{\*x}$) or subscripts with brackets when the variables are indexed (e.g. $\*Y_{1[\*x_1]}$). For instance, $P(y_x, y'_{x'})$ is the probability of the joint counterfactual event $Y=y$ had $X$ been $x$ and $Y=y'$ had $X$ been $x'$. 

SCM $\cM_2$ is said to be $P^{(\cL_i)}$-consistent (for short, $\cL_i$-consistent) with SCM $\cM_1$ if $\cL_i(\cM_1) = \cL_i(\cM_2)$.
We will use $\bbZ$ to denote a set of quantities from Layer 2 (i.e. $\bbZ = \{P(\*V_{\*z_k})\}_{k=1}^{\ell}$), and we use $\bbZ(\cM)$ to denote those same quantities induced by SCM $\cM$ (i.e. $\bbZ(\cM) = \{P^{\cM}(\*V_{\*z_k})\}_{k=1}^{\ell}$).

We use neural causal models (NCMs) as a substitute (proxy) model for the true SCM, as follows:

\begin{definition}[$\cG$-Constrained Neural Causal Model ($\cG$-NCM) {\citep[Def.~7]{xia:etal21}}]
    \label{def:gncm}
    Given a causal diagram $\cG$, a $\cG$-constrained Neural Causal Model (for short, $\cG$-NCM) $\widehat{M}(\bm{\theta})$ over variables $\*V$ with parameters $\bm{\theta} = \{\theta_{V_i} : V_i \in \*V\}$ is an SCM $\langle \widehat{\*U}, \*V, \widehat{\cF}, \widehat{P}(\widehat{\*U}) \rangle$ such that $\widehat{\*U} = \{\widehat{U}_{\*C} : \*C \in \bbC(\cG)\}$, where $\bbC(\cG)$ is the set of all maximal cliques over bidirected edges of $\cG$, and $\cD_{\widehat{U}} = [0, 1]$ for all $\widehat{U} \in \widehat{\*U}$; $\widehat{\cF} = \{\hat{f}_{V_i} : V_i \in \*V\}$, where each $\hat{f}_{V_i}$ is a feedforward neural network parameterized by $\theta_{V_i} \in \bm{\theta}$ mapping values of $\Ui{V_i} \cup \Pai{V_i}$ to values of $V_i$ for $\Ui{V_i} = \{\widehat{U}_{\*C} : \widehat{U}_{\*C} \in \widehat{\*U} \text{ s.t. } V_i \in \*C\}$ and $\Pai{V_i} = \Parents_{\cG}(V_i)$; $\widehat{P}(\widehat{\*U})$ is defined s.t.\ $\widehat{U} \sim \unif(0, 1)$ for each $\widehat{U} \in \widehat{\*U}$.
    \hfill $\blacksquare$
\end{definition}

\section{Neural Causal Models for Counterfactual Inference}
\label{sec:data-structure}

We first recall that inferences about higher layers of the PCH generated by the true SCM $\cM^*$ cannot be made in general through an NCM $\widehat{M}$ trained only from lower layer data \citep{bareinboim:etal20,xia:etal21}. 
This impossibility motivated the use of the inductive bias in the form of a causal diagram $\cG$ in the construction of the NCM in Def.~\ref{def:gncm}, which ascertains that the $\cG$-consistency  property holds. (See App.~\ref{sec:cg-assumption} for further discussion.) 
We next define consistency w.r.t. to each layer, 
which will be key for a more fine-grained discussion later on.

\begin{definition}[$\cG^{(\cL_i)}$-Consistency]
    \label{def:gli-consistency}
    Let $\cG$ be the causal diagram induced by the SCM $\cM^*$. For any SCM $\cM$, $\cM$ is said to be $\cG^{(\cL_i)}$-consistent (w.r.t.~$\cM^*$) if $\cL_i(\cM)$ satisfies all layer $i$ equality constraints implied by $\cG$.
    \hfill $\blacksquare$
\end{definition}

This generalization is subtle since regardless of which $\cL_i$ is used with the definition, the causal diagram $\cG$ generated by $\M^*$ is the same. 
The difference lies in the implied constraints.
For instance, if an SCM $\cM$ is $\cG^{(\cL_1)}$-consistent, that means that $\cG$ is a Bayesian network for the observational distribution of $\cM$, implying independences readable through d-separation \cite{pearl:88a}. 
If $\cM$ is $\cG^{(\cL_2)}$-consistent, that means that $\cG$ is a \emph{Causal Bayesian network} (CBN) \citep[Def.~16]{bareinboim:etal20} for the interventional distributions of $\cM$. 
While several SCMs could share the same d-separation constraints as $\cM^*$, there are fewer that share all $\cL_2$ constraints encoded by the CBN. $\cG$-consistency at higher layers imposes a stricter set of constraints, narrowing down the set of compatible SCMs. 
There also exist constraints of layer 3 that are important for counterfactual inferences.

To motivate the use of such constraints, consider an example inspired by the multi-armed bandit problem. A casino has 3 slot machines, labeled ``0", ``1", and ``2". Every day, the casino assigns one machine a good payout, one a bad payout, and one an average payout, with chances of winning represented by exogenous variables $U_+$, $U_-$, and $U_=$, respectively. A customer comes every day and plays a slot machine. $X$ represents their choice of machine, and $Y$ is a binary variable representing whether they win. Suppose a data scientist creates a model of the situation, and she hypothesizes that the casino predicts the customer's choice based on their mood ($U_M$) and will always assign the predicted machine the average payout  to maintain profits. Her model is described by the SCM $\cM'$:
\begin{align} 
\cM' =
\begin{cases}
    \*U &= \{U_M, U_+, U_=, U_-\}, U_M \in \{0, 1, 2\}, U_+, U_=, U_- \in \{0, 1\} \\
    \*V &= \{X, Y\}, X \in \{0, 1, 2\}, Y \in \{0, 1\} \\
    \cF &=
    \begin{cases}
        f_X(u_M) &= u_M \\
        f_Y(x, u_M, u_+, u_=, u_-) &=
        \begin{cases}
            u_= & x = u_M \\
            u_- & x = (u_M - 1) \% 3 \\
            u_+ & x = (u_M + 1) \% 3
        \end{cases}
    \end{cases} \\
    P(\*U): & P(U_M\!=\!i) = \frac{1}{3}, P(U_+\!=\!1) = 0.6, P(U_=\!=\!1) = 0.4, P(U_-\!=\!1) = 0.2
\end{cases}
\end{align}
It turns out that in this model $P(y_x) = P(y \mid x)$. For example, $P(Y = 1 \mid X = 0) = P(U_= = 1) = 0.4$, and $P(Y_{X = 0} = 1) = P(U_M = 0)P(U_= = 1) + P(U_M = 1)P(U_- = 1) + P(U_M = 2)P(U_+ = 1) = \frac{1}{3}(0.4) + \frac{1}{3}(0.2) + \frac{1}{3}(0.6) = 0.4$.

Suppose the true model $\cM^*$ employed by the casino (and unknown by the customers and data scientist) induces graph $\cG = \{ X \rightarrow Y \}$. 
 Interestingly enough, $\cM'$ would be $\cG^{(\cL_2)}$-consistent with $\cM^*$ since $\cM'$ is compatible with all $\Ll_2$-constraints, including $P(y_x) = P(y \mid x)$ and $P(x_y) = P(x)$. However, and perhaps surprisingly, it would fail to be $\cG^{(L_3)}$-consistent. A further constraint implied by $\cG$ on the third layer is that $P(y_x \mid x') = P(y_x)$, which is not true of $\cM'$. To witness, note that $P(Y_{X=0} = 1 \mid X = 2) = P(U_+ = 1) = 0.6$ in $\cM'$, which means that if the customer chose machine 2, they would have had higher payout had they chosen machine 0. This does not match $P(Y_{X=0} = 1) = 0.4$, computed earlier, so $\cM'$ fails to encode the $\cL_3$-constraints implied by $\cG$.

\begin{wrapfigure}{r}{0.22\textwidth}
\vspace{-0.0in}
\includegraphics[width=\linewidth]{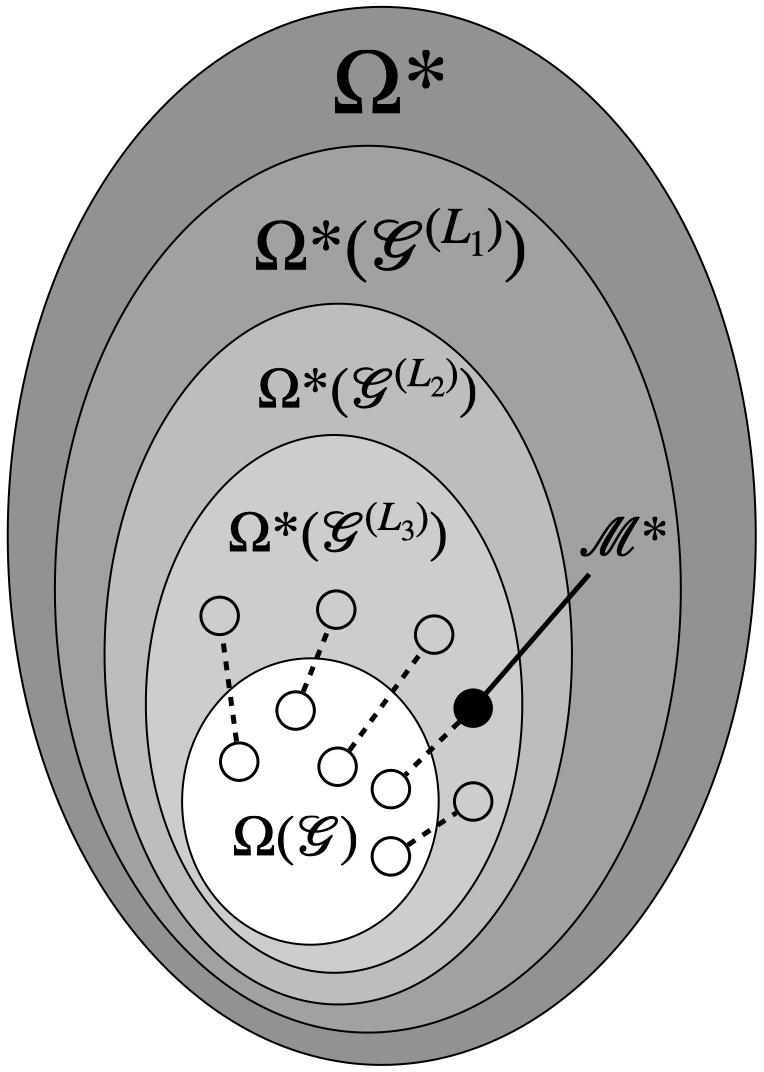}
\caption{
Model-theoretic visualization of  Thms.~\ref{thm:gl3-consistency} and \ref{thm:l3-g-expressiveness}. 
}
\label{fig:causal-criteria}
\vspace{-0.4in}
\end{wrapfigure}

In general, the causal diagram encodes a family of $\cL_3$-constraints which we leverage to make cross-layer inferences. A more detailed discussion can be found in  Appendix~\ref{app:examples}. We show next that NCMs encodes all of the equality constraints related to $\Ll_3$, in addition to the known $\Ll_2$-constraints. 

\begin{restatable}[NCM $\cG^{(\cL_3)}$-Consistency]{theorem}{glconsistency}
    \label{thm:gl3-consistency}
    Any $\cG$-NCM $\widehat{M}(\bm{\theta})$ is $\cG^{(\cL_3)}$-consistent.
    \hfill $\blacksquare$
\end{restatable}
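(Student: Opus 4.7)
The plan is to reduce $\cG^{(\cL_3)}$-consistency of the NCM to a purely structural claim: namely, that the causal diagram induced by $\widehat{M}$, viewed as an SCM in its own right, is a subgraph of $\cG$. If this structural fact holds, then by Def.~\ref{def:gli-consistency} read semantically (the $\cL_3$ equality constraints implied by $\cG$ are exactly those that hold in every SCM whose induced diagram is a subgraph of $\cG$), the NCM automatically lies among the SCMs over which the intersection of $\cL_3$ is taken, and the theorem follows.

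To carry this out, I would first observe that by Def.~\ref{def:l3-semantics}, every counterfactual probability $P^{\widehat{M}}(\*y_{1[\*x_1]}, \*y_{2[\*x_2]}, \dots)$ is completely determined by (i) which variables each $\widehat{f}_{V_i}$ depends on and (ii) the joint distribution of $\widehat{\*U}$. Consequently, any equality on such probabilities that is forced purely by the parent-set structure and the exogenous-sharing structure of $\cG$ must hold in every SCM that respects those two structural features, the NCM included.

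Then I would verify the structural check directly from Def.~\ref{def:gncm}. The directed-edge part is immediate: each $\widehat{f}_{V_i}$ takes as input only $\Parents_{\cG}(V_i) \cup \Ui{V_i}$, so every directed edge in $\widehat{M}$'s induced diagram is already present in $\cG$. For the bidirected part, note that $\widehat{\*U}$ is indexed by the maximal bidirected cliques $\bbC(\cG)$ with each $\widehat{U}_{\*C} \sim \unif(0,1)$ mutually independent. Two exogenous sets $\Ui{V_i}$ and $\Ui{V_j}$ therefore fail to be independent iff they share some component $\widehat{U}_{\*C}$, iff $V_i, V_j$ both lie in a common maximal bidirected clique of $\cG$, iff $V_i \Confounded V_j$ in $\cG$. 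Hence the bidirected structure of $\widehat{M}$'s induced diagram also sits inside that of $\cG$, and the structural check is complete.

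The main obstacle I anticipate is pinning down the phrase ``layer-3 equality constraints implied by $\cG$'' precisely enough to make the reduction clean. The simplest route is the semantic one sketched above, defining the constraint set as $\bigcap_{\cM : \cG_\cM \subseteq \cG} \cL_3(\cM)$; by the structural check the NCM lies in this intersection, and we are done. If instead the authors favor a syntactic/axiomatic characterization (through principles such as effectiveness, composition, exclusion restrictions, and exogenous independence), then each principle must be discharged on $\widehat{M}$ individually using Def.~\ref{def:l3-semantics}; this is essentially routine verification, but more bureaucratic, and I would only resort to it if the paper's conventions force a syntactic reading of $\cG^{(\cL_3)}$-consistency.
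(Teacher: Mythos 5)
Your argument is essentially the paper's own proof: the paper formalizes the $\cL_3$ constraints implied by $\cG$ as the intersection $\bigcap_{\cM \in \Omega^*(\cG)} C(\cL_3(\cM))$ over SCMs inducing $\cG$, observes (citing a lemma from prior work) that the causal diagram induced by any $\cG$-NCM is $\cG$ itself, and concludes membership in the class over which that intersection is taken --- which is exactly your reduction, with your structural check re-deriving the cited fact directly from Def.~\ref{def:gncm}. The one wrinkle is that the paper quantifies over SCMs whose induced diagram equals $\cG$ \emph{exactly}, not over subgraphs, so the membership you need is equality of the induced diagram rather than inclusion (your proposed subgraph-based constraint set is a priori weaker than the paper's, so the subgraph reduction alone would not discharge the theorem under the paper's definition); fortunately your bidirected-edge verification already yields equality, and the directed part is exact as well since every variable of $\Parents_{\cG}(V_i)$ appears as an argument of $\hat{f}_{V_i}$ by construction, so the upgrade is immediate.
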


This will be a key result for performing inferences at the counterfactual level. Similar to how constraints about layer 2 distributions help bridge the gap between layers 1 and 2, layer 3 constraints allow us to extend our inference capabilities into layer 3. (In fact, most of $\Ll_3$'s distributions are not obtainable through experimentation.)   While this graphical inductive bias is powerful, the set of NCMs constrained by $\cG$ is no less expressive than the set of SCMs constrained by $\cG$, as shown next.

\begin{restatable}[$\cL_3$-$\cG$ Expressiveness]{theorem}{lgexpressiveness}
    \label{thm:l3-g-expressiveness}
    For any SCM $\cM^*$ that induces causal diagram $\cG$, there exists a $\cG$-NCM  $\widehat{M}(\bm{\theta}) = \langle \widehat{\*U}, \*V, \widehat{\cF}, \widehat{P}(\widehat{\*U}) \rangle$ s.t. $\widehat{M}$ is $\cL_3$-consistent w.r.t. $\cM^*$.
    \hfill $\blacksquare$
\end{restatable}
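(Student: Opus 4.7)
The plan is to construct, for any SCM $\cM^*$ compatible with $\cG$, a specific $\cG$-NCM $\widehat M$ whose third-layer distributions coincide with those of $\cM^*$. The strategy extends the $\cL_2$-expressiveness argument of \citet{xia:etal21} by preserving the entire joint law of the exogenous response functions of $\cM^*$; since Def.~\ref{def:l3-semantics} shows that $\cL_3$ depends on $\cM^*$ only through this joint, matching it suffices.

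First, because $\cM^*$ is recursive and has finite endogenous domains, Balke and Pearl's canonical-SCM construction yields an $\cL_3$-equivalent SCM $\cM_c$ in which each exogenous variable $R_V$ ranges over the finite function space $\cD_V^{\cD_{\Pai{V}}}$ and $V = R_V(\Pai{V})$. It therefore suffices to build a $\cG$-NCM $\widehat M$ whose own response-function tuple $(\widehat R_V)_{V \in \*V}$, defined by $\widehat R_V(\cdot) := \hat f_V(\cdot, \Ui{V})$, has the same joint distribution as $(R_V)_{V \in \*V}$ under $\cM_c$.

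Second, the independence of $\widehat R_V$ and $\widehat R_{V'}$ whenever $V,V'$ share no common maximal bidirected clique is automatic, since each $\hat f_V$ reads only from $\{\widehat U_{\*C} : V \in \*C\}$ and the $\widehat U_{\*C}$ are mutually independent. To realize arbitrary within-component joints, I would use the Borel isomorphism of $[0,1]$ with any standard Borel space to split each $\widehat U_{\*C}$ into a ``content'' component that carries a joint draw of $\{R_V : V \in \*C\}$ from a suitable clique-level marginal and an ``auxiliary'' component that supplies the extra entropy needed to reconcile vertices lying in more than one clique. Processing the bidirected cliques along a junction-tree ordering, I would then define the $\hat f_V$ inductively so that each newly incorporated clique glues onto the previously constructed joint while matching the correct conditional distribution, yielding $P^{\cM_c}(R_{\*V})$ globally. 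Finally, because $\cD_V$ and $\cD_{\Pai{V}}$ are finite, the map $(\Pai{V}, \Ui{V}) \mapsto \widehat R_V(\Pai{V})$ is piecewise constant in $\Ui{V}$ with finitely many pieces per setting of $\Pai{V}$ and takes values in the finite set $\cD_V$; such mappings admit exact representation by a feedforward network with, e.g., ReLU activations implementing hyper-rectangular indicators and a one-hot readout layer selecting the output, so the constructed $\widehat M$ is in fact a $\cG$-NCM in the sense of Def.~\ref{def:gncm}.

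The hard part is the second step: because the clique-indexed uniforms are independent and ``local'' to their cliques, realizing an arbitrary global joint over response functions -- particularly at endogenous vertices lying in several maximal bidirected cliques -- requires the careful split-and-reconcile construction above, together with a junction-tree or chordal-completion argument on the bidirected graph to guarantee consistency across overlapping cliques. Once this encoding goes through, the rest of the argument (canonical reduction and neural realization) is mechanical.
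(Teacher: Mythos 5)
Your overall plan---pass to a finite canonical model, observe that $\cL_3$ is determined by the joint law of the response functions, and then realize that joint inside a $\cG$-NCM---is the same high-level route as the paper, which also reduces first to a discrete SCM (via Fact~\ref{fact:scm-to-discrete}, rather than Balke--Pearl) and then does a neural re-implementation. The genuine gap is in your second step, the split-and-reconcile/junction-tree gluing. First, the bidirected part of $\cG$ need not be chordal (e.g.\ a four-cycle of bidirected edges), so no junction tree over its maximal cliques exists, and chordal completion does not help: Def.~\ref{def:gncm} ties $\widehat{\*U}$ to the maximal cliques of $\cG$ itself, so fill-in cliques have no $\widehat{U}_{\*C}$ to carry their ``content''. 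Second, even when a junction tree exists, clique marginals of the response functions plus separator conditionals do not determine $P^{\cM_c}(R_{\*V})$: on the bidirected path with edges $V_1 \Confounded V_2$ and $V_2 \Confounded V_3$ only, take $R_1, R_3$ independent fair bits and $R_2 = R_1 \oplus R_3$; every clique and separator marginal is a pair/singleton of independent uniform bits, yet in the target joint $R_1$ and $R_3$ are perfectly correlated given $R_2$, so Markov gluing along the tree yields the wrong joint, and a ``content'' draw of $\{R_V : V \in \*C\}$ from the clique marginal gives $\hat{f}_{V_2}$ nothing from which to reconstruct the XOR constraint (the two cliques' draws of $R_2$ are mutually inconsistent and uncorrelated with the other clique's $R_1$ or $R_3$). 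Third, and most fundamentally, realizability by independent clique-indexed sources is \emph{not} implied by the independence constraints the joint satisfies---there are distributions obeying all of them that admit no such representation (four-cycle/Bell-type examples)---so it must be inherited from the fact that the joint arises from an SCM inducing $\cG$; your argument never invokes that structure, which is exactly what would be needed to make the gluing succeed.

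The repair is what the paper's proof does: in the discrete model $\cM'$ the exogenous variables are mutually independent and each $U$ is read only by a set of endogenous variables forming a bidirected clique of $\cG$; assign each $U$ to one maximal clique containing its readers, let each $\widehat{U}_{\*C} \sim \unif(0,1)$ generate, via the finite inverse probability integral transform (Fact~\ref{fact:unif-to-pmf}), the sub-vector of original exogenous variables assigned to $\*C$, and let $\hat{f}_V$ re-implement the original $f'_V$ (Fact~\ref{fact:discrete-to-binary}) on the reconstructed exogenous inputs collected from the cliques containing $V$. Because the original mechanisms and (independent) noise are transported wholesale, every counterfactual event evaluates identically in $\hM$ and $\cM'$, hence in $\cM^*$, and no gluing or chordality argument is needed. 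In short, the per-clique content must be the original exogenous variables (a seed for the original mechanisms), not a clique-marginal draw of the response functions; with that change your outline becomes the paper's proof.
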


This result ascertains that the NCM class is as expressive, and therefore, contains the same generative capabilities as the original generating model. More interestingly, even if the original SCM $\M^*$ does not belong to the NCM class, but from the higher space, there exists a NCM $\widehat{M}(\bm{\theta})$ that will be capable of expressing the collection of distributions from all layers of the PCH induced by it. 

A visual representation of these two results is shown in  Fig.~\ref{fig:causal-criteria}.
The space of all SCMs
is called $\Omega^*$, and the subspace that contains all SCMs $\cG^{((\cL_i)}$-consistent w.r.t. the true SCM $\cM^*$ (black dot) is called $\Omega^*(\cG^{(\cL_i)})$. Note that the $\cG^{(\cL_i)}$ space shrinks with higher layers, indicating a more constrained space with fewer SCMs. Thm.~\ref{thm:gl3-consistency} states that all $\cG$-NCMs ($\Omega(\cG)$) are within $\Omega^*(\cG^{(\cL_3)})$, and Thm.~\ref{thm:l3-g-expressiveness} states that all SCMs in $\Omega^*(\cG^{(\cL_3)})$ can be represented by a corresponding $\cG$-NCM on all three layers.

It may seem intuitive that the $\cG$-NCM has these two properties by construction, but these properties are nontrivial and, in fact, not enjoyed by many model classes. Examples can be found in Appendix \ref{app:examples}. Together, these two theorems ensure that the NCM has both the constraints and the expressiveness necessary for counterfactual inference, elaborated further in the next section.

\section{Neural Counterfactual Identification}
\label{sec:ncm-ctf-id}

The problem of identification is concerned with determining whether a certain quantity is computable from a combination of assumptions, usually encoded in the form of a causal diagram, and a collection of distributions  \cite[p.~77]{pearl:2k}. 
This challenge stems from the fact that even though the space of SCMs (or NCMs) is constrained upon assuming a certain causal diagram, the quantity of interest may still be underdetermined. 
In words, there are many SCMs compatible with the same diagram $\cG$ but generate different answers for the target distribution. 
In this section, we investigate the problem of identification and decide whether counterfactual quantities (from $\Ll_3$) can be inferred from a combination of a subset of $\Ll_2$ and $\Ll_1$ datasets together with $\cG$, as formally defined next.

\begin{definition}[Neural Counterfactual Identification]
    \label{def:ncm-l3-id}
    Consider an SCM $\cM^*$ and the corresponding causal diagram $\cG$. Let $\bbZ = \{P(\*V_{\*z_k})\}_{k=1}^{\ell}$ be a collection of available interventional (or observational if $\*Z_k = \emptyset$) distributions from $\cM^*$.
    The counterfactual query $P(\*Y_* = \*y_* \mid \*X_* = \*x_*)$ is said to be neural identifiable (identifiable, for short) from the set of $\cG$-constrained NCMs $\Omega(\cG)$ and $\bbZ$ if and only if $P^{\widehat{M}_1}(\*y_* \mid \*x_*) = P^{\widehat{M}_2}(\*y_* \mid \*x_*)$ for every pair of models $\widehat{M}_1, \widehat{M}_2 \in \Omega(\cG)$ s.t. they match $\cM^*$ on all distributions in $\bbZ$ (i.e. $\bbZ(\cM^*) = \bbZ(\cM_1) = \bbZ(\cM_2) > 0$).
    \hfill $\blacksquare$
\end{definition}

From a symbolic standpoint, a counterfactual quantity $P(\*y_* \mid \*x_*)$ is identifiable from $\cG$ and $\bbZ$ if all SCMs that induce the distributions of $\bbZ$ and abide by the constraints of $\cG$ also agree on $P(\*y_* \mid \*x_*)$. This is illustrated in Fig.~\ref{fig:ctfid}. In the definition above, the search is constrained to the NCM subspace (shown in light gray) within the space of SCMs (dark gray). It may be concerning that the true SCM $\M^*$ might not be an NCM, as we alluded to earlier. The next result ascertains that identification within the constrained space of NCMs is actually equivalent to identification in the original SCM-space.

\begin{wrapfigure}{r}{0.42\textwidth}
    \vspace{-0.2in}
    \begin{tikzpicture}[-, scale=0.7, every node/.append style={transform shape}]
        \filldraw [fill=gray!50, line width=0.01mm] (-2.05,-0.05) ellipse (2.0 and 1.0);
        \filldraw [fill=gray!20, line width=0.01mm] (-1.6,0.1) ellipse (1.2 and 0.6);
      	\node [align=center, font=\fontsize{9}{0}\selectfont] at (-3.5,0.85) {$\Omega^*$};
      	\node [align=center, font=\fontsize{9}{0}\selectfont] at (-1.4,1.3) {$\Omega$};
      	\node [inner sep=0] (omg) at (-1.4,1.05) {};
      	\node [inner sep=0] (omgspace) at (-1.4,0.5) {};
      	\path [-] (omg) edge (omgspace);
      	
      	\node at (-2.85,-0.4) {$\cM^*$};
      	\draw [fill=black] (-2.75,-0.7) circle (0.1);
      	\node [inner sep=0] (ms) at (-2.75,-0.7) {\ };

      	\draw [fill=black] (-2.15,0.3) circle (0.1);
      	\node [inner sep=0] (mp) at (-2.15,0.3) {};
      	\node at (-2.45,0.1) {$\widehat{M}_1$};
      	\draw [fill=black] (-0.7,0) circle (0.1);
      	\node [inner sep=0] (mp2) at (-0.7,0) {};
      	\node at (-1,0.35) {$\widehat{M}_2$};
      	
      	\draw [densely dotted] (-1.8,0.35) circle (0.1);
      	\draw [densely dotted] (-1.85,-0.15) circle (0.1);
      	\draw [densely dotted] (-1.15,-0.15) circle (0.1);
      	\draw [densely dotted] (-1.4,0.1) circle (0.1);
      	
      	\draw [densely dotted] (-3.0,0.4) circle (0.1);
        \draw [densely dotted] (-3.5,0.25) circle (0.1);
        \draw [densely dotted] (-3.25,0.0) circle (0.1);
        \draw [densely dotted] (-3.8, 0.0) circle (0.1);
        \draw [densely dotted] (-3.4,-0.4) circle (0.1);
        \draw [densely dotted] (-2.2,-0.8) circle (0.1);
        \draw [densely dotted] (-1.7,-0.7) circle (0.1);

      	\draw (-2.05,-2.32) ellipse (1.9 and 0.8);
      	\node [align=center, font=\fontsize{8}{0}\selectfont] (l1) at (-2.95,-3.6) {($\Ll_1, \Ll_2$)};
      	\node [align=center, font=\fontsize{8}{0}\selectfont] (l1) at (-1.6,-3.6) {Data\\ Distributions};
      	\node at (-2.05,-2.4) {\small $\bbZ(\cM^*) \! = \! \bbZ(\hM_1)\! = \! \bbZ(\hM_2)$};
      	\draw [fill=black] (-2.05,-2.1) circle (0.05);
      	\node [inner sep=0.2em] (pl1) at (-2.05,-2.1) {\ };
    
      	\draw (2.7,-2.32) ellipse (1.77 and 0.8);
      	\node [align=center, font=\fontsize{8}{0}\selectfont] (l2) at (1.9,-3.6) {($\Ll_3$)};
      	\node [align=center, font=\fontsize{8}{0}\selectfont] (l2t) at (3.1,-3.6) {Counterfactual\\ Query};
      	\draw [densely dotted] (2.7,-2.32) ellipse (1.4 and 0.7);
      	\node [align=center] at (2.7,-2.4) {{\small $P^{\hM_1}\!(\*y_* | \*x_*)=$} \\ {\small $P^{\hM_2}(\*y_* | \*x_*)$}};
    
      	\draw [fill=black] (2.7,-1.9) circle (0.05);
      	\node [inner sep=0.2em] (pl21) at (2.7,-1.9) {\ };

      	\path [-Latex] (ms) edge (pl1);
      	\path [-Latex] (mp) edge (pl1);
      	\path [-Latex] (mp) edge (pl21);
      	\path [-Latex] (l1) edge node[above]  {?} (l2);
      	\path [-Latex] (mp2) edge (pl1);
      	\path [-Latex] (mp2) edge (pl21);
      	
      	\draw (2.9,-0.05) ellipse (1 and 0.6);
      	\node [align=center, font=\fontsize{9}{0}\selectfont] at (2.9,0.75) {Structural Assumptions};
      	\draw [densely dotted] (2.8,-0.05) ellipse (0.7 and 0.45);
      	\node [align=center] at (3.0,0) {$\cG$};
      	\draw [fill=black] (2.7,0) circle (0.05);
      	\node [inner sep=0.2em] (g1) at (2.7,0) {\ };
      	\path [-Latex] (mp) edge [bend left=25] (g1);
      	\path [-Latex] (ms) edge (g1);
      	\path [-Latex] (mp2) edge (g1);
    \end{tikzpicture}
    
    \caption[Neural Counterfactual ID]{
    $P(\*y_*)$ is identifiable from $\bbZ$ and $\Omega(\cG)$ if for any SCM $\cM^* \in \Omega^*$ and NCMs $\widehat{M}_1, \widehat{M}_2 \in \Omega$ (top left), $\widehat{M}_1, \widehat{M}_2, \cM^*$ match in $\bbZ$ (bottom left) and $\cG$ (top right), then the NCMs $\widehat{M}_1$, $\widehat{M}_2$ also match in $P(\*y_*)$ (bottom right).
    }
    \vspace{-0.3in}
    \label{fig:ctfid}
\end{wrapfigure}
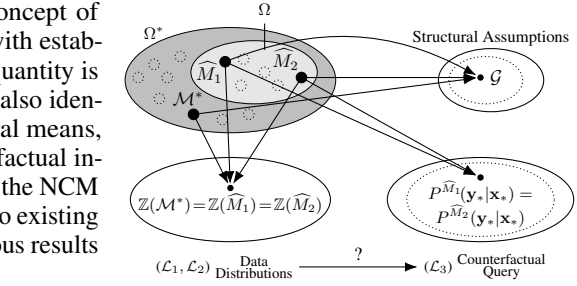

\begin{restatable}[Counterfactual Graphical-Neural Equivalence (Dual ID)]{theorem}{idequivalence}
    \label{thm:ncm-ctfid-equivalence}
    Let $\Omega^*, \Omega$ be the spaces including all SCMs and NCMs, respectively. Consider the true SCM $\cM^*$ and the corresponding causal diagram $\cG$. Let $Q = P(\*y_* \mid \*x_*)$ be the target query and $\bbZ$ the set of observational and interventional distributions available. Then, $Q$ is neural identifiable from $\Omega(\cG)$ and $\bbZ$ if and only if it is identifiable from $\cG$ and $\bbZ$.
    \hfill $\blacksquare$
\end{restatable}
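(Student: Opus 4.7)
The equivalence splits into two directions that mirror each other through Thms.~\ref{thm:gl3-consistency} and \ref{thm:l3-g-expressiveness}. My plan is to view the result as a ``sandwich'' argument: the NCM class sits inside the class of SCMs satisfying $\cG$'s counterfactual constraints (by $\cG^{(\cL_3)}$-consistency) yet is rich enough to realize any SCM on all three layers (by $\cL_3$-expressiveness), so the identifiability statuses of the two classes must coincide.

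\emph{Easy direction (classical ID $\Rightarrow$ neural ID).} I would start here. Thm.~\ref{thm:gl3-consistency} guarantees that every $\hM \in \Omega(\cG)$ is an SCM satisfying all $\cL_3$-equality constraints of $\cG$, and hence all constraints at layers 1 and 2 as well. Consequently any pair of $\cG$-NCMs $\hM_1, \hM_2$ that match $\cM^*$ on $\bbZ$ also qualifies as a pair of SCMs consistent with $\cG$ matching $\cM^*$ on $\bbZ$. If $Q$ is classically identifiable from $\cG$ and $\bbZ$, any such SCM pair must agree on $Q$, and the NCM pair in particular inherits that agreement.

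\emph{Hard direction (neural ID $\Rightarrow$ classical ID).} Here I would argue by contrapositive. Assume $Q$ is not classically identifiable; fix two SCMs $\cM_1, \cM_2 \in \Omega^*(\cG)$, both matching $\cM^*$ on $\bbZ$, with $P^{\cM_1}(\*y_* \mid \*x_*) \neq P^{\cM_2}(\*y_* \mid \*x_*)$. I would then invoke Thm.~\ref{thm:l3-g-expressiveness} separately on $\cM_1$ and $\cM_2$ to produce $\hM_1, \hM_2 \in \Omega(\cG)$ with $\cL_3(\hM_i) = \cL_3(\cM_i)$ for $i=1,2$. Since the distributions in $\bbZ \subseteq \cL_2$ and the target $Q \in \cL_3$ are both fixed functionals of the layer-3 joint via Def.~\ref{def:l3-semantics} (interventional marginalization and conditioning of counterfactual events, respectively), this $\cL_3$-equality would transfer both the matching on $\bbZ$ and the disagreement on $Q$ from the $\cM_i$ to the $\hM_i$. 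The pair $\hM_1, \hM_2$ would therefore be two $\cG$-NCMs matching $\cM^*$ on $\bbZ$ yet disagreeing on $Q$, witnessing the failure of neural identifiability.

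\emph{Main obstacle.} The genuinely hard conceptual work has already been absorbed into Thm.~\ref{thm:l3-g-expressiveness}: constructing, for an arbitrary SCM, a $\cG$-NCM that reproduces it at the counterfactual level is the nontrivial step, and the present theorem falls out as a corollary of that plus Thm.~\ref{thm:gl3-consistency}. The one local subtlety I anticipate is the positivity clause $\bbZ(\cM^*) > 0$ in Def.~\ref{def:ncm-l3-id}: the constructed NCMs must not zero out any distribution in $\bbZ$. This comes for free, because $\cL_3$-consistency fixes every interventional distribution pointwise, so strict positivity at each $P(\*V_{\*z_k})$ is automatically preserved in passing from $\cM_i$ to $\hM_i$.
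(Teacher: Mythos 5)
Your proposal is correct and follows essentially the same route as the paper's proof: the easy direction uses Thm.~\ref{thm:gl3-consistency} to view $\cG$-NCMs as a subclass of $\cG$-consistent SCMs, and the converse argues by contrapositive, invoking Thm.~\ref{thm:l3-g-expressiveness} to transfer a non-ID witness pair $(\cM_1,\cM_2)$ to $\cL_3$-equivalent NCMs $(\hM_1,\hM_2)$ that match $\bbZ(\cM^*)$ but disagree on $Q$. Your closing remark on the positivity clause is a small addition the paper leaves implicit, and it is handled correctly since $\cL_3$-consistency preserves every distribution in $\bbZ$ pointwise.
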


Interestingly, this result connects the new concept of neural counterfactual identification (Def.~\ref{def:ncm-l3-id}) with established non-neural results. If a counterfactual quantity is determined to be neural identifiable, then it is also identifiable from $\cG$ and $\bbZ$ through other non-neural means, and vice versa.\footnote{We say identification from $\cG$ and $\bbZ$ instead of $\Omega^*(\cG)$ and $\bbZ$ because existing symbolic approaches (e.g. do-calculus) directly solve the identification problem on top of the graph instead of the space of SCMs.}  Practically speaking, counterfactual inference can be performed while constrained in the NCM space, and the obtained results will be faithful to existing symbolic approaches. This broadens the previous results connecting NCMs to classical identification. 

\begin{restatable}[Neural Counterfactual Mutilation (Operational ID)]{corollary}{opid}
    \label{cor:op-id}
    Consider the true SCM $\cM^* \in \Omega^*$, causal diagram $\cG$, a set of available distributions $\bbZ$, and a target query $Q$ equal to $P^{\cM^*}(\*y_* \mid \*x_*)$. Let $\hM \in \Omega(\cG)$ be a $\cG$-constrained NCM such that $\bbZ(\hM) = \bbZ(\cM^*)$. If $Q$ is identifiable from $\cG$ and $\bbZ$, then $Q$ is computable via Eq.~\ref{eq:def:l3-semantics} from $\hM$.
    \hfill $\blacksquare$
\end{restatable}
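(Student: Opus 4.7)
The plan is to chain the neural--graphical equivalence (Thm.~\ref{thm:ncm-ctfid-equivalence}) with the expressiveness of $\cG$-NCMs (Thm.~\ref{thm:l3-g-expressiveness}) so that the corollary reduces to a direct evaluation of Def.~\ref{def:l3-semantics}. The conceptual picture is: identifiability of $Q$ pins down a single value across all $\cG$-NCMs that reproduce $\bbZ$; expressiveness guarantees that at least one such NCM actually achieves $P^{\cM^*}(\*y_* \mid \*x_*)$ on $Q$; hence any $\hM$ meeting the hypotheses must land on the same value, which by definition is exactly what Eq.~\ref{eq:def:l3-semantics} computes from $\hM$.

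Concretely, I would proceed in three short steps. First, apply Thm.~\ref{thm:ncm-ctfid-equivalence} to translate the assumption that $Q$ is identifiable from $\cG$ and $\bbZ$ into neural identifiability from $\Omega(\cG)$ and $\bbZ$; by Def.~\ref{def:ncm-l3-id} this means that any two $\cG$-NCMs agreeing with $\cM^*$ on $\bbZ$ must also agree on $Q$. Second, invoke Thm.~\ref{thm:l3-g-expressiveness} to obtain a $\cG$-NCM $\hM'$ that is $\cL_3$-consistent with $\cM^*$; this model in particular satisfies $\bbZ(\hM') = \bbZ(\cM^*)$ (because $\bbZ \subseteq \cL_1 \cup \cL_2$) and $P^{\hM'}(\*y_* \mid \*x_*) = P^{\cM^*}(\*y_* \mid \*x_*) = Q$. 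Third, the hypothesis $\bbZ(\hM) = \bbZ(\cM^*) = \bbZ(\hM')$ combined with Step~1 forces $P^{\hM}(\*y_* \mid \*x_*) = P^{\hM'}(\*y_* \mid \*x_*) = Q$. Since $\hM$ is itself a (neural) SCM, the quantity $P^{\hM}(\*y_* \mid \*x_*)$ is literally the integral in Eq.~\ref{eq:def:l3-semantics} with the mutilated mechanisms $\widehat{\cF}_{\*x_i}$ and the uniform exogenous distribution $\widehat{P}(\widehat{\*U})$, which establishes computability.

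\textbf{Main obstacle.} No new technical machinery is required; this is a corollary in the genuine sense, and both of the nontrivial theorems ($\cG^{(\cL_3)}$-consistency and $\cL_3$-$\cG$ expressiveness) together with the counterfactual dual-identification theorem already do the heavy lifting. The one point that demands care is checking that the witness $\hM'$ supplied by expressiveness actually matches $\hM$ on the identifying information $\bbZ$, which is immediate because $\cL_3$-consistency strictly refines $\cL_2$- and $\cL_1$-consistency, so $\hM'$ reproduces every distribution in $\bbZ(\cM^*)$ automatically. If one tried to bypass Thm.~\ref{thm:l3-g-expressiveness} (for example, by trying to relate $\hM$ directly to $\cM^*$ on layer~3 without a neural intermediate), one would have to re-prove a model-theoretic existence statement, so the expressiveness theorem is what makes the corollary go through cleanly.
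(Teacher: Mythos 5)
Your proof is correct and follows essentially the same route as the paper's: invoke Thm.~\ref{thm:l3-g-expressiveness} to obtain an $\cL_3$-consistent witness $\hM'$ (which therefore matches $\cM^*$ on $\bbZ$ and $Q$), use Thm.~\ref{thm:ncm-ctfid-equivalence} to force any $\hM$ matching $\bbZ$ to agree with $\hM'$ on $Q$, and conclude computability directly from Def.~\ref{def:l3-semantics}. The only difference is the order in which the two theorems are invoked, which is immaterial.
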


Corol.~\ref{cor:op-id} states that once identification is established, the counterfactual query can be inferred through the NCM $\hM$, as if it were the true SCM $\cM^*$, by directly applying layer 3's definition to 
$\hM$ (Def.~\ref{def:l3-semantics}). Remarkably, this result holds even if $\cM^*$ does not match $\hM$ in either the mechanisms $\cF$ or the exogenous dist. $P(\*U)$, and it only requires some specific properties: $\cG^{(\cL_3)}$-consistency, matching $\bbZ$, and identifiability. Without these properties, inferences performed on $\hM$ would bear no meaningful information about the ground truth. To understand this subtlety, refer to examples in App.~\ref{app:examples}. 

\begin{wrapfigure}{r}{0.53\textwidth}
\vspace{-0.2in}
\IncMargin{1em}
\begin{algorithm}[H]
    \scriptsize
    \setstretch{0.9}
    \renewcommand{\AlCapSty}[1]{\normalfont\scriptsize{\textbf{#1}}\unskip}
    
    \DontPrintSemicolon
    \SetKwData{ncmdata}{$\widehat{M}$}
    \SetKwData{graphdata}{$\cG$}
    \SetKwData{variabledata}{$\*V$}
    \SetKwData{pvdata}{$P(\*v)$}
    \SetKwData{thetamin}{$\bm{\theta}_{\min}^*$}
    \SetKwData{thetamax}{$\bm{\theta}_{\max}^*$}
    \SetKwFunction{ncmfunc}{NCM}
    \SetKwInOut{Input}{Input}
    \SetKwInOut{Output}{Output}
    
    \Input{ query $Q = P(\*y_* | \*x_*)$, $\cL_2$ datasets $\bbZ(\cM^*)$, and causal diagram \graphdata}
    \Output{ $P^{\cM^*}(\*y_* | \*x_*)$ if identifiable, \texttt{FAIL} otherwise.}
    \BlankLine
    \textls[-20]{
    $\ncmdata \gets \ncmfunc{\variabledata, \graphdata}$ \tcp*{from Def.\ \ref{def:gncm}}
    $\thetamin \! \gets \! \arg \min_{\bm{\theta}} P^{\ncmdata(\bm{\theta})}(\*y_* | \*x_*)$ s.t. $\bbZ(\ncmdata(\bm{\theta})) \! = \! \bbZ(\cM^*)$\;
    $\thetamax \! \gets \! \arg \max_{\bm{\theta}} P^{\ncmdata(\bm{\theta})}(\*y_* | \*x_*)$ s.t. $\bbZ(\ncmdata(\bm{\theta})) \! = \! \bbZ(\cM^*)$\;
    \eIf{$P^{\ncmdata(\thetamin)}(\*y_* | \*x_*) \neq P^{\ncmdata(\thetamax)}(\*y_* | \*x_*)$} {
        \Return \texttt{FAIL}
    }{
        \Return $P^{\ncmdata(\thetamin)}(\*y_* | \*x_*)$ \tcp*{choose min or max arbitrarily} 
    }
    }
    \caption{\scriptsize \textbf{NeuralID} -- Identifying/estimating counterfactual queries with NCMs.}
    \label{alg:ncm-solve-ctfid}
\end{algorithm}
\DecMargin{1em}
\vspace{-0.3in}
\end{wrapfigure}
Building on these results, we demonstrate through the procedure \textbf{NeuralID} (Alg.~\ref{alg:ncm-solve-ctfid}) how to decide the identifiability of counterfactual quantities. The specific optimization procedure searches explicitly in the space of NCMs for two models that respectively minimize and maximize the target query while maintaining consistency with the provided data distributions in $\bbZ$. If the two models match in the target query $Q$, then the effect is identifiable, and the value is returned; otherwise, the effect is non-identifiable. 

The implementation of how to enforce these consistency constraints in practice is somewhat challenging. We note two nontrivial details that are abstracted away in the description of Alg.~\ref{alg:ncm-solve-ctfid}. First, although training to fit a single, observational dataset is straightforward, it is not as clear how to simultaneously maintain consistency with the multiple datasets in $\bbZ$. Second, unlike with simpler interventional queries, it is not clear how to search the parameter space in a way that maximizes or minimizes a counterfactual query, which may be more involved due to nesting (e.g. $P(Y_{Z_{X=0}})$) or evaluating the same variable in multiple worlds (e.g. $P(Y_{X=0}, Y_{X=1})$). The details of how to solve these issues are discussed in Sec.~\ref{sec:ncm-estimation}.

Interestingly, this approach is qualitatively different than the case of classical, symbolic methods that avoid operating in the space of SCMs directly. Still, in principle, this alternative approach does not imply any loss in functionality, as evident from the next result. 

\begin{restatable}[Soundness and Completeness]{corollary}{completeness}
    \label{thm:ncm-ctfid-correctness}
    Let $\Omega^*$ be the set of all SCMs,  $\cM^* \in \Omega^*$ be the true SCM inducing causal diagram $\cG$, $Q = P(\*y_* \mid \*x_*)$ be a query of interest, and $\widehat{Q}$ be the result from running Alg.~\ref{alg:ncm-solve-ctfid} 
     with inputs $\bbZ(\cM^*) > 0$, $\cG$, and $Q$. Then $Q$ is identifiable from $\cG$ and $P^*(\*v)$ if and only if $\widehat{Q}$ is not \texttt{FAIL}. Moreover, if $\widehat{Q}$ is not \texttt{FAIL}, then $\widehat{Q} = P^{\cM^*}(\*y_* \mid \*x_*)$.
    \hfill $\blacksquare$
\end{restatable}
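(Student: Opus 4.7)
The plan is to reduce the corollary to the two main theorems already established, namely Thm.~\ref{thm:ncm-ctfid-equivalence} (neural $\Leftrightarrow$ symbolic identifiability) and Thm.~\ref{thm:l3-g-expressiveness} (the existence of a $\cG$-NCM $\Ll_3$-consistent with any target SCM), and then use a squeezing argument on the min/max output of \textbf{NeuralID}. First I would unpack the behavior of the algorithm: it searches $\Omega(\cG)$ for a $\bm{\theta}_{\min}^*$ and a $\bm{\theta}_{\max}^*$ that, subject to $\bbZ(\widehat{M}(\bm{\theta})) = \bbZ(\cM^*)$, respectively minimize and maximize $P^{\widehat{M}(\bm{\theta})}(\*y_* \mid \*x_*)$, declaring \texttt{FAIL} exactly when these two extrema disagree.

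For the soundness direction I would argue as follows. Suppose $\widehat{Q}$ is not \texttt{FAIL}, so $P^{\widehat{M}(\bm{\theta}_{\min}^*)}(\*y_* \mid \*x_*) = P^{\widehat{M}(\bm{\theta}_{\max}^*)}(\*y_* \mid \*x_*) = \widehat{Q}$. Any $\cG$-NCM $\widehat{M}(\bm{\theta})$ satisfying $\bbZ(\widehat{M}(\bm{\theta})) = \bbZ(\cM^*)$ must produce a query value sandwiched between these two extrema, hence equal to $\widehat{Q}$; this establishes neural identifiability in the sense of Def.~\ref{def:ncm-l3-id}, which by Thm.~\ref{thm:ncm-ctfid-equivalence} implies classical identifiability from $\cG$ and $\bbZ$. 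To get the stronger claim that $\widehat{Q} = P^{\cM^*}(\*y_* \mid \*x_*)$, I invoke Thm.~\ref{thm:l3-g-expressiveness}: there exists a $\cG$-NCM $\widehat{M}^\dagger$ that is $\Ll_3$-consistent with $\cM^*$; in particular it matches $\cM^*$ on both $\bbZ$ and on the target query. Since $\widehat{M}^\dagger$ is a feasible point of the optimization, it falls inside the same squeeze, so $\widehat{Q} = P^{\widehat{M}^\dagger}(\*y_* \mid \*x_*) = P^{\cM^*}(\*y_* \mid \*x_*)$. This is essentially the content of Corol.~\ref{cor:op-id}.

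For the completeness direction, suppose $Q$ is identifiable from $\cG$ and $\bbZ$. By Thm.~\ref{thm:ncm-ctfid-equivalence}, it is neural identifiable, so every pair of $\cG$-NCMs consistent with $\bbZ(\cM^*)$ agrees on $P(\*y_* \mid \*x_*)$. Applying this to $\widehat{M}(\bm{\theta}_{\min}^*)$ and $\widehat{M}(\bm{\theta}_{\max}^*)$ (both of which lie in the feasible set, which is nonempty because the $\Ll_3$-consistent NCM from Thm.~\ref{thm:l3-g-expressiveness} lies in it) forces the min and max to coincide, so the algorithm does not return \texttt{FAIL}. Contrapositively, if $\widehat{Q} = \texttt{FAIL}$, the two extrema witness two $\cG$-NCMs matching $\bbZ$ yet disagreeing on $Q$, so neural identification fails, and again by Thm.~\ref{thm:ncm-ctfid-equivalence} so does classical identification.

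The main obstacle I expect is not a conceptual one but a pedantic one: ensuring the feasible set of the optimization is nonempty and that the extrema are attained, so that the squeezing argument is valid. The nonemptiness is handled by Thm.~\ref{thm:l3-g-expressiveness}, which supplies at least one feasible $\cG$-NCM; attainment of the extrema is an assumption built into the description of Alg.~\ref{alg:ncm-solve-ctfid} (the algorithm presumes access to the true argmin/argmax, with practical approximation deferred to Sec.~\ref{sec:ncm-estimation}). A secondary subtlety is checking that the positivity hypothesis $\bbZ(\cM^*) > 0$ is needed only to align with Def.~\ref{def:ncm-l3-id}; once invoked, it plays no further role in the argument.
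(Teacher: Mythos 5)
Your proposal is correct and follows essentially the same route as the paper's own proof: invoke Thm.~\ref{thm:ncm-ctfid-equivalence} to reduce classical to neural identifiability, use the squeeze/optimality of $\bm{\theta}^*_{\min}$ and $\bm{\theta}^*_{\max}$ to show the min--max test characterizes neural identifiability, and conclude $\widehat{Q} = P^{\cM^*}(\*y_* \mid \*x_*)$ via Corol.~\ref{cor:op-id} (which you re-derive from Thm.~\ref{thm:l3-g-expressiveness}, exactly as its proof does). Your added remarks on feasibility/attainment and the role of positivity are sensible clarifications of idealizations already built into Alg.~\ref{alg:ncm-solve-ctfid} and Def.~\ref{def:ncm-l3-id}, not deviations from the paper's argument.
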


In words, the procedure \textbf{NeuralID} is both necessary and sufficient in this very general setting, implying that for any instances involving any arbitrary diagram, datasets, or queries, the identification status of the query is always classified correctly by this algorithm.

\section{Neural Counterfactual Estimation}
\label{sec:ncm-estimation}

\begin{wrapfigure}{r}{0.39\textwidth}
\vspace{-0.2in}
\IncMargin{1em}
\begin{algorithm}[H]
    \scriptsize
    \setstretch{0.9}
    \renewcommand{\AlCapSty}[1]{\normalfont\scriptsize{\textbf{#1}}\unskip}
    
    \DontPrintSemicolon
    \SetKwFunction{samp}{sample}
    \SetKwFunction{add}{add}
    
    \SetKwInOut{Input}{Input}
    \SetKwInOut{Output}{Output}
    
    \SetKwProg{Fn}{Function}{:}{}
    
    \Input{NCM $\widehat{M}(\bm{\theta}) = \langle \widehat{\*U}, \*V, \widehat{\cF}, P(\widehat{\*U}) \rangle$, counterfactual $\*Y_*$, conditional $\*X_* = \*x_*$, number of samples $m$}
    \Output{ $m$ samples from $P^{\hM}(\*Y_* | \*x_*)$}
    \BlankLine
    \textls[-20]{
    \Fn{\hM.\samp{$\*Y_*$, $\*x_*$, $m$}}{
        $S \gets \emptyset$\;
        \While{$|S| < m$}{
            $\widehat{\*u} \gets P(\widehat{\*U})$.\samp{}\;
            \If{$\*X_*^{\hM(\bm{\theta})}(\widehat{\*u}) = \*x_*$} {
                $S$.\add{$\*Y_*^{\hM(\bm{\theta})}(\widehat{\*u})$}\;
            }
        }
        \KwRet $S$;
    }
    }
    \caption{\scriptsize NCM Counterfactual Sampling}
    \label{alg:ncm-ctf-sample}
\end{algorithm}
\DecMargin{1em}
\vspace{-0.2in}
\end{wrapfigure}

We developed a procedure for identifying counterfactual quantities that is both sound and complete  under ideal conditions -- e.g., unlimited data, perfect optimization, which is encouraging. In this section, we build on these results and establish a more practical approach to solving these tasks under imperfect optimization and finite samples. 

Consider a $\cG$-NCM $\hM$ constructed as specified by Def.~\ref{def:gncm}.  Any counterfactual statement $\*Y_* = (\*Y_{1[\*x_1]}, \*Y_{2[\*x_2]}, \dots)$ can be evaluated from an NCM $\hM$ for a specific setting of $\widehat{\*U} = \widehat{\*u}$ by computing the corresponding values of $\*Y_i$ in the mutilated submodel $\hM_{\*x}$ for each $i$. That is,
\begin{equation}
    \label{eq:ncm-ctf-eval}
    \*Y_*^{\hM}(\*u) = (\*Y_{1[\*x_1]}^{\hM}(\*u), \*Y_{2[\*x_2]}^{\hM}(\*u), \dots)
\end{equation}
Then, sampling can be done following the natural approach delineated by Alg.~\ref{alg:ncm-ctf-sample}. In words, the distribution $P(\*Y_* \mid \*x_*)$ can be sampled from NCM $\hM$ by (1) sampling instances of $P(\widehat{\*U})$, (2) computing their corresponding value for $\*X_*$ (via Eq.~\ref{eq:ncm-ctf-eval}) while rejecting cases that do not match $\*x_*$, and (3) returning the corresponding value for $\*Y_*$ (via Eq.~\ref{eq:ncm-ctf-eval}) for the remaining instances.

Following this procedure, a counterfactual $P(\*Y_* = \*y_* \mid \*X_* = \*x_*)$ can be estimated from the NCM through a Monte-Carlo approach, instantiating Eq.~\ref{eq:ncm-mc-sampling} as follows: 
\begin{align}
     P^{\widehat M} \left( \*y_* \mid \*x_* \right) \approx \;
    &\frac{\sum_{j=1}^m \mathbbm{1}\left[\*Y^{\hM}_*(\widehat{\*u}_j) = \*y_*, \*X^{\hM}_*(\widehat{\*u}_j) = \*x_* \right]}{\sum_{j=1}^m \mathbbm{1}\left[\*X^{\hM}_*(\widehat{\*u}_j) = \*x_* \right]}, \label{eq:ncm-mc-sampling}
\end{align}
where $\{\widehat{\*u}_j\}_{j=1}^m$ are a set of $m$ samples from $P(\widehat{\*U})$.

Alg.~\ref{alg:ncm-learn-pv} demonstrates how to solve the challenging optimization task in lines 2 and 3 of Alg.~\ref{alg:ncm-solve-ctfid}. The first step is to learn parameters such that the distributions induced by the NCM $\hM$ match the true distributions in $\bbZ$. While Alg.~\ref{alg:ncm-solve-ctfid} describes the inputs in the form of $\Ll_2$-distributions, $\bbZ(\cM^*) = \{P^{\cM^*}(\*V_{\*z_k})\}_{k=1}^{\ell}$, in most settings, one has the empirical versions of such distributions in the form of finite datasets, $\{\hP^{\cM^*}(\*V_{\*z_k}) = \{\*v_{\*z_k, i}\}_{i=1}^{n_k}\}_{k=1}^{\ell}$.

One way to train $\hM$ to match $\cM^*$ in the distribution $P(\*V_{\*z_k})$ is to compare the distribution of data points in $\hP^{\cM^*}(\*V_{\*z_k})$ with the distribution of samples from $\hM$, $\hP^{\hM}(\*V_{\*z_k})$. The two empirical distributions can be compared using a divergence function $\bbD_P$, which returns a smaller value when the two distributions are ``similar''. The goal is then to minimize $\bbD_{P}(\hP^{\hM}(\*V_{\*z_k}), \hP^{\cM^*}(\*V_{\*z_k}))$ for each $k \in \{1, \dots, \ell\}$. In this work, a generative adversarial approach \citep{NIPS2014_5ca3e9b1} is taken to train the NCM, and $\bbD_P$ is computed using a discriminator network.

In addition to fitting the datasets, the second challenge of Alg.~\ref{alg:ncm-solve-ctfid} is to simultaneously maximize or minimize the query of interest $Q = P(\*y_* \mid \*x_*)$. This can be done by first computing samples of $P(\*Y_* \mid \*x_*)$ from $\hM$ via Alg.~\ref{alg:ncm-ctf-sample}, denoted $\hQ$, and then minimizing (or maximizing) the ``distance'' between $Q$ and $\hQ$. Essentially, samples from $\*Y_*$ are penalized based on how similar they are to the correct values $\*y_*$. For example, if the query to maximize is $P(Y = 1)$, and a value of 0.6 is sampled for $Y$ from $\hM$, then the goal could be to minimize squared error, $(1 - 0.6)^2$. In general, a distance metric $\bbD_Q$ is used to compute the distance between $\hQ$ and $Q$, and we use log loss for $\bbD_Q$ as our experiments involve binary variables. 
\begin{wrapfigure}{r}{0.52\textwidth}
\IncMargin{1em}
\vspace{-0.1in}
\begin{algorithm}[H]
    \scriptsize
    \renewcommand{\AlCapSty}[1]{\normalfont\scriptsize{\textbf{#1}}\unskip}

    \DontPrintSemicolon
    \SetKw{notsymbol}{not}
    \SetKwData{ncmdata}{$\widehat{M}$}
    \SetKwData{paramdata}{$\bm{\theta}$}
    \SetKwData{pdata}{$\hat{p}$}
    \SetKwData{qdata}{$\hat{q}$}
    \SetKwData{lossdata}{$\cL$}
    \SetKwFunction{ncmfunc}{NCM}
    \SetKwFunction{estimate}{Estimate}
    \SetKwFunction{consistent}{Consistent}
    \SetKwFunction{sample}{sample}
    \SetKwInOut{Input}{Input}
    \SetKwInOut{Output}{Output}
    
    \Input{ Data $\{\hP^{\cM^*}(\*V_{\*z_k}) = \{\*v_{\*z_k, i}\}_{i=1}^{n_k}\}_{k=1}^{\ell}$, query $Q = P(\*y_* | \*x_*)$, causal diagram $\cG$, number of Monte Carlo samples $m$, regularization constant $\lambda$, learning rate $\eta$, training epochs $T$}
    \BlankLine
    $\hM \gets$ \ncmfunc{$\*V, \cG$} \tcp*{from Def.~\ref{def:gncm}}
    Initialize parameters $\bm{\theta}_{\min}$ and $\bm{\theta}_{\max}$\;
    \For{$t \gets 1$ \KwTo $T$} {
        $L_{\min} \gets 0$, $L_{\max} \gets 0$\;
        \For{$k \gets 1$ \KwTo $\ell$}{
            \tcp{Sample via Alg.~\ref{alg:ncm-ctf-sample}}
            $\hP_{\min}(\*V_{\*z_k}) \gets \hM(\bm{\theta}_{\min}).$\sample{$\*V_{\*z_k}, n_k$}\;
            $\hP_{\max}(\*V_{\*z_k}) \gets \hM(\bm{\theta}_{\max}).$\sample{$\*V_{\*z_k}, n_k$}\;
            
            $L_{\min} \gets L_{\min} + \bbD_P \left( \hP_{\min}(\*V_{\*z_k}), \hP^{\cM^*} (\*V_{\*z_k}) \right)$\;
            $L_{\max} \gets L_{\max} + \bbD_P \left( \hP_{\max}(\*V_{\*z_k}), \hP^{\cM^*} (\*V_{\*z_k}) \right)$\;
        }
        $\hQ_{\min} \gets \hM(\bm{\theta}_{\min}).$\sample{$\*Y_*, m$}\;
        $\hQ_{\max} \gets \hM(\bm{\theta}_{\max}).$\sample{$\*Y_*, m$}\;
        
        \tcp{$L$ from Eq.~\ref{eq:ncm-div-loss}}
        
        $L_{\min} \gets L_{\min} - \lambda \bbD_Q \left( \hQ_{\min}, Q \right)$\;
        $L_{\max} \gets L_{\max} + \lambda \bbD_Q \left( \hQ_{\max}, Q \right)$\;
        
        $\bm{\theta}_{\min} \gets \bm{\theta}_{\min} - \eta \nabla L_{\min}$\;
        $\bm{\theta}_{\max} \gets \bm{\theta}_{\max} - \eta \nabla L_{\max}$\;
    }
    \caption{\scriptsize Training Model}
    \label{alg:ncm-learn-pv}
\end{algorithm}
\DecMargin{1em}
\vspace{-.9in}
\end{wrapfigure}
For this reason, an NCM trained with this approach will be referred as a \emph{GAN-NCM}. \footnote{Other choices of $\bbD_{P}$ include KL divergence, f-divergence, or  Maximum Mean Discrepancy (MMD) \citep{JMLR:v13:gretton12a}.} More details about architecture and hyperparameters used throughout this work can be found in Appendix \ref{app:experiments}.

Putting $\bbD_P$ and $\bbD_Q$ together, we can write that the objective $L \left(\widehat{M}, \{\hP^{\cM^*}(\*V_{\*z_k})\}_{k=1}^{\ell} \right)$ is 
\begin{equation}
    \label{eq:ncm-div-loss}
    \left(\sum_{k=1}^{\ell} \bbD_P \left(\widehat{P}^{\widehat{M}}_{\*z_k}, \widehat{P}^{\cM^*}_{\*z_k} \right)\right) \pm \lambda \bbD_Q \left( \hQ, Q \right),
\end{equation}
where $\lambda$ is initially set to a high value, and decreases during training.  Optimization may be done using gradient descent. After training, the two values of $Q$ induced by $\hM(\bm{\theta}_{\min})$ and $\hM(\bm{\theta}_{\max})$ are compared with a hypothesis testing procedure to decide identifiability. 
 Eq.~\ref{eq:ncm-mc-sampling} is used as $Q$'s estimate, whenever identifiable.

\section{Experimental Evaluation} \label{sec:experiments}

\begin{figure*}
    \begin{center}
    \includegraphics[width=\textwidth,keepaspectratio]{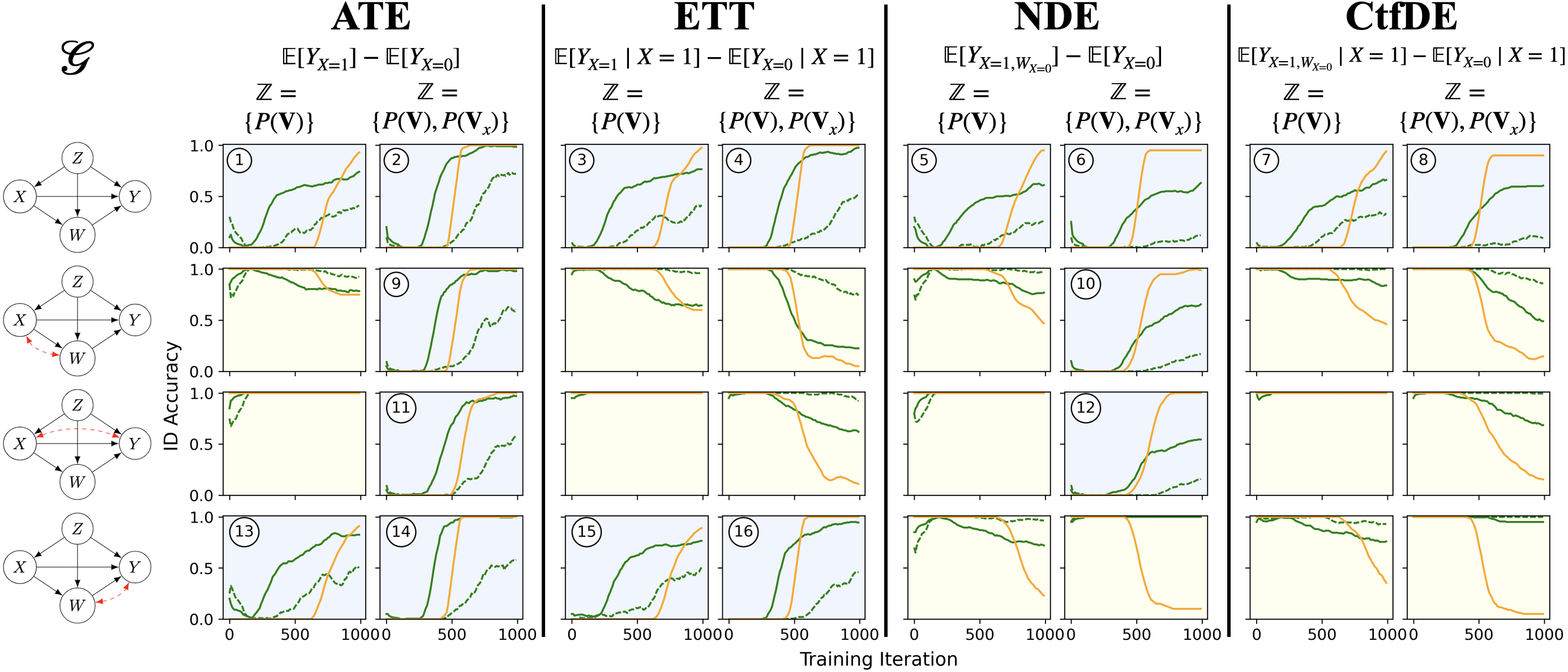}
    \caption{Experimental results on deciding identifiability on counterfactual queries with NCMs. GAN-NCM (green) is compared with MLE-NCM (orange) for settings with $d=1$. GAN-NCM performance is also shown for $d=16$ (dashed green). Blue (resp. yellow) backgrounds on plots correspond to a ground truth of ID (resp. non-ID). ID cases are numbered for reference in later plots.}
    \label{fig:gan-id-expl-results}
    \end{center}
    \vspace{-0.3in}
\end{figure*}

We first evaluate the NCM's ability to identify counterfactual distributions through Alg.~\ref{alg:ncm-learn-pv}. 
Each setting consists of a target query ($Q$), a causal diagram ($\cG$), and a set of input distributions ($\bbZ$). In total, we test 32 variations.
Specifically, we evaluate the identifiability of four queries $Q$: 
(1) Average Treatment Effect (ATE), (2) Effect of Treatment on the Treated (ETT) \citep[Eq.~8.18]{pearl:2k}, (3) Natural Direct Effect (NDE) \citep[Eq.~6]{pearl:01}, and (4) Counterfactual Direct Effect (CtfDE) \citep[Eq.~3]{zhang2018fairness}; each expression is shown on the top of Fig.~\ref{fig:gan-id-expl-results}. 
The four graphs used are shown on the figure's left side, and represent general structures found throughout the mediation and fairness literature \citep{pearl:01,zhang2018fairness}. 
The variable $X$ encodes the treatment/decision, $Y$ the outcome, $Z$ observed features, and $W$ mediating variables. 
Lastly, we consider a setting in which only the observational data is available ($\bbZ = \{P(\mathbf{V})\}$) and another in which additional experimental data on $X$ is available ($\bbZ = \{P(\mathbf{V}), P(\mathbf{V}_x)\}$). 
In the experiments shown, all variables are 1-dimensional binary variables except $Z$, whose dimensionality $d$ is adjusted in experiments.
The background color of each setting indicates that the query $Q$ is identifiable (blue) or is not identifiable (yellow) from the inputted $\cG$ and $\bbZ$. Given the sheer volume of variations, we summarize the experiments below and provide further discussion and details in Appendix  \ref{app:experiments}.

\begin{wrapfigure}{r}{0.3\textwidth}
\vspace{-0.2in}
\includegraphics[width=\linewidth]{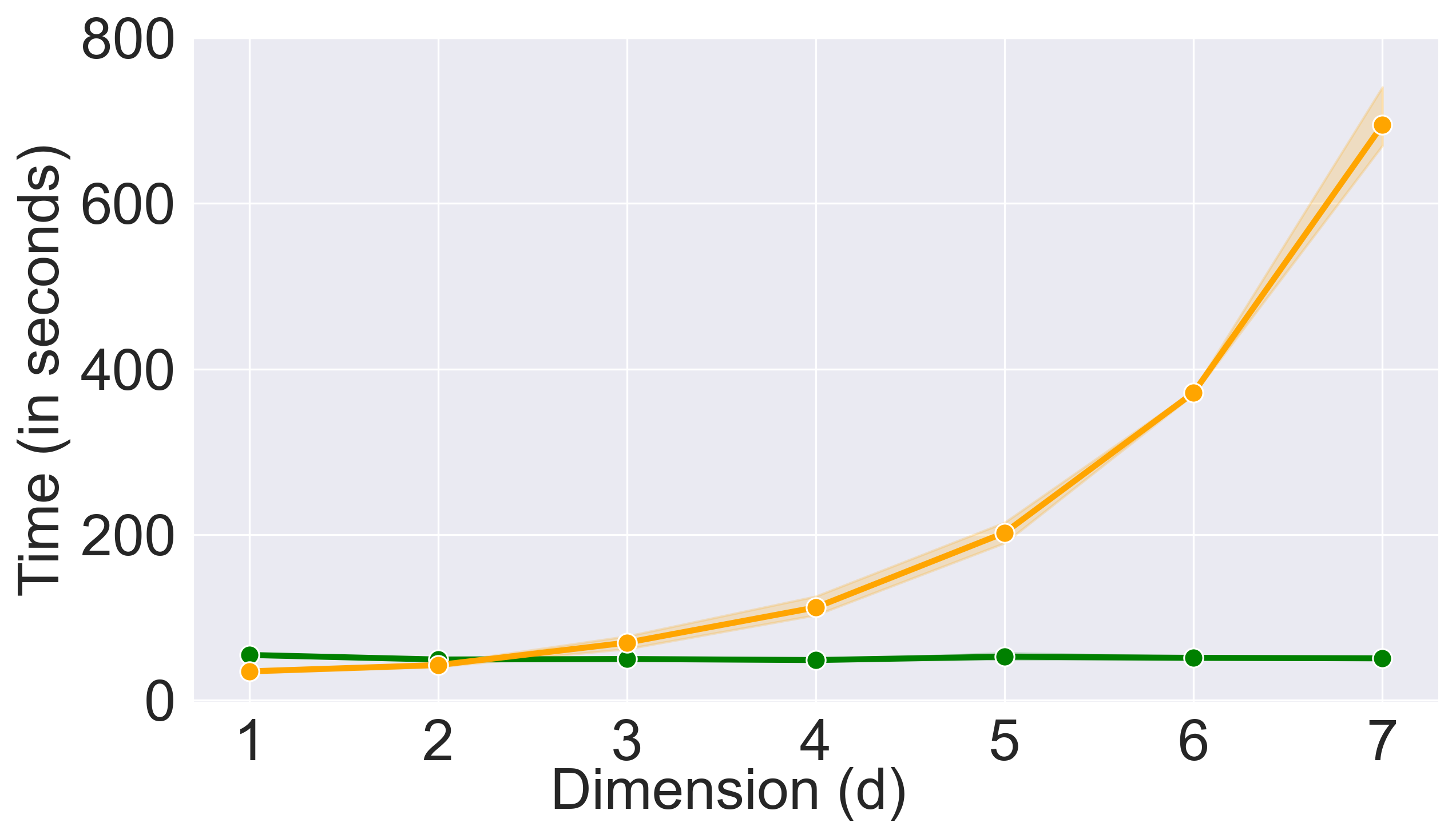}
\caption{Results comparing run times of 100 epochs of training between GAN-NCM (green) and MLE-NCM (orange) in the first graph of Fig.~\ref{fig:gan-id-expl-results} as the dimensionality $d$ of $Z$ scales higher.}
\label{fig:runtime-results}
\vspace{-0.2in}
\end{wrapfigure}

We implement two approaches to ground the discussion around NCMs, one based on GANs (\textit{GAN-NCM}), and another based on maximum likelihood (\textit{MLE-NCM}). The former was discussed in the previous section and the latter is quite natural in statistical settings. 
The experiments (Fig.~\ref{fig:gan-id-expl-results}) show that GAN-NCM has on average higher accuracy. The MLE-NCM performs slightly better in ID cases (blue), but the performance drops significantly for non-ID cases (yellow), suggesting it may be biased in returning ID for all cases. The GAN-NCM is also shown to achieve decent performance in 16-d, where the MLE-NCM fails to work. 
We plot the run time of these two approaches in Fig.~\ref{fig:runtime-results}, which shows that the MLE-NCM
scales poorly compared to the GAN-NCM; this pattern is observed in all settings. Intuitively,  this is not surprising since the MLE-NCM explicitly computes a likelihood for every value in every variable domain, the size of which grows exponentially w.r.t. the dimensionality ($d$), while the GAN-NCM avoids this by implicitly fitting distributions through $P(\widehat{\*U})$ and $\widehat{\cF}$ and directly outputting samples.

\begin{wrapfigure}{r}{0.45\textwidth}
\vspace{-0.2in}
\includegraphics[width=\linewidth]{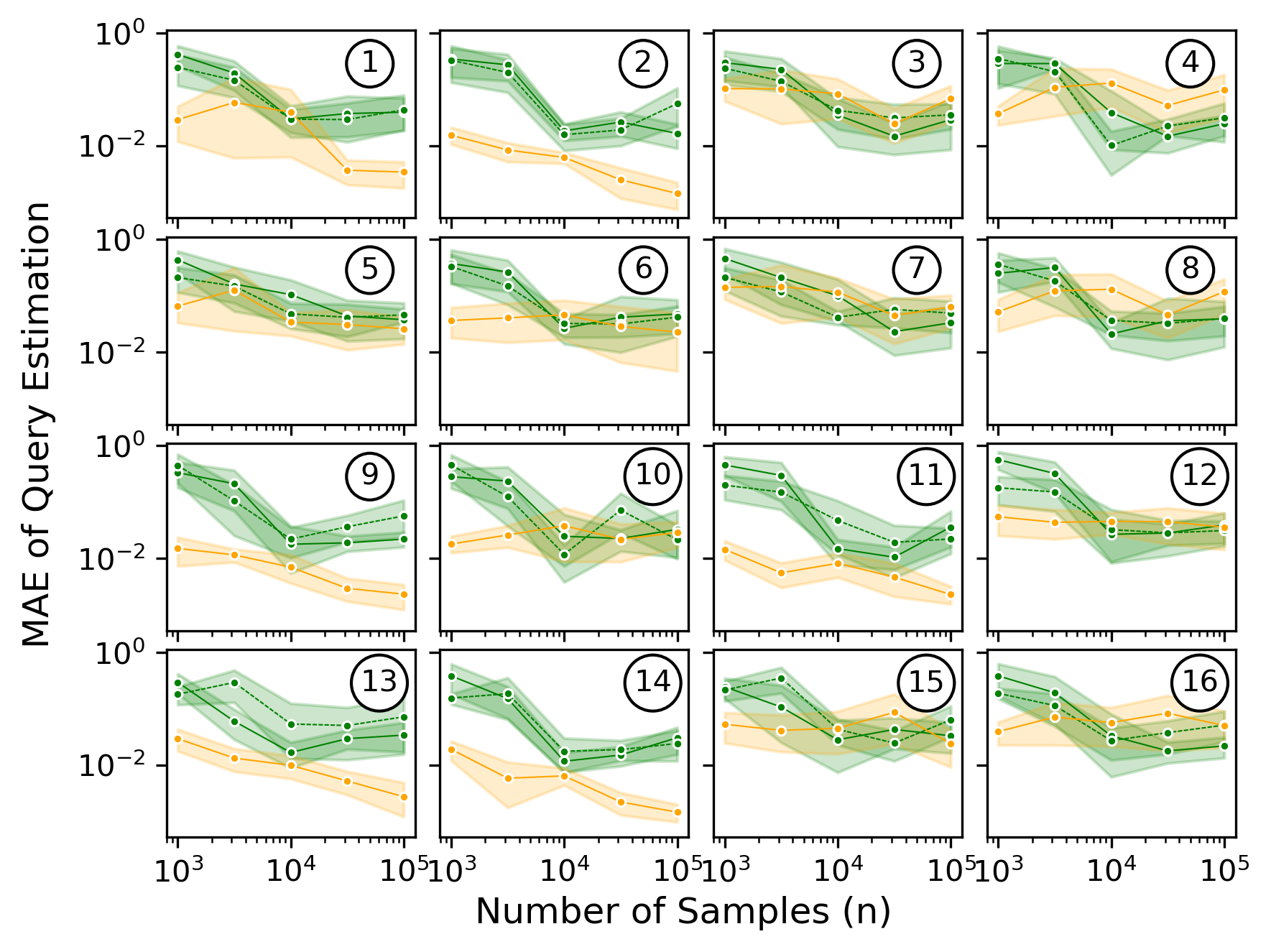}
\vspace{-0.25in}
\caption{
Results on estimating identifiable cases from Fig.~\ref{fig:gan-id-expl-results} (corresponding numbers shown on the right). Mean Absolute Error (MAE) is plotted (with 95\% confidence) for each setting for varying sample sizes. Results are shown for GAN-NCM (solid green) and MLE-NCM (orange) with $d=1$ and also GAN-NCM with $d=16$ (dashed green).}
\label{fig:gan-est-expl-results}
\vspace{-0.3in}
\end{wrapfigure}

For the identifiable cases  (blue background), the target $Q$ is estimated through Eq.~\ref{eq:ncm-mc-sampling} after training. Results are shown in Fig.~\ref{fig:gan-est-expl-results}. The MLE-NCM serves as a benchmark for 1-dimensional cases since, intuitively, the data distributions can be learned more accurately when modeled explicitly. Still, even when $d=1$, the GAN-NCM achieves competitive results in most settings and consistently achieves an error under 0.05 with more samples. The GAN-NCM is able to maintain this consistency even at $d=16$, demonstrating its robustness scaling to higher dimensions. The code will be made available after publication. 

After all, the GAN-NCM is shown to be effective at identifying and estimating counterfactual distributions even in high dimensions. As expected, the MLE-NCM may achieve lower error in some 1-d settings, but the GAN-NCM may be preferred for scalability. Moreover, an incorrect ID conclusion in a non-ID case may be dangerous for downstream decision-making as the resulting estimation will likely be incorrect or misleading. The GAN-NCM is evidently more robust in such non-ID cases while still performing competitively in ID cases. Further experiments and discussions are provided in App.~\ref{app:experiments}.

\section{Conclusions}

We developed in this work a neural approach to the problems of counterfactual identification and estimation using neural causal models (NCMs).
Specifically, we first showed that with the  graphical inductive bias, NCMs are capable of encoding  counterfactual ($\cL_3$) constraints  and are still expressive so as to represent any generating SCM (Thms.~\ref{thm:gl3-consistency}, \ref{thm:l3-g-expressiveness}). 
We then showed that NCMs have the ability of  solving any counterfactual identification instance  (Thm.~\ref{thm:ncm-ctfid-equivalence}, Corol~\ref{cor:op-id}). 
Given these theoretical properties, we introduced a sound and complete algorithm (Alg.~\ref{alg:ncm-solve-ctfid}, Corol.~\ref{thm:ncm-ctfid-correctness}) for identifying and estimating counterfactuals in  general non-Markovian settings given arbitrary datasets from $\cL_1$ and $\cL_2$. We developed an approach 
 based on GANs
to implement this algorithm in practice 
(Alg.~\ref{alg:ncm-learn-pv}) and empirically demonstrated its ability to scale inferences. From a neural perspective, counterfactual reasoning under a causal inductive bias allows for deep models to be trained with an improved understanding of interpretability and generalizability. From a causal perspective, neural nets can now provide tools to solve counterfactual inference problems previously only understood in theory.

\section{Acknowledgements}
This research was supported in part by the NSF, ONR, AFOSR, DoE, Amazon, JP Morgan,
and The Alfred P. Sloan Foundation.

\bibliography{references}
\bibliographystyle{iclr2023_conference}

\clearpage
\appendix
\section{Proofs} \label{app:proofs}

\subsection{Proof of Theorem 1}
\label{app:proof-G-cons}

First, we bring forth the longer and more formal definition of causal diagrams.

\begin{definition}[Causal Diagram {\citep[Def.~13]{bareinboim:etal20}}]
    \label{def:scm-cg}
    Consider an SCM $\cM = \langle \*U, \*V, \cF, P(\*U)\rangle$. We construct a graph $\cG$ using $\cM$ as follows:
    \begin{enumerate}[label=(\arabic*)]
        \item add a vertex for every variable in $\*V$,
        \item add a directed edge ($V_j \rightarrow V_i$) for every $V_i, V_j \in \*V$ if $V_j$ appears as an argument of $f_{V_i} \in \cF$,
        \item add a bidirected edge ($V_j \xdasharrow[<->]{} V_i$) for every $V_i, V_j \in \*V$ if the corresponding $\Ui{V_i}, \Ui{V_j} \subseteq \*U$ are not independent or if $f_{V_i}$ and $f_{V_j}$ share some $U \in \*U$ as an argument.
    \end{enumerate}
    We refer to $\cG$ as the causal diagram induced by $\cM$ (or ``causal diagram of $\cM$'' for short).
    \hfill $\blacksquare$
\end{definition}

Def.~\ref{def:gli-consistency} states that an SCM is $\cG^{(\cL_i)}$-consistent if its Layer $i$ distributions satisfy all Layer $i$ equality constraints implied by $\cG$. To clarify, we formalize this notion in the following definitions.

\begin{definition}[Equality Constraints]
    Let $\bbP$ be a collection of distributions. We denote $C(\bbP) = \{P_1 = P_2 : P_1, P_2 \in \bbP^*\}$ (where $\bbP^*$ is the set of all marginal and conditional distributions of $\bbP$) as the set of all equality constraints that hold between two distributions obtained from $\bbP$.
    \hfill $\blacksquare$
\end{definition}

\begin{definition}[$\cL_i$ Equality Constraints]
    For a graph $\cG$, let $\Omega^*(\cG)$ be the set of all SCMs that induce $\cG$ as a causal diagram. Let $\cC^{(\cL_i)} = \bigcap_{\cM \in \Omega^*(\cG)} C(\cL_i(\cM))$ be the set of all equality constraints that hold for the $\cL_i$ distributions of every SCM in $\Omega^*(\cG)$. Then, for any collection of distributions $\bbP$, we say that $\bbP$ satisfies all Layer $i$ equality constraints implied by $\cG$ if and only if $\cC^{(\cL_i)} \subseteq C(\bbP)$.
    \hfill $\blacksquare$
\end{definition}

A more detailed discussion on the intuition of these constraints can be found in Appendix \ref{app:cli-discussion}. Naturally, it is intuitive that if an SCM induces $\cG$ as a causal diagram, it must satisfy all of its constraints, as shown in the following Lemma.

\begin{lemma}
    \label{lem:scm-cons-connection}
    Let $\cM$ be an SCM and $\cG$ be its induced causal diagram. Then $\cL_3(\cM)$ satisfies all Layer 3 equality constraints implied by $\cG$.
    \hfill $\blacksquare$
\end{lemma}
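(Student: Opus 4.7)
The statement looks essentially tautological once one unpacks the definitions, so the plan is to do precisely that unpacking carefully. The definition of $\cC^{(\cL_3)}$ quantifies over $\Omega^*(\cG)$, the class of all SCMs that induce $\cG$, and picks out exactly those $\cL_3$ equality constraints that hold uniformly across that class. Thus to show that a particular $\cM$ satisfies every such constraint, it suffices to verify that $\cM \in \Omega^*(\cG)$.

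The first step, then, is to observe that $\cG$ is (by hypothesis) the causal diagram induced by $\cM$ via Def.~\ref{def:scm-cg}, and therefore $\cM \in \Omega^*(\cG)$ by definition of $\Omega^*(\cG)$. The second step is a one-line set-theoretic inclusion: since intersection is contained in each of its components,
\begin{equation*}
  \cC^{(\cL_3)} \;=\; \bigcap_{\cM' \in \Omega^*(\cG)} C\!\left(\cL_3(\cM')\right) \;\subseteq\; C\!\left(\cL_3(\cM)\right).
\end{equation*}
By the definition of ``satisfying all Layer $3$ equality constraints implied by $\cG$,'' this inclusion is exactly the desired conclusion. I would close the proof by stating this in one sentence.

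The ``main obstacle'' is really only to confirm that the definitions line up the way they appear to, i.e., that the quantifier ``all SCMs that induce $\cG$'' in the definition of $\cC^{(\cL_3)}$ is the same quantifier used to place $\cM$ inside $\Omega^*(\cG)$. As long as the notions of ``induced causal diagram'' in Defs.~\ref{def:scm-cg} and in the definition of $\Omega^*(\cG)$ coincide, there is no nontrivial computation to do; the content of the lemma is entirely a bookkeeping argument about how the set of implied constraints was defined. All genuine mathematical substance about Layer $3$ (recursion, mutilation, consistency of counterfactual evaluation, etc.) is packed into Def.~\ref{def:l3-semantics} and is not needed here, which is why the proof collapses to a definitional chase.
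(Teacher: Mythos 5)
Your proof is correct and is essentially identical to the paper's own argument: both place $\cM$ in $\Omega^*(\cG)$ because $\cG$ is its induced diagram, and then use the fact that the intersection $\cC^{(\cL_3)} = \bigcap_{\cM' \in \Omega^*(\cG)} C(\cL_3(\cM'))$ is contained in the single term $C(\cL_3(\cM))$. Nothing is missing; the lemma is indeed the definitional bookkeeping you describe.
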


\begin{proof}
    If $\Omega^*(\cG)$ is the set of all SCMs that induce $\cG$ as a causal diagram, then $\cM$ must be an element of $\Omega^*(\cG)$. Since $\cC^{(\cL_3)} = \bigcap_{\cM' \in \Omega^*(\cG)} C(\cL_3(\cM'))$ is the set of all equality constraints that hold for the $\cL_3$ distributions of every SCM in $\Omega^*(\cG)$, there will not be any constraints in $\cC^{(\cL_3)}$ that is not in $C(\cL_3(\cM))$. Hence, $\cC^{(\cL_3)} \subseteq C(\cL_3(\cM))$, so $C(\cL_3(\cM))$ satisfies all Layer $i$ equality constraints implied by $\cG$.
\end{proof}

Moreover, the graph induced by any $\cG$-constrained NCM is simply $\cG$ itself.

\begin{fact}[{\citep[Lem.~6]{xia:etal21}}]
    \label{fact:graph-equality}
    Let $\widehat{M}(\bm{\theta}) = \langle \widehat{\*U}, \*V, \widehat{\cF}, P(\widehat{\*U}) \rangle$ be a $\cG$-constrained NCM. Let $\widehat{\cG}$ be the causal diagram induced by $\hM$. Then $\widehat{\cG} = \cG$.
\end{fact}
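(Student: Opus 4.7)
The plan is to verify that $\widehat{\cG}$ and $\cG$ agree on each of the three constructions laid out in Def.~\ref{def:scm-cg}: vertex set, directed edges, and bidirected edges. The vertex sets coincide trivially, since both graphs are built over $\*V$ by construction. For directed edges, Def.~\ref{def:gncm} specifies that the only endogenous arguments of $\hat{f}_{V_i}$ are the variables in $\Pai{V_i} = \Parents_{\cG}(V_i)$. Hence $V_j$ appears as an argument of $\hat{f}_{V_i}$ in $\hM$ if and only if $V_j \in \Parents_{\cG}(V_i)$, which is exactly the condition under which $V_j \to V_i$ belongs to $\cG$. This immediately yields equality of the directed-edge sets.

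The bidirected case is the main point. Def.~\ref{def:scm-cg} places a bidirected edge between $V_i$ and $V_j$ in $\widehat{\cG}$ whenever either $\Ui{V_i}$ and $\Ui{V_j}$ are dependent, or $\hat{f}_{V_i}$ and $\hat{f}_{V_j}$ share a common exogenous argument. Because Def.~\ref{def:gncm} sets each $\widehat{U}_{\*C} \sim \unif(0,1)$ mutually independently, the dependence clause can only be triggered through a shared component. Thus the bidirected-edge condition reduces to the existence of some $\widehat{U}_{\*C}$ that appears as an argument of both $\hat{f}_{V_i}$ and $\hat{f}_{V_j}$, which by Def.~\ref{def:gncm} occurs precisely when some $\*C \in \bbC(\cG)$ contains both $V_i$ and $V_j$. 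It remains to invoke the standard observation about maximal clique covers: two vertices lie in a common maximal clique of the bidirected subgraph of $\cG$ if and only if they are joined by a bidirected edge in $\cG$ (one direction uses that any clique contains only pairwise-adjacent vertices; the other extends any two-vertex bidirected clique to a maximal one).

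The only non-routine step is this clique-cover correspondence, but it is a one-line graph-theoretic observation, so no serious obstacle is expected. The proof is essentially a bookkeeping check against Def.~\ref{def:gncm}, consistent with the result being cited from prior work rather than proved in detail within this paper.
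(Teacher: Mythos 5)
Your proof is correct and follows essentially the same route as the argument behind the cited result \citep[Lem.~6]{xia:etal21}, which this paper does not reprove but imports as a fact: one checks the three clauses of Def.~\ref{def:scm-cg} directly against the construction in Def.~\ref{def:gncm}, with the only substantive step being the observation that two variables share some $\widehat{U}_{\*C}$ (equivalently, lie in a common maximal bidirected clique of $\cG$) if and only if they are joined by a bidirected edge in $\cG$. Your handling of both the independence clause and the shared-argument clause for bidirected edges is exactly the intended bookkeeping, so nothing is missing.
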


These two results prove that $\cG$-constrained NCMs satisfy all Layer 3 constraints of $\cG$.

\glconsistency*

\begin{proof}
    This follows directly from Fact \ref{fact:graph-equality} and Lem.~\ref{lem:scm-cons-connection}.
\end{proof}

\subsection{Proof of Theorem 2}

For this proof, we expand on the technical results provided in \citet{zhang:bareinboim21b}. The paper works with a subclass of SCMs known as \emph{discrete SCMs} and proves strong results about its expressiveness. Similar to this paper, we will assume that in any SCM, the variables in $\mathbf{U}$ are independent, and unobserved confounding is modeled by sharing the same variable from $\mathbf{U}$ in the functions of multiple variables in $\mathbf{V}$.

\begin{definition}[Discrete SCM {\citep[Def.~2]{zhang:bareinboim21b}}]
    An SCM $\cM = \langle \*U, \*V, \cF, P(\*U)$ is said to be a discrete SCM if $\cD_U$ is discrete for all $U \in \*U$ and $\cD_V$ is both discrete and finite for all $V \in \*V$.
    \hfill $\blacksquare$
\end{definition}

\begin{fact}[{\citep[Thm.~1]{zhang:bareinboim21b}}]
    \label{fact:scm-to-discrete}
    Let $\Omega^*$ be the set of all SCMs and $\Omega'$ the set of discrete SCMs. For any SCM $\cM^* \in \Omega^*$ inducing causal diagram $\cG$, there exists a discrete SCM $\cM' \in \Omega'$ with finite $|\cD_{U}|$ for all $U \in \*U$ such that $\cM'$ is $\cG^{(\cL_3)}$-consistent and is $\cL_3$-consistent with $\cM^*$.
    \hfill $\blacksquare$
\end{fact}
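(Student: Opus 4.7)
The plan is to build $\cM'$ from the canonical response-function representation of $\cM^*$. Since each $V_i \in \*V$ has finite domain $\cD_{V_i}$ and its endogenous parents $\Pai{V_i}$ take values in a finite set, the collection $\cR_i$ of all functions $\cD_{\Pai{V_i}} \to \cD_{V_i}$ is finite. In $\cM^*$, each realization $\*u \in \cD_{\*U^*}$ determines a tuple $(r_1(\*u), \dots, r_n(\*u)) \in \cR := \prod_i \cR_i$ via $r_i(\*u)(\cdot) := f_{V_i}^*(\cdot, \Ui{V_i}(\*u))$, and the law $P(\*U^*)$ pushes forward to a distribution $Q$ on the finite set $\cR$.

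With this in hand, I would define the candidate discrete SCM $\cM' := \langle \{\widehat{U}\}, \*V, \widehat{\cF}, Q\rangle$ with a single exogenous variable $\widehat{U}$ of finite domain $\cR$, distribution $Q$, and mechanisms $\hat{f}_{V_i}(\*{pa}_i, \widehat{u}) := (\widehat{u})_i(\*{pa}_i)$ that project $\widehat{u}$ onto its $i$-th coordinate (a function in $\cR_i$) and apply it to the realized parents. Then $\cM'$ lies in $\Omega'$ with $|\cD_{\widehat{U}}| = |\cR|$ finite. The required $\cG^{(\cL_3)}$-consistency of $\cM'$ is then a consequence of $\cL_3$-consistency with $\cM^*$: by Lem.~\ref{lem:scm-cons-connection}, $\cL_3(\cM^*)$ already satisfies every Layer 3 equality constraint implied by $\cG$, and such constraints are preserved under equality of distributions.

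The crux of the argument is showing $\cL_3(\cM') = \cL_3(\cM^*)$. I would prove by structural induction along a topological order of $\cG$ that, for every intervention $\*x_k$ and every $V_i$, the potential-response $V_{i,\*x_k}(\*u)$ in $\cM^*$ equals the corresponding quantity in $\cM'$ evaluated at $\widehat{u} = (r_1(\*u), \dots, r_n(\*u))$: intervened variables are overwritten with the same constants in both SCMs, while non-intervened mechanisms reduce in both cases to applying $r_i(\*u)$ to the parent-values propagated by the inductive hypothesis. Hence the indicator in Def.~\ref{def:l3-semantics} depends on $\*u$ only through the tuple $(r_j(\*u))_j$, and integrating it against $P(\*U^*)$ in $\cM^*$ or against $Q$ in $\cM'$ yields identical counterfactual probabilities.

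The main obstacle is carrying out this induction in the full multi-world, non-Markovian setting. A nested joint event $\{\*Y_{1[\*x_1]} = \*y_1, \*Y_{2[\*x_2]} = \*y_2, \dots\}$ mixes evaluations in several mutilated submodels at the same $\*u$, so correlations among the $r_i$'s inherited from shared components of $\*U^*$ must be transferred faithfully into the joint law $Q$. Lumping all response functions into a single discrete variable $\widehat{U}$ with distribution $Q$ handles this cleanly since any joint dependency is absorbed into $Q$, at the price that $\cM'$'s causal diagram may have more bidirected edges than $\cG$. Because the statement only demands $\cG^{(\cL_3)}$-consistency of $\cM'$, not that $\cM'$ induce $\cG$, this over-connection is admissible; a refinement that also matches $\cG$ exactly would partition the response-function tuple along the C-components of $\cG$, but that is not required here.
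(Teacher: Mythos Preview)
The paper does not prove this statement itself; it records it as Fact~\ref{fact:scm-to-discrete}, citing \citet[Thm.~1]{zhang:bareinboim21b}, and invokes it as a black box in the proof of Thm.~\ref{thm:l3-g-expressiveness}. Your response-function (canonical-SCM) construction is precisely the approach taken in that external reference, and your sketch is correct: finiteness of each $\cD_{V_i}$ and $\cD_{\Pai{V_i}}$ makes each $\cR_i$ finite, the pushforward $Q$ of $P(\*U^*)$ onto $\cR$ is well-defined, and the structural induction showing that every potential response $V_{i[\*x_k]}(\*u)$ depends on $\*u$ only through the tuple $(r_j(\*u))_j$ yields $\cL_3(\cM') = \cL_3(\cM^*)$, from which $\cG^{(\cL_3)}$-consistency follows via Lem.~\ref{lem:scm-cons-connection} exactly as you argue.

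One remark worth making: although your single-$\widehat{U}$ version suffices for the Fact as stated, the proof of Thm.~\ref{thm:l3-g-expressiveness} in this paper subsequently partitions $\*U'$ along the maximal confounded cliques of $\cG$ when building the $\cG$-NCM. So the C-component refinement you flag in your last paragraph, which you correctly note is not required for Fact~\ref{fact:scm-to-discrete} itself, is the form of the discrete SCM that is actually exploited downstream here.
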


With this result, we need only show that the $\cG$-NCM can represent any discrete SCM inducing $\cG$ with finite domains in $\*U$. We will focus on feedforward neural networks, specifically multi-layer perceptrons (MLPs) with the binary step activation function.

\begin{definition}[Multi-layer Perceptron]
    \label{def:ff-nn-binary-step}
    A neural network node is a function defined as
    $$\hat{f}(\mathbf{x}; \mathbf{w}, b) = \sigma \left(\sum_i \mathbf{w}_i \mathbf{x}_i + b\right),$$
    where $\mathbf{x}$ is a vector of real-valued inputs, $\mathbf{w}$ and $b$ are the real-valued learned weights and bias respectively, and $\sigma$ is an activation function. For this work, we will often denote $\sigma$ as the binary step function for our activation function:
    \[\sigma(z) = 
    \begin{cases}
        1 & z \geq 0 \\
        0 & z < 0.
    \end{cases}
    \]
    This is simply one choice of activation function which always outputs a binary result.
    
    A neural network layer of width $k$ is comprised of $k$ neural network nodes with the same input vector, together outputting a $k$-dimensional output:
    $$\hat{f}(\mathbf{x}; \*W, \*b) = \left( \hat{f}_1(\mathbf{x}; \mathbf{w}_1, b_1), \dots, \hat{f}_k(\*x;\mathbf{w}_k, b_k)\right),$$
    where $\*W = \{\*w_1, \dots, \*w_k\}$ and $\*b = \{b_1, \dots, b_k\}$.
    An MLP is defined as a function comprised of several neural network layers $\hat{f}_1, \dots, \hat{f}_{\ell}$, with each layer taking the previous layer's output as its input:
    $$\hat{f}_{\mlp}(\*x) = \hat{f}_{\ell}(\dots \hat{f}_1(\*x; \*W_1, \*b_1) \dots; \*W_{\ell}, \*b_{\ell}).$$
    This means that a neural network is a function that is a composition of the functions of the individual layers, where the input is the input to the first layer, and the output is the output of the last layer.
    \hfill $\blacksquare$
\end{definition}

We note that while Def.~\ref{def:ff-nn-binary-step} is undefined for non-numerical inputs and outputs, any kind of categorical data can be considered if first converted into a numerical representation. We utilize the following results about the expressivity of MLPs to demonstrate the overall expressiveness of an NCM constructed with MLPs.

\begin{fact}[MLP Representation {\citep[Lem.~4]{xia:etal21}}]
    \label{fact:discrete-to-binary}
    For any function $f: \mathbf{X} \rightarrow Y$ mapping set of variables $\mathbf{X}$ to variable $Y$, all from finite numerical domains, there exists an equivalent MLP $\hat{f}$ using binary step activations.
    \hfill $\blacksquare$
\end{fact}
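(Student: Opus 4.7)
Since $\*X$ ranges over a finite numerical set, there are only $N = \prod_i |\cD_{X_i}|$ possible input configurations. The plan is to build a constant-depth MLP with binary step activations that, on each input $\*x$, activates exactly one ``configuration-detector'' neuron and then reads off $f(\*x)$ from that neuron via the output layer. The construction breaks into three stages, each of which is a short calculation in isolation; the only real subtlety lies in the output encoding.

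First, I would build a threshold layer. For each variable $X_i$ with ordered domain $v_{i,1} < v_{i,2} < \cdots < v_{i,k_i}$, pick thresholds $t_{i,j}$ strictly between $v_{i,j-1}$ and $v_{i,j}$ (with $t_{i,1}$ slightly below $v_{i,1}$ and $t_{i,k_i+1}$ beyond the maximum) and create a first-layer neuron computing $\sigma(X_i - t_{i,j})$. Then $\mathbb{1}[X_i = v_{i,j}]$ equals the difference $\sigma(X_i - t_{i,j}) - \sigma(X_i - t_{i,j+1})$, which is a linear combination of two first-layer outputs that can be formed inside any downstream neuron's preactivation.

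Second, I would add a configuration-detector layer with one neuron per input configuration $\*x^{(\alpha)} \in \cD_{\*X}$. The preactivation of the $\alpha$-th such neuron is
\[
\sum_{i=1}^n \bigl[\sigma(X_i - t_{i,\ell_\alpha(i)}) - \sigma(X_i - t_{i,\ell_\alpha(i)+1})\bigr] \; - \; n + \tfrac{1}{2},
\]
where $\ell_\alpha(i)$ is the index of $x_i^{(\alpha)}$ in the ordering of $\cD_{X_i}$. Each bracketed term lies in $\{0,1\}$, so the preactivation equals $+\tfrac{1}{2}$ if and only if every variable matches and is at most $-\tfrac{1}{2}$ otherwise. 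Applying $\sigma$ yields $h_\alpha = \mathbb{1}[\*X = \*x^{(\alpha)}]$, and exactly one $h_\alpha$ equals $1$ on any given input.

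Third, the output layer must produce $f(\*X)$ from the $h_\alpha$'s. This is where the binary step forces a choice: a single step-activated neuron can emit only $\{0,1\}$, so for $|\cD_Y| > 2$ I encode $\cD_Y$ in binary with $b = \lceil \log_2 |\cD_Y| \rceil$ bits and use $b$ parallel output neurons, the $k$-th computing $\sigma\bigl(\sum_\alpha \operatorname{bit}_k(f(\*x^{(\alpha)})) \cdot h_\alpha - \tfrac{1}{2}\bigr)$. Since exactly one $h_\alpha$ fires, this outputs precisely the $k$-th bit of $f(\*X)$, and decoding the $b$-vector recovers $f(\*X)$. The main obstacle is getting this last stage right: ``equivalent MLP'' must be interpreted up to a fixed invertible binary encoding of the output domain, because strict binary step activation at the output precludes a single-neuron real-valued emission. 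Once that convention is fixed, verifying that the chosen thresholds, weights, and biases yield the correct indicators and sums at each layer is a routine arithmetic check.
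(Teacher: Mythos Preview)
The paper does not prove this statement; it is imported verbatim as a Fact from \citet[Lem.~4]{xia:etal21} and invoked as a black box inside the proof of Theorem~\ref{thm:l3-g-expressiveness}. There is therefore no in-paper argument to compare against.

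Your construction is correct and follows the standard lookup-table route: threshold each coordinate, one-hot detect each full input configuration via an $n$-way conjunction, then read off the answer. Each stage checks out by direct arithmetic. Your handling of the output is the right call under a strict reading of ``binary step activations'': with $\sigma$ applied at every node, a single output neuron can emit only two values, so a $b$-bit vector output (equivalently, a fixed bijection $\cD_Y \hookrightarrow \{0,1\}^b$) is unavoidable. The paper's own conventions are consistent with either this reading or a slightly looser one---the remark after Def.~\ref{def:ff-nn-binary-step} allows categorical data to be passed through a numerical encoding, and the footnote in the proof of Theorem~\ref{thm:l3-g-expressiveness} explicitly permits identity-activation pass-through nodes, under which one could instead output $\sum_\alpha f(\*x^{(\alpha)}) h_\alpha$ directly. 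Either way your argument goes through. The concrete gate constructions the paper exhibits elsewhere (e.g., $\hat f_{\geq p}$, $\hat f_{AND}$, $\hat f_{OR}$ in Example~\ref{ex:l3-g}) are exactly the primitives your Stage~1 and Stage~2 assemble, so your proof is in the same spirit as what the cited source presumably does.
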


\begin{fact}[Neural Inverse Probability Integral Transform (Discrete) {\citep[Lem.~5]{xia:etal21}}]
    \label{fact:unif-to-pmf}
    For any probability mass function $P(\mathbf{X})$ over finite domains, there exists an MLP $\hat{f}$ which maps $\unif(0, 1)$ to $P(\mathbf{X})$.
    \hfill $\blacksquare$
\end{fact}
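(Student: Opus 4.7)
The plan is to implement the discrete inverse probability integral transform explicitly as a two-layer MLP with binary step activations. Since $\mathbf{X}$ ranges over a finite numerical domain, I would enumerate its support as $\mathbf{x}_1, \ldots, \mathbf{x}_n$ with probabilities $p_i = P(\mathbf{X} = \mathbf{x}_i)$, and define cumulative thresholds $c_0 = 0$ and $c_i = \sum_{j \leq i} p_j$, so that $c_n = 1$. The target map $g : [0,1] \to \cD_{\mathbf{X}}$ sends $u$ to $\mathbf{x}_i$ whenever $u \in [c_{i-1}, c_i)$. A direct calculation shows that if $U \sim \unif(0,1)$, then $\Pr[g(U) = \mathbf{x}_i] = c_i - c_{i-1} = p_i$, so $g$ pushes $\unif(0,1)$ forward to $P(\mathbf{X})$; boundary points $u = c_i$ have Lebesgue measure zero and so do not affect the induced distribution.

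To realize $g$ as an MLP $\hat{f}$, I would fix a binary encoding of each $\mathbf{x}_i$ as a bit vector $(\beta^{(i)}_1, \ldots, \beta^{(i)}_d)$. In the hidden layer, place $n-1$ nodes computing $h_i(u) = \sigma(u - c_i) = \mathbbm{1}[u \geq c_i]$ for $i = 1, \ldots, n-1$. In the output layer, use one node per encoding bit $j$: take weights $w_{ij} = \beta^{(i+1)}_j - \beta^{(i)}_j$ on $h_i$, and bias $\beta^{(1)}_j - \tfrac{1}{2}$. When $u \in [c_{k-1}, c_k)$, exactly the nodes with $i < k$ fire, so the pre-activation of output bit $j$ telescopes to $\beta^{(k)}_j - \tfrac{1}{2}$, and applying $\sigma$ yields $\beta^{(k)}_j$ (since $\beta^{(k)}_j \in \{0,1\}$). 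Concatenating and decoding the output bit vector recovers $\mathbf{x}_k = g(u)$ as required.

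The main obstacle is cosmetic but easy to overlook: the output of binary-step nodes lies in $\{0,1\}$, whereas $\mathbf{X}$ may range over an arbitrary finite numerical domain, and we want the MLP's output to equal that value rather than just an encoding of it. I would handle this either by letting the MLP's output layer compute the binary representation of each coordinate of $\mathbf{X}$ as above, or by a two-stage composition: first build an MLP mapping $u$ to a binary encoding of the selected index $i \in \{1, \ldots, n\}$ using the thresholding construction, and then compose with a lookup MLP $i \mapsto \mathbf{x}_i$ between finite numerical domains, whose existence is guaranteed by Fact~\ref{fact:discrete-to-binary}. Either route yields a valid MLP that pushes $\unif(0,1)$ forward to $P(\mathbf{X})$.
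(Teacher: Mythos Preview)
The paper does not prove this statement; it is stated as a Fact cited from \citet[Lem.~5]{xia:etal21} and used as a black box in the proof of Theorem~\ref{thm:l3-g-expressiveness}. Your construction is correct and is exactly the standard argument one would expect behind the cited lemma: implement the discrete inverse-CDF map via $n-1$ threshold nodes $\sigma(u - c_i)$ and a telescoping linear readout, then handle the output-domain mismatch by composing with the finite-lookup MLP guaranteed by Fact~\ref{fact:discrete-to-binary}. There is nothing in the paper to compare against, and nothing missing in your proposal.
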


We can now combine these results to achieve an expressiveness result on the counterfactual level for NCMs.

\lgexpressiveness*

\begin{proof}
    By Fact \ref{fact:scm-to-discrete}, there must exist a discrete SCM $\cM' = \langle \*U' , \*V, \cF', P(\*U') \rangle$ such that $\cM'$ is $\cG$-consistent, is $\cL_3$-consistent with $\cM^*$, and $|\cD_U|$ is finite for all $U \in \*U$. Let $\widehat{M} = \langle \widehat{\*U}, \*V, \widehat{\cF}, P(\widehat{\*U})\rangle$ be a $\cG$-NCM such that each $\hat{f}_{V} \in \widehat{\cF}$ is an MLP composed of smaller MLPs as defined next.
    
    For $U \in \*U'$, let $\*V_{U} \subseteq \*V$ denote the set of endogenous variables such that for all $V \in \*V_{U}$, $\hat{f}_V$ takes $U$ as an argument. Let $\bbC = \{\*C_1, \dots, \*C_k\}$, with $\*C_1, \dots \*C_k \subseteq \*V$ denote the maximal confounded cliques of $\cG$, that is, for all $V_1, V_2 \in \*C_i$, there is a bidirected edge between $V_1$ and $V_2$ in $\cG$, and there is no $\*C_j$ such that $\*C_i \subseteq \*C_j$. Let $\*U'_{\*C_1}, \dots, \*U'_{\*C_k}$ be a partition over $\*U'$ such that if $U \in \*U'_{\*C_i}$, then $\*V_{U} \subseteq \*C_i$. If for any $U \in \*U'$, there exist multiple cliques such that this applies, one can be chosen arbitrarily. For each $V$, let $\bbC(V) \subseteq \bbC$ denote the set of cliques that contain $V$. Then we note that
    $$\*U'_{V} \subseteq \bigcup_{\*C \in \bbC(V)}\*U'_{\*C},$$
    where $\*U'_{V} \subseteq \*U'$ denotes the exogenous parents of $V$ in $\cM'$.
    
    By construction of the $\cG$-NCM in Def.~\ref{def:gncm}, $\widehat{\*U}$ consists of a $\unif(0, 1)$ random variable $\widehat{U}_{\*C_i}$ for each maximal confounded clique $\*C_i$. By Fact \ref{fact:unif-to-pmf}, we can construct MLP $\hat{f}^{(\*U')}_{\*C_i}$ mapping $\widehat{U}_{\*C_i}$ to $\*U'_{\*C_i}$ for each $\*C_i$. Then by Fact \ref{fact:discrete-to-binary}, we can construct MLP $\hat{f}_{V}^{(\cF')}$ to map $\*U'_{V}$ and $\Pai{V}$ to $V$, matching $f'_V$. Combining these two results, $\hat{f}_V(\pai{V}, \widehat{\*u}_V)$ is defined as $\hat{f}_V^{(\cF')}(\pai{V}, \hat{f}^{(\*U')}_{{\*C_{i_1}}}(\widehat{u}_{\*C_{i_1}}), \dots , \hat{f}^{(\*U')}_{\*C_{i_{\ell}}}(\widehat{u}_{\*C_{i_{\ell}}}))$, where $\bbC(V) = \{\*C_{i_1}, \dots, \*C_{i_{\ell}}\}$. \footnote{There are a few subtleties here to align with Def.~\ref{def:ff-nn-binary-step}. First, although $\bigcup_{\*C \in \bbC(V)} \*U'_{\*C}$ may contain elements of $\*U'$ not in $\*U'_V$, $\hat{f}_{V}^{(\cF')}$ can be constructed to accept them as input and not use them (weight of 0, identity activation function). Secondly, although Def.~\ref{def:ff-nn-binary-step} does not allow additional inputs inbetween layers, $\pai{V}$ can simply be provided as an input to $\hat{f}^{(\*U')}_{\*C}$ and passed forward to the next layer without modification (weight of 1, identity activation function). Thirdly, while Def.~\ref{def:ff-nn-binary-step} is not defined to have multiple nested MLPs in the same layer, the same result can be achieved by nesting them iteratively and passing the relevant outputs through the nested layers without modification. The presentation in the proof is made for the sake of clarity.}
    
    Altogether, $\widehat{\cF}^{(\*U')} = \{\hat{f}^{(\*U')}_{\*C} : \*C \in \bbC\}$ collectively forms a neural mapping from $\widehat{\*U}$ to $\*U'$, and $\widehat{\cF}^{(\cF')} = \{\hat{f}^{(\cF')}_V : V \in \*V\}$ collectively forms a neural mapping from $\*U'$ to $\*V$. We use the notation $\cF_{\*x_i}(\*U) \models \*y_i$ to be equivalent to the statement $\*Y_{i[\*x_i]}(\*U) = \*y_i$, which is the random event (w.r.t. $\*U$) that the variables of $\*Y_i$ under intervention $\*X_i = \*x_i$ takes value $\*y_i$.
    
    Then, for any counterfactual query $Q = P(\*y_*) = P \left(\bigwedge_i \*Y_{i[\*x_i]} = \*y_i \right)$, we have
    \begin{align*}
        P^{\widehat{M}}(\*y_*) &= P\left( \bigwedge_i \widehat{\*Y}_{i[\*x_i]}(\widehat{\*U}) = \*y_i \right) \\
        &= P\left(\bigwedge_i \widehat{\cF}_{\*x_i}(\widehat{\*U}) \models \*y_i \right) \\
        &= P\left(\bigwedge_i \widehat{\cF}_{\*x_i}^{(\cF')} \left( \widehat{\cF}^{(\*U')}(\widehat{\*U}) \right) \models \*y_i \right) \\
        &= P\left(\bigwedge_i \widehat{\cF}_{\*x_i}^{(\cF')} \left( \*U' \right) \models \*y_i \right) \\
        &= P\left(\bigwedge_i \cF'_{\*x_i} \left( \*U' \right) \models \*y_i \right) \\
        &= P\left( \bigwedge_i \*Y'_{i[\*x_i]}(\*U') = \*y_i \right) \\
        &= P^{\cM'}(\*y_*) \\
        &= P^{\cM^*}(\*y_*).
    \end{align*}
    
    Hence, we have shown that for any SCM $\cM^*$ inducing causal diagram $\cG$, we can construct a $\cG$-NCM that matches $\cM^*$ on the third layer.
\end{proof}

\subsection{Proof of Theorem 3 and Corollaries 1 and 2}

Thms.~\ref{thm:gl3-consistency} and \ref{thm:l3-g-expressiveness} provide the key properties that result in the proof of Thm.~\ref{thm:ncm-ctfid-equivalence}.

\idequivalence*

\begin{proof}
    If $Q$ is identifiable from $\cG$ and $\bbZ$, it must also be identifiable from $\Omega(\cG)$ and $\bbZ$ simply because $\Omega \subset \Omega^*$. If all pairs of SCMs $\cM_1, \cM_2 \in \Omega^*$ that induce $\cG$ and match $\cM^*$ in $\bbZ$ (i.e.~$\bbZ(\cM_1) = \bbZ(\cM_2) = \bbZ(\cM^*)$) also match in $Q$ (i.e.~$P^{\cM_1}(\*y_* \mid \*x_*) = P^{\cM_2}(\*y_* \mid \*x_*)$), this should still hold if $\cM_1$ and $\cM_2$ are $\cG$-NCMs, as by Thm.~\ref{thm:gl3-consistency}, the set of all $\cG^{(\cL_3)}$-consistent SCMs includes the set of all $\cG$-NCMs.
    
    If $Q$ is not identifiable from $\cG$ and $\bbZ$, there must exist SCMs $\cM_1, \cM_2 \in \Omega^*$ such that $\cM_1$ and $\cM_2$ both induce $\cG$, $\bbZ(\cM_1) = \bbZ(\cM_2) = \bbZ(\cM^*)$, but $P^{\cM_1}(\*y_* \mid \*x_*) \neq P^{\cM_2}(\*y_* \mid \*x_*)$. By Thm.~\ref{thm:l3-g-expressiveness}, there must exist corresponding $\cG$-NCMs $\widehat{M}_1, \widehat{M}_2 \in \Omega(\cG)$ such that $\cL_3(\widehat{M}_1) = \cL_3(\cM_1)$ and $\cL_3(\widehat{M}_2) = \cL_3(\cM_2)$. This implies that $\bbZ(\hM_1) = \bbZ(\hM_2) = \bbZ(\cM^*)$ and that $P^{\hM_1}(\*y_* \mid \*x_*) \neq P^{\hM_2}(\*y_* \mid \*x_*)$, since $\hM_1$ and $\hM_2$ share all layer 3 distributions with $\cM_1$ and $\cM_2$ respectively. However, this provides an example of two NCMs that match in $\cG$ and the correct $\bbZ$ but do not match in $Q$. This proves that $Q$ is not identifiable from $\Omega(\cG)$ and $\bbZ$ if it is not identifiable from $\cG$ and $\bbZ$
    
    Therefore, since both directions hold, it must be the case that $Q$ is identifiable from $\cG$ and $\bbZ$ if and only if it is identifiable from $\Omega(\cG)$ and $\bbZ$.
\end{proof}

Consequently, any identifiable query can be computed from any proxy NCM $\widehat{M}$ trained to match $\cM^*$ in $\bbZ$ as if it were $\cM^*$ itself.

\opid*

\begin{proof}
    By Thm.~\ref{thm:l3-g-expressiveness}, there must exist $\cG$-NCM $\hM' \in \Omega(\cG)$ such that $\cL_3(\hM') = \cL_3(\cM^*)$, which implies that $\hM'$ matches $\cM^*$ in both $\bbZ$ and $Q$. By Thm.~\ref{thm:ncm-ctfid-equivalence}, since $Q$ is identifiable from $\cG$ and $\bbZ$, it must also be identifiable from $\Omega(\cG)$ and $\bbZ$, which implies that all $\cG$-NCMs that match in $\bbZ$ with $\cM^*$ must also match $\hM'$ in $Q$. In other words, for any $\cG$-NCM $\hM \in \Omega(\cG)$ such that $\bbZ(\hM) = \bbZ(\cM^*)$, it must be the case that $P^{\hM}(\*y_* \mid \*x_*) = P^{\hM'}(\*y_* \mid \*x_*)$. Hence, all such NCMs will match $\cM^*$ in $Q$. Finally, $P^{\widehat{M}}(\*y_* \mid \*x_*) = \frac{P^{\widehat{M}}(\*y_*, \*x_*)}{P^{\widehat{M}}(\*x_*)}$ can be computed from $\widehat{M}$ using Def.~\ref{def:l3-semantics}.
\end{proof}

These two results lead to the soundness and completeness of Alg.~\ref{alg:ncm-solve-ctfid}.

\completeness*

\begin{proof}
    Thm.~\ref{thm:l3-g-expressiveness} states that $Q$ is identifiable from $\cG$ and $\bbZ$ if and only if $Q$ is identifiable from $\Omega(\cG)$ and $\bbZ$. By definition, $Q$ is identifiable from $\Omega(\cG)$ and $\bbZ$ if and only if for all pairs of $\cG$-NCMs, $\hM_1, \hM_2 \in \Omega(\cG)$ with $\bbZ(\hM_1) = \bbZ(\hM_2) = \bbZ(\cM^*)$, we have $P^{\widehat{M}_1}(\*y_* \mid \*x_*) = P^{\widehat{M}_2}(\*y_* \mid \*x_*)$. This holds if and only if $P^{\widehat{M}(\bm{\theta}^*_{\min})}(\*y_* \mid \*x_*) = P^{\widehat{M}(\bm{\theta}^*_{\max})}(\*y_* \mid \*x_*)$. If they are not equal, then $\hM(\bm{\theta}^*_{\min})$ and $\hM(\bm{\theta}^*_{\max})$ are examples of two NCMs that match in $\cG$ and the correct $\bbZ$ but not in $Q$. Otherwise, all $\cG$-NCMs that match in $\bbZ$ must also match in $Q$, since any NCM that induces a result for $Q$ that is less than $P^{\hM(\bm{\theta}^*_{\min})}(\*y_* \mid \*x_*)$ or greater than $P^{\hM(\bm{\theta}^*_{\min})}(\*y_* \mid \*x_*)$ would contradict the optimality of $\bm{\theta}^*_{\min}$ and $\bm{\theta}^*_{\max}$.
    
    If $Q$ is identifiable, then Corol.~\ref{cor:op-id} guarantees that any $\cG$-NCM $\hM$ that induces the correct $\bbZ$ will match $\cM^*$ in $Q$ (i.e.~$P^{\hM}(\*y_* \mid \*x_*) =  P^{\cM^*}(\*y_* \mid \*x_*)$).
\end{proof}

\clearpage
\section{Experimental Details} \label{app:experiments}

This section provides the detailed information about our experiments.
Our models are primarily written in PyTorch \citep{paszke2017automatic}, and training is facilitated using PyTorch Lightning \citep{falcon2020framework}.

\subsection{Additional Results} \label{app:more-experiments}

The accuracies from Fig.~\ref{fig:gan-id-expl-results} are computed from the max-min gaps (i.e.~the difference between the induced query of the maximizing model and the minimizing model) via a hypothesis testing procedure (see Sec.~\ref{fig:gan-id-expl-results}). The actual values of these gaps are plotted below.

\begin{figure}[h]
    \begin{center}
    \includegraphics[width=\textwidth,keepaspectratio]{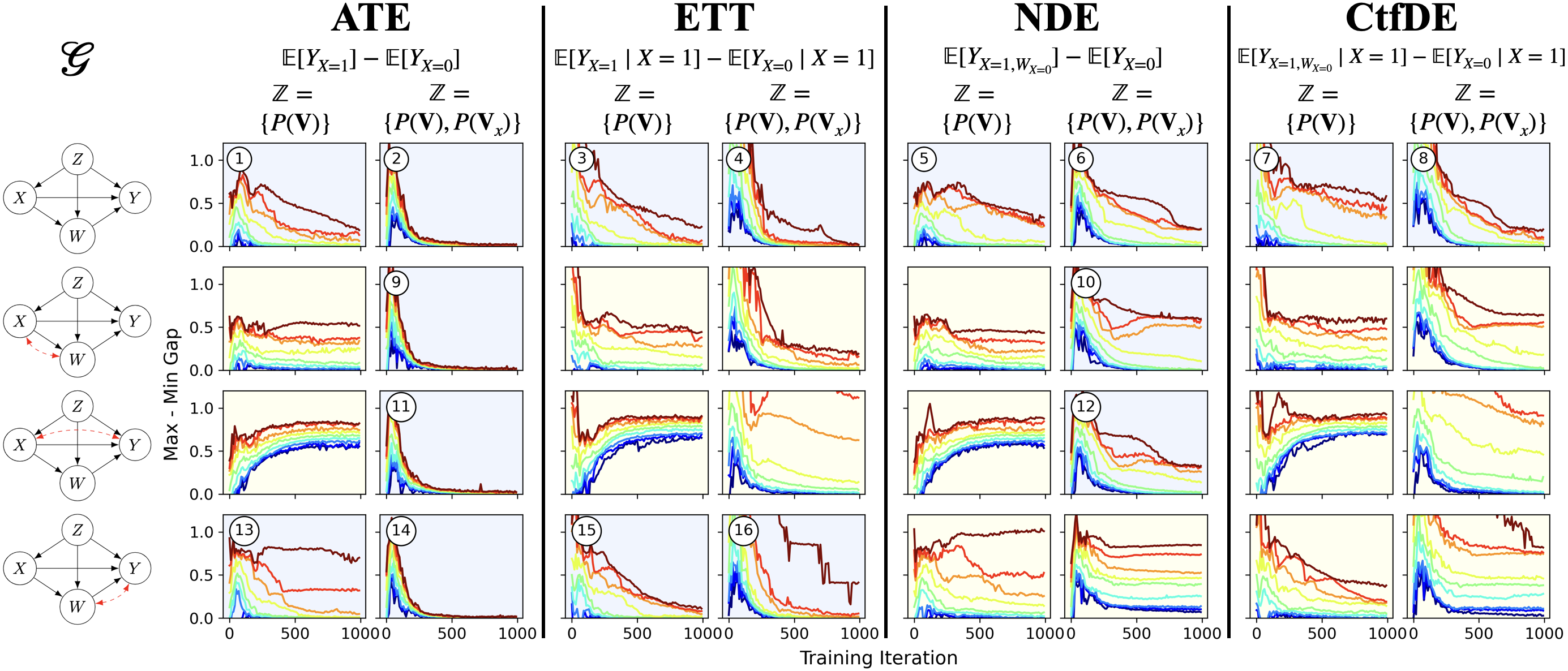}
    \caption{(1, 5, 10, 25, 50, 75, 90, 95, 99)-percentiles of max-min gaps of 1-dimensional (1D) GAN-NCM in ID experiments shown in Fig.~\ref{fig:gan-id-expl-results}.}
    \label{fig:expl-gan1d-gaps}
    \end{center}
\end{figure}
\begin{figure}[h]
    \begin{center}
    \includegraphics[width=\textwidth,keepaspectratio]{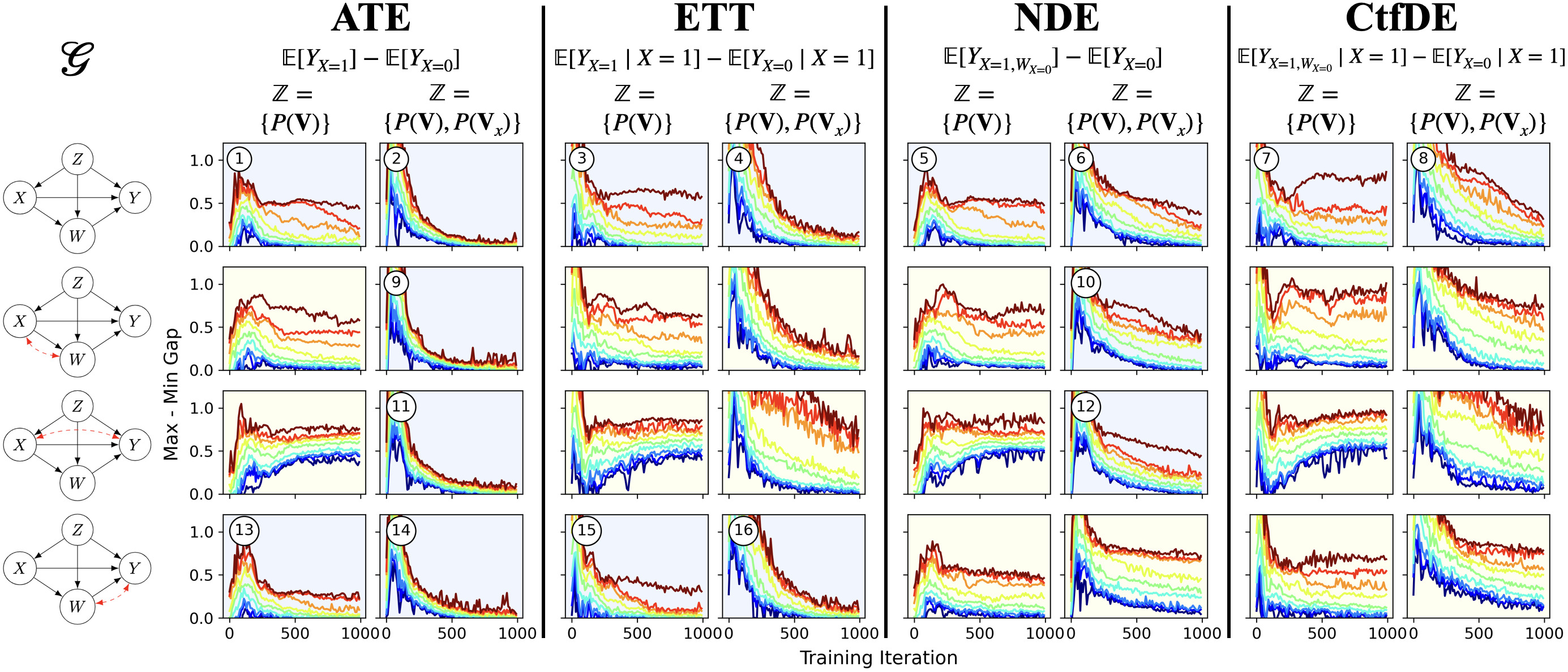}
    \caption{(1, 5, 10, 25, 50, 75, 90, 95, 99)-percentiles of max-min gaps of  16-dimensional (16D) GAN-NCM in ID experiments shown in Fig.~\ref{fig:gan-id-expl-results}.}
    \label{fig:expl-gan16d-gaps}
    \end{center}
\end{figure}
\begin{figure}[h]
    \begin{center}
    \includegraphics[width=\textwidth,keepaspectratio]{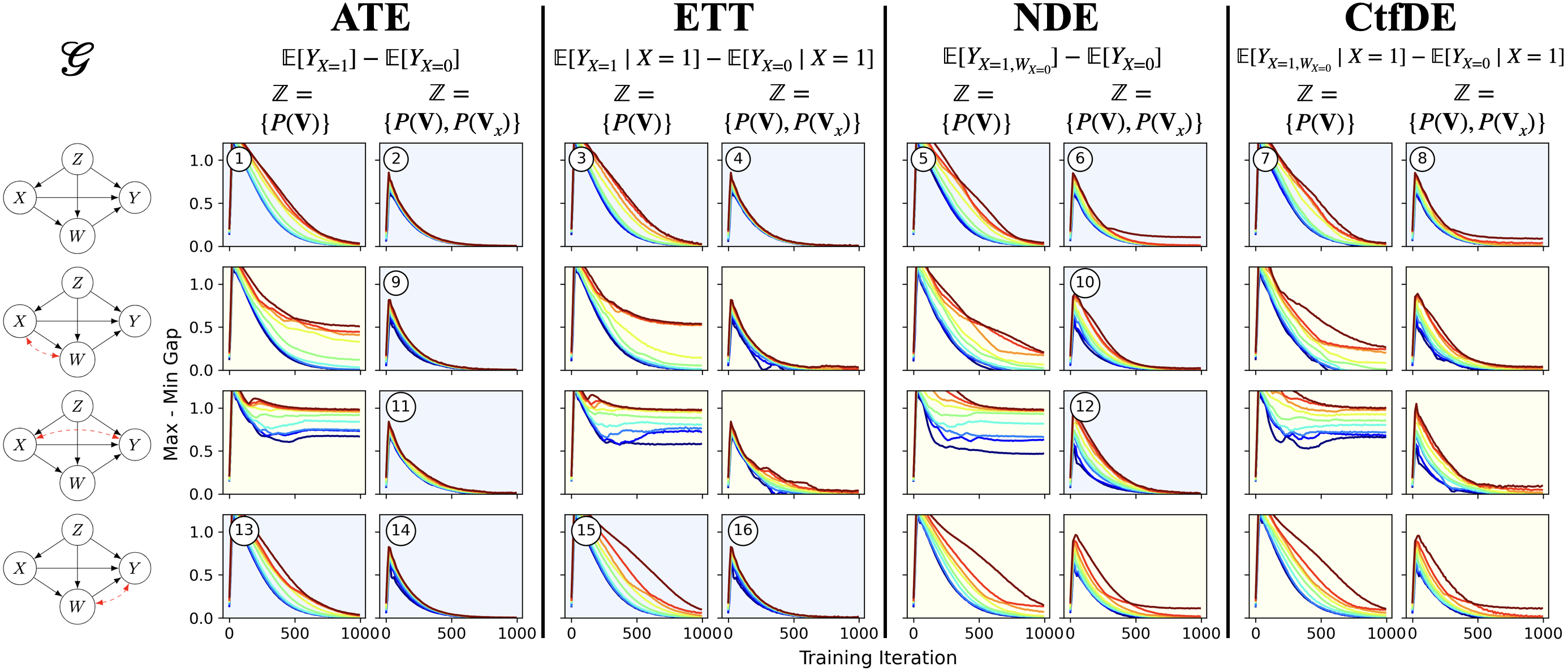}
    \caption{(1, 5, 10, 25, 50, 75, 90, 95, 99)-percentiles of max-min gaps of 1-dimensional (1D) MLE-NCM in ID experiments shown in Fig.~\ref{fig:gan-id-expl-results}.}
    \label{fig:expl-mle-gaps}
    \end{center}
\end{figure}

Note from the GAN-NCM results in Fig.~\ref{fig:expl-gan1d-gaps} that in both ID and non-ID cases, the gaps are initially large due to the large value of $\lambda$. As training progresses and $\lambda$ decreases, the gaps tighten in ID cases but remain wide in non-ID cases. In terms of the accuracy curves in Fig.~\ref{fig:gan-id-expl-results}, the result is that accuracy starts low but slowly increases in ID cases, while accuracy remains high the entire time in non-ID cases. Fig.~\ref{fig:expl-gan16d-gaps} shows similar patterns, but gaps tend to be larger in general due to the higher complexity of having higher dimensions. Positivity of data distributions may be more difficult to achieve with a smaller dataset, so the GAN-NCM leans on the safer side and predicts ``non-ID'' more often.

On the other hand, when looking at the MLE-NCM gaps from Fig.~\ref{fig:expl-mle-gaps}, that gaps often tighten during training even in non-ID cases, suggesting that MLE-NCM may have trouble finding the proper bounds in non-trivial non-ID cases.

\begin{wrapfigure}{r}{0.6\textwidth}
\vspace{-0.2in}
\includegraphics[width=\linewidth]{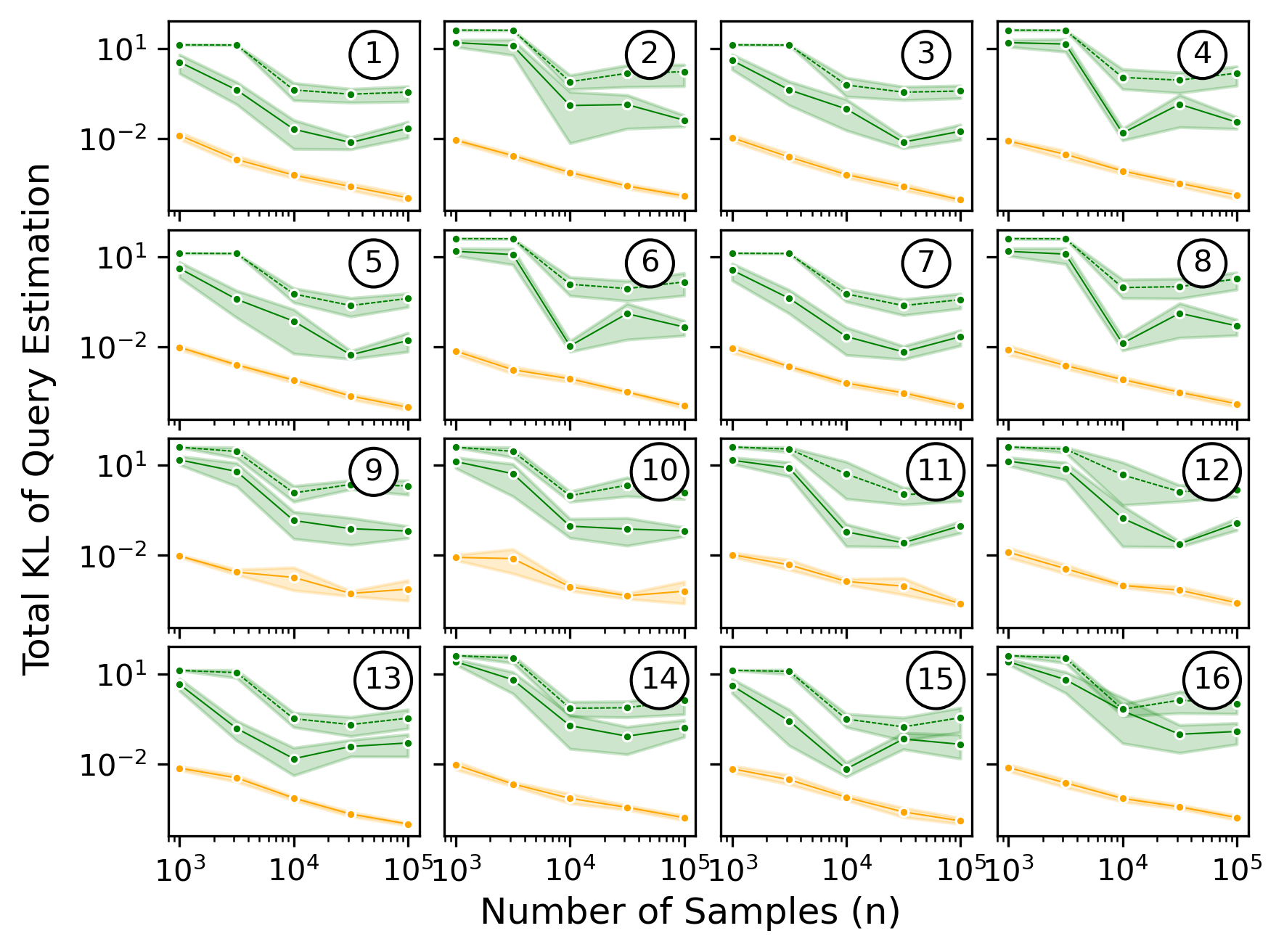}
\caption{Total KL Divergence of estimation experiments shown in Fig.~\ref{fig:gan-est-expl-results}. Results are shown for GAN-NCM (solid green) and MLE-NCM (orange) with $d=1$ and also GAN-NCM with $d=16$ (dashed green).}
\label{fig:est-expl-kl-results}
\end{wrapfigure}

For the estimation experiments shown in Fig.~\ref{fig:gan-est-expl-results}, we plot the total KL divergence comparing the induced distributions of the trained models to the true data distributions in Fig.~\ref{fig:est-expl-kl-results}. Note the downward trend in all three cases, suggesting that GAN-NCM and MLE-NCM improve in modeling the true distribution with more data (a good sanity check). MLE-NCM is more consistent and achieves slightly lower KL for the same amount of data, which also reflects in its accuracy of the induced query. GAN-NCM is evidently not as statistically efficient due to its implicit design and less stable training method, but it is capable of learning the data distribution even in high dimensions.

\begin{figure}[h]
    \begin{center}
    \includegraphics[width=\textwidth,keepaspectratio]{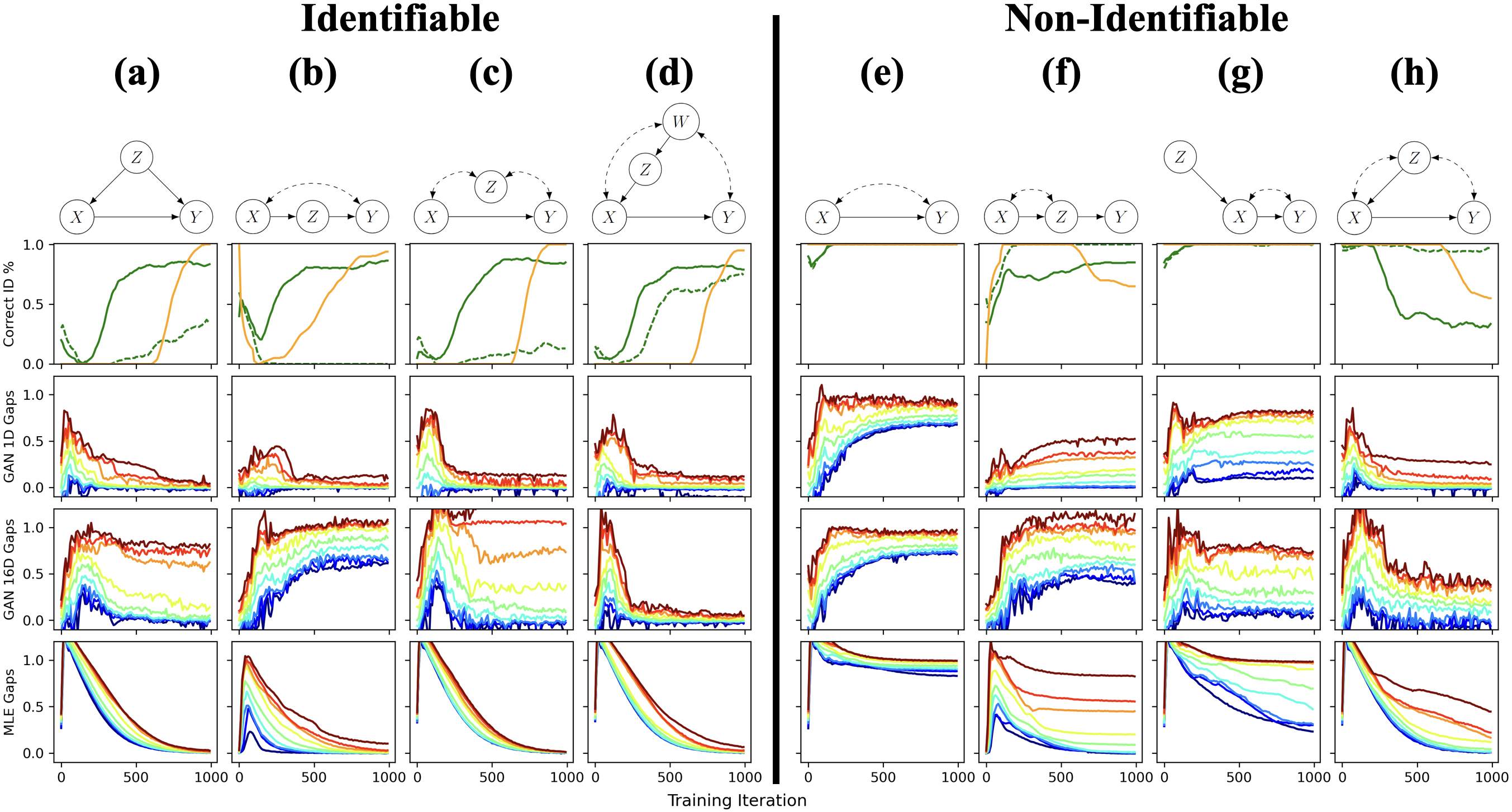}
    \caption{Identification results of the ATE query in the 8 settings from \citep{xia:etal21}. The graphs at the top from left to right are: (ID cases) back-door, front-door, M, napkin; (not ID cases) bow, extended bow, IV, bad M. In the top row of plots, the running average of classification accuracy over 1000 epochs of both GAN-NCM (green) and MLE-NCM (orange) are shown. The second, third, and fourth rows of plots show the (1, 5, 10, 25, 50, 75, 90, 95, 99)-percentiles of max-min gaps of 1D GAN-NCM, 16D GAN-NCM, and MLE-NCM respectively.}
    \label{fig:id-r80}
    \end{center}
\end{figure}

To demonstrate that this work subsumes results from layer 2, we demonstrate the ability of the GAN-NCM to solve the ID and estimation problems in the same 8 settings from Fig.~4 of \citep{xia:etal21}. Results are shown in Fig.~\ref{fig:id-r80}, compared alongside the MLE-NCM which is the original approach used. Recall that the query tested is the average treatment effect (ATE) equal to $P(Y_{X=1} = 1) - P(Y_{X=0} = 1)$. Results are comparable to results from Fig.~\ref{fig:gan-id-expl-results}. Like the MLE-NCM, the GAN-NCM starts with large gaps, which slowly tighten in ID cases but stay wide in non-ID cases. Accuracy is also similar.

Fig.~\ref{fig:est-r80} shows the estimation results from estimating the four identifiable cases from Fig.~\ref{fig:id-r80}. Estimation results are similar to that of the counterfactual cases shown in Fig.~\ref{fig:gan-est-expl-results}. As expected, KL divergence decreases with data for all three settings, but training is slightly more efficient with the MLE-NCM. This is reflected in the slightly reduced error on the query under the same amount of data. Still, GAN-NCM is a viable method for estimating ATE in these cases, even in high dimensions.

\begin{wrapfigure}{r}{0.6\textwidth}
\vspace{-0.2in}
\includegraphics[width=\linewidth]{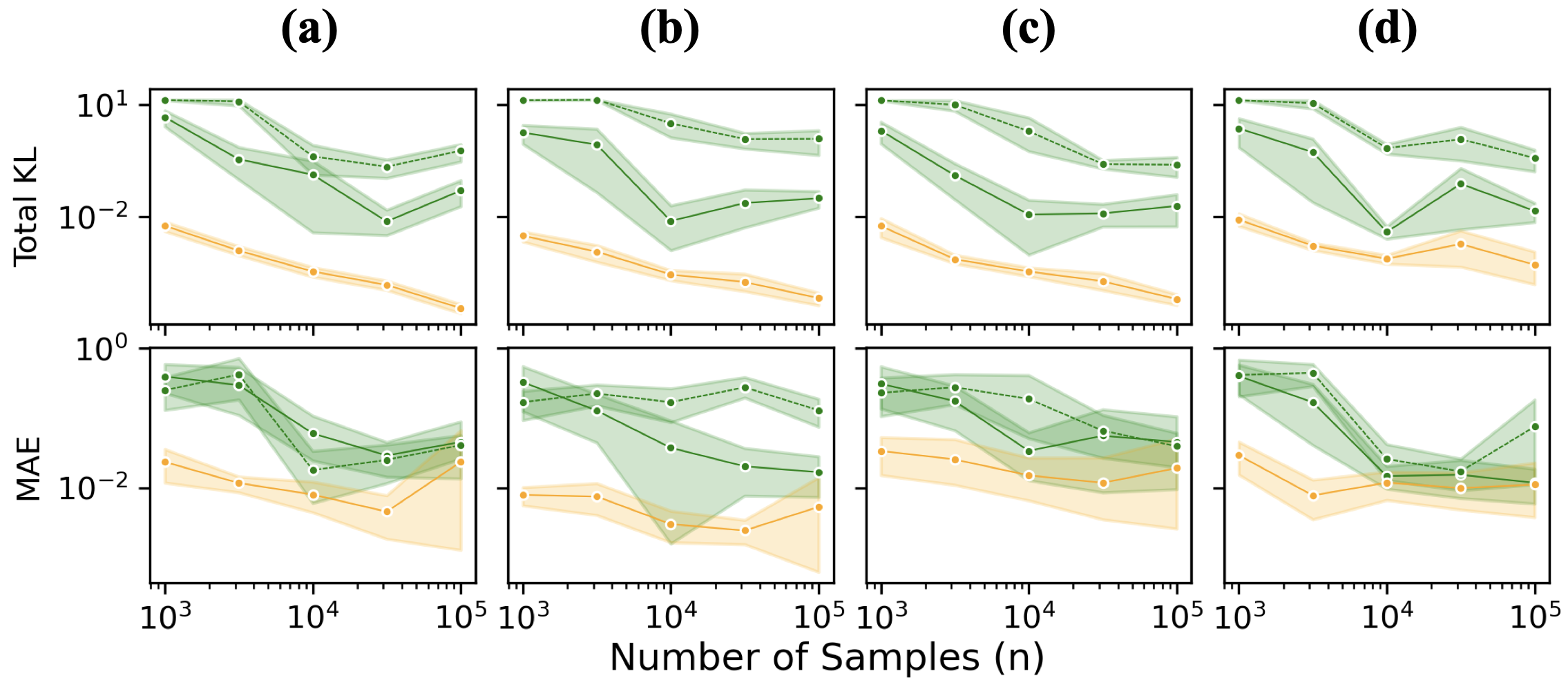}
\caption{Estimation results for identifiable cases shown in Fig.~\ref{fig:id-r80}. Results are shown for GAN-NCM (solid green) and MLE-NCM (orange) with $d=1$ and also GAN-NCM with $d=16$ (dashed green). Top row is KL divergence between the true $P(\*V)$ and the one induced by the trained NCM. Bottom row is mean absolute error (MAE) of the induced query, ATE.}
\label{fig:est-r80}
\vspace{-0.2in}
\end{wrapfigure}

As mentioned earlier, the MLE-NCM may be problematic for solving the ID problem due to its apparent bias for returning ID conclusions even in non-ID cases. This is obviously undesirable, especially since attempting to estimate a quantity in a non-ID case will likely produce a misleading result. This behavior is not evident in Fig.~\ref{fig:id-r80}, so one may be tempted to conclude that this is only an issue for complicated counterfactual quantities and not simple layer 2 quantities like ATE. However, this is not the case.

\begin{wrapfigure}{r}{0.3\textwidth}
\vspace{-0.2in}
    \centering
    \begin{tikzpicture}[xscale=1.5, yscale=1.5]
		\node[draw, circle] (X) at (-1, 0) {$X$};
		\node[draw, circle] (Z) at (0, 1) {$Z$};
		\node[draw, circle] (Y) at (1, 0) {$Y$};
		
		\path [-{Latex[length=2mm]}] (Z) edge (X);
		\path [-{Latex[length=2mm]}] (Z) edge (Y);
		\path [-{Latex[length=2mm]}] (X) edge (Y);
		\path [dashed, {Latex[length=2mm]}-{Latex[length=2mm]}, bend left, out=45, in=135] (X) edge (Z);
		\path [dashed, {Latex[length=2mm]}-{Latex[length=2mm]}, bend left, out=45, in=135] (Z) edge (Y);
	\end{tikzpicture}
	\caption{Back-door + M graph.}
	\label{fig:bdm}
\vspace{-0.2in}
\end{wrapfigure}
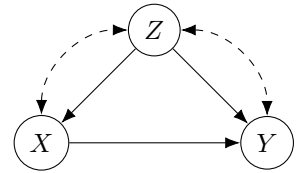

To witness, consider the back-door + M graph in Fig.~\ref{fig:bdm}, created by combining the edges of the back-door and M graphs (graphs a and c in Fig.~\ref{fig:id-r80}). It is also the bad M graph (graph h in Fig.~\ref{fig:id-r80}) with an additional edge from $Z$ to $Y$. In this setting, $P(y_x)$ (and therefore ATE) is not identifiable from $P(\*V)$ and $\cG$. We test the GAN-NCM and MLE-NCM to identify this setting in Fig.~\ref{fig:bdm-id-results}.

\begin{figure}[h]
    \centering
	\begin{subfigure}[t]{0.24\textwidth}
	    \centering
	    \includegraphics[width=\textwidth,height=0.5\textheight,keepaspectratio]{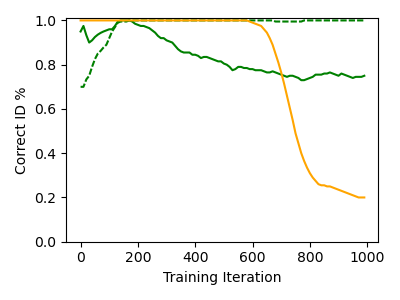}
        \caption{ID accuracy.}
        \label{fig:bdm-acc}
	\end{subfigure}
	\begin{subfigure}[t]{0.24\textwidth}
	    \centering
	    \includegraphics[width=\textwidth,height=0.5\textheight,keepaspectratio]{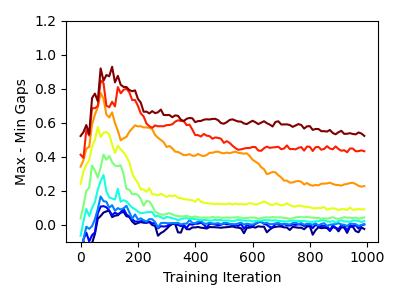}
        \caption{GAN-NCM 1D gaps.}
        \label{fig:bdm-gan1d-gaps}
	\end{subfigure}
	\begin{subfigure}[t]{0.24\textwidth}
	    \centering
	    \includegraphics[width=\textwidth,height=0.5\textheight,keepaspectratio]{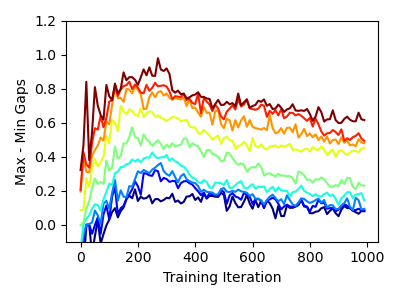}
        \caption{GAN-NCM 16D gaps.}
        \label{fig:bdm-gan16d-gaps}
	\end{subfigure}
	\begin{subfigure}[t]{0.24\textwidth}
	    \centering
	    \includegraphics[width=\textwidth,height=0.5\textheight,keepaspectratio]{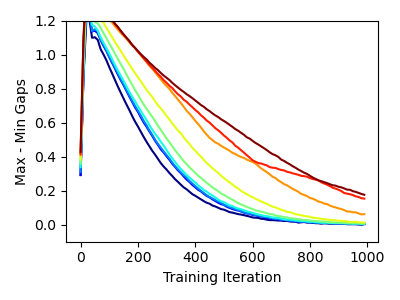}
        \caption{MLE-NCM 1D gaps.}
        \label{fig:bdm-mle-gaps}
	\end{subfigure}
	\caption{Identification results for the ATE query from $P(\*V)$ and the graph $\cG$ in Fig.~\ref{fig:bdm}. The four plots shown from left to right are ID accuracy and (1, 5, 10, 25, 50, 75, 90, 95, 99)-percentile max-min gaps for 1D GAN-NCM (green), 16D GAN-NCM (dashed green), and 1D MLE-NCM (orange) over 1000 training epochs.}
	\label{fig:bdm-id-results}
\end{figure}

Evidently, the same bias of the MLE-NCM of incorrectly returning an identifiable result occurs here. As training progresses, the gaps begin to close (shown in Fig.~\ref{fig:bdm-mle-gaps}) despite the query being non-identifiable. This results in a very poor accuracy towards the end of training (shown through orange line in Fig.~\ref{fig:bdm-acc}). The GAN-NCM handles this robustly in both 1 and 16 dimensions, having large gaps all the way throughout training (Figs.~\ref{fig:bdm-gan1d-gaps} and \ref{fig:bdm-gan16d-gaps}) and maintaining a high accuracy (green lines in Fig.~\ref{fig:bdm-acc}).

We do not make any theoretical claim that explains this phenomenon. From experimentation, it seems that MLE-NCM is more easily able to model associations between variables through directed edges instead of bidirected edges. That is, if $X \rightarrow Y$ and $X \leftrightarrow Y$, the MLE-NCM may be biased to modeling the association between $X$ and $Y$ by directly using the value of $x$ in $\hat{f}_Y$ as opposed to training $\hat{f}_X$ and $\hat{f}_Y$ to use the shared $U_{XY}$ to model this association. As a result, the MLE-NCM may be treating Fig.~\ref{fig:bdm} as a simple back-door case, and the optimization term to maximize/minimize the query may not be strong enough to override this bias.

\subsection{Nested Counterfactuals} \label{app:nested-ctf}

A more general definition of counterfactuals can be defined as shown below (see \citep{correa:etal21} for a more detailed discussion).

\begin{definition}[Layer 3 Valuation (General)] 
\label{def:l3-semantics-nested}
An SCM $\cM$ induces layer $\cL_3(\cM)$, a set of distributions over $\*V$, each with the form $P(\*Y_*) = P(\*Y_{1[\widetilde{\*X}_1]}, \*Y_{2[\widetilde{\*X}_2], \dots})$ such that 

\begingroup\abovedisplayskip=0.5em\belowdisplayskip=0pt

 \begin{align}
    \label{eq:def:l3-semantics-nested}
    P^{\cM}(\*y_{1[\widetilde{\*X}_1]}, \*y_{2[\widetilde{\*X}_2]}, \dots) = 
    \int_{\cD_{\mathbf{U}}} \mathbbm{1}\left[\*Y_{1[\widetilde{\*X}_1]}(\*u)=\*y_1, \*Y_{2[\widetilde{\*X}_2]}(\*u) = \*y_2, \dots \right] dP(\*u),
\end{align}
where $\widetilde{\*X}_i$ is a collection of functions defined over variables $\*X_i$ (such that we have $\tilde{f}_{X}$ mapping from some $\widetilde{\*U}_X \subseteq \*U$ to $\cD_X$ for each $\tilde{f}_X \in \widetilde{\*X}_i$), and ${\*Y}_{i[\widetilde{\*X}_i]}(\*u)$ (also called \emph{potential response} \citep[Def.~7.4.1]{pearl:2k}) is the solution for $\*Y$ after evaluating 
 $\mathcal{F}_{\widetilde{\*X}_i} := \{f_{V_j} : V_j \in \*V \setminus \*X\} \cup \{f_X \leftarrow \tilde{f}_X : \tilde{f}_X \in \widetilde{\*X}_i\}$ with $\*u$.

\endgroup
\hfill $\blacksquare$
\end{definition}
 
$\*Y_{\widetilde{\*X}}$ is considered a natural intervention if each $\tilde{f}_X \in \widetilde{\*X}$ is a potential response for $X$ in $\cM$. If $\tilde{f}_X$ for variable $X$ is set to some fixed value $x$, then that corresponds to an atomic intervention, which is what is considered in Def.~\ref{def:l3-semantics-nested}. However, one could also have a natural intervention where $\tilde{f}_X := X_{\widetilde{\*Z}}(\*u)$ for some $\widetilde{\*Z}$, defined recursively. In other words, the counterfactual terms may be \emph{nested}.

For example, consider the term $P(Y_{X = 1, W_{X = 0}} = 1)$ in the natural direct effect (NDE) query. This would be evaluated as
\begin{align*}
    P(Y_{X = 1, W_{X = 0}} = 1) &= \int_{\cD_{\*U}} \mathbbm{1} \left[ Y_{X=1, W_{X=0}} (\*u) = 1\right] dP(\*u) \\
    &= \int_{\cD_{\*U}} \mathbbm{1} \left[ Y_{X=1, W = W_{X=0}(\*u)} (\*u) = 1\right] dP(\*u).
\end{align*}
Intuitively, we are aggregating all settings of $\*u$ in which $Y = 1$ under the intervention where $X$ is fixed to $1$ and $W$ is fixed to the value that it would take under the same setting of $\*u$ under the intervention where $X$ is fixed to $0$.

It turns out that all nested counterfactuals can be unrolled into an unnested form via the Counterfactual Unnesting Theorem (CUT).

\begin{fact}[{Counterfactual Unnesting Theorem, \citep[Thm.~1]{correa:etal21}}]
    Let $\widetilde{\*X}$, $\widetilde{\*Z}$ be any natural interventions on disjoint sets $\*X, \*Z \subseteq \*V$. Then for $\*Y \subseteq \*V$ disjoint from $\*X$ and $\*Z$, we have
    \begin{equation}
        \label{eq:cut}
        P(\*Y_{\widetilde{\*Z}, \widetilde{\*X}} = \*y) = \sum_{\*x \in \cD_{\*X}} P(\*Y_{\widetilde{\*Z}, \*x} = \*y, \widetilde{\*X} = \*x)
    \end{equation}
\end{fact}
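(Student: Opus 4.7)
The plan is to prove the Counterfactual Unnesting Theorem directly from the integral semantics given in Definition \ref{def:l3-semantics-nested}, by partitioning the exogenous space $\cD_{\*U}$ according to the value taken by the natural intervention $\widetilde{\*X}$. The core observation is that, although $\widetilde{\*X}$ is a collection of \emph{functions} (each $\tilde{f}_X$ being a potential response), for any \emph{fixed} exogenous setting $\*u$ these functions collapse to a single value $\*x \in \cD_{\*X}$. Once $\widetilde{\*X}(\*u) = \*x$ is pinned down, the mechanism replacement performed by $\cF_{\widetilde{\*Z}, \widetilde{\*X}}$ on $\*X$ is indistinguishable from the atomic replacement performed by $\cF_{\widetilde{\*Z}, \*x}$, at least with respect to downstream evaluations of $\*Y$.

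Concretely, I would first expand the left-hand side using Definition \ref{def:l3-semantics-nested}: $P(\*Y_{\widetilde{\*Z}, \widetilde{\*X}} = \*y) = \int_{\cD_{\*U}} \mathbbm{1}[\*Y_{\widetilde{\*Z}, \widetilde{\*X}}(\*u) = \*y]\,dP(\*u)$. The central lemma to establish is the \emph{pointwise equivalence}: for every $\*u \in \cD_{\*U}$, if $\widetilde{\*X}(\*u) = \*x$, then $\*Y_{\widetilde{\*Z}, \widetilde{\*X}}(\*u) = \*Y_{\widetilde{\*Z}, \*x}(\*u)$. This follows from the recursive (acyclic) evaluation semantics of SCMs: processing the structural equations in topological order, each variable $X \in \*X$ receives value $\tilde{f}_X(\*u) = \*x$ in both the nested and atomic mutilated systems, so every descendant of $\*X$ (including each component of $\*Y$) receives identical arguments downstream. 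Disjointness of $\*X$, $\*Z$, and $\*Y$ (assumed in the statement) ensures that the two natural interventions $\widetilde{\*Z}$ and $\widetilde{\*X}$ do not overwrite one another and that the potential responses used by $\widetilde{\*X}$ are well-defined.

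Granted this pointwise equivalence, for each $\*u$ there is exactly one $\*x \in \cD_{\*X}$ with $\widetilde{\*X}(\*u) = \*x$, so the indicator splits as a disjoint sum, $\mathbbm{1}[\*Y_{\widetilde{\*Z}, \widetilde{\*X}}(\*u) = \*y] = \sum_{\*x \in \cD_{\*X}} \mathbbm{1}[\*Y_{\widetilde{\*Z}, \*x}(\*u) = \*y,\ \widetilde{\*X}(\*u) = \*x]$. Interchanging the finite sum with the integral (valid since $\cD_{\*X}$ is finite by the recursive finite-domain assumption made throughout the paper) and applying Definition \ref{def:l3-semantics-nested} again to each term on the right gives $\sum_{\*x \in \cD_{\*X}} P(\*Y_{\widetilde{\*Z}, \*x} = \*y, \widetilde{\*X} = \*x)$, which is Equation \ref{eq:cut}.

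The main obstacle, and the step requiring the most care, is the pointwise equivalence $\*Y_{\widetilde{\*Z}, \widetilde{\*X}}(\*u) = \*Y_{\widetilde{\*Z}, \*x}(\*u)$ whenever $\widetilde{\*X}(\*u) = \*x$. Writing this out rigorously requires being precise about the recursive unrolling semantics of potential responses: each $\tilde{f}_X \in \widetilde{\*X}$ may itself depend on further interventions via its definition as a potential response $X_{\widetilde{\*W}}(\*u)$ for some $\widetilde{\*W}$, and one must verify by structural induction on the nesting depth that the numerical value $\tilde{f}_X(\*u)$ fed into downstream mechanisms is exactly what the atomic replacement $f_X \leftarrow \*x$ would supply. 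The disjointness hypothesis $\*X \cap \*Z = \emptyset$ and acyclicity of $\cG$ are precisely the assumptions that prevent the two nested systems from interfering during this induction, making the rewriting sound.
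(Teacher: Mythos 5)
The statement you proved is imported by the paper as a \emph{Fact}, cited verbatim from \citet[Thm.~1]{correa:etal21}; the paper itself supplies no proof, so there is nothing in-paper to compare line by line. Your blind derivation is correct and is essentially the standard argument behind CUT: expand the left-hand side with the integral semantics of Def.~\ref{def:l3-semantics-nested}, partition $\cD_{\*U}$ according to the value $\*x = \widetilde{\*X}(\*u)$, use the pointwise (composition/consistency) identity $\*Y_{\widetilde{\*Z}, \widetilde{\*X}}(\*u) = \*Y_{\widetilde{\*Z}, \*x}(\*u)$ whenever $\widetilde{\*X}(\*u) = \*x$, and interchange the finite sum over $\cD_{\*X}$ with the integral. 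One remark: the step you flag as the main obstacle is lighter than you suggest. By Def.~\ref{def:l3-semantics-nested}, each $\tilde{f}_X \in \widetilde{\*X}$ is already a function of $\*u$ alone (it maps $\widetilde{\*U}_X \subseteq \*U$ to $\cD_X$), so for fixed $\*u$ it is literally a constant; no structural induction on nesting depth is needed, since replacing $f_X$ by the function $\tilde{f}_X$ and replacing it by the constant $\tilde{f}_X(\*u)$ yield identical solutions of the (acyclic) mutilated system at that $\*u$, regardless of how $\tilde{f}_X$ was itself built from deeper potential responses. Disjointness of $\*X$ and $\*Z$ is needed only so the joint replacement $\cF_{\widetilde{\*Z},\widetilde{\*X}}$ is well defined, and finiteness of $\cD_{\*X}$ (assumed throughout the paper) justifies the sum--integral exchange, exactly as you state.
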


This theorem simplifies the scope of theoretical studies on counterfactual identification, and indeed, one could perform inferences on such queries using the NCM by performing it on the terms of the unnested version. However, the proxy model nature of the NCM allows it to directly evaluate nested counterfactuals according to the definition. In the case of $P(Y_{X=1, W_{X=0}} = 1)$, for example, this can be estimated by sampling several instances of $\widehat{\*U}$, evaluating $W$ in those instances under the intervention $X=0$, and then subsequently evaluating $Y$ under the intervention $X = 1$ and with $W$ set to the corresponding values obtained previously. This dramatically simplifies the evaluation of nested counterfactuals, which otherwise would have required an analytical understanding to unnest the query manually, followed by requiring an evaluation on all terms in the sum in the r.h.s. of Eq.~\ref{eq:cut}. In practice, this sum takes a long time (possibly intractable in continuous cases) to compute, with compounding error in each term. A visualization of the improvement is shown in Fig.~\ref{fig:natural-vs-unnest}.

\begin{figure}[h]
    \centering
	\begin{subfigure}[t]{0.4\textwidth}
	    \centering
	    \includegraphics[width=\textwidth,height=0.5\textheight,keepaspectratio]{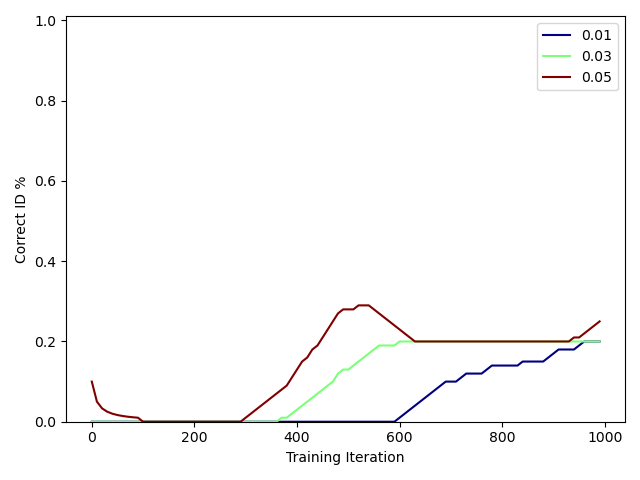}
        \caption{Unnested Evaluation}
        \label{fig:nde-unnested}
	\end{subfigure}
	\quad
	\begin{subfigure}[t]{0.4\textwidth}
	    \centering
	    \includegraphics[width=\textwidth,height=0.5\textheight,keepaspectratio]{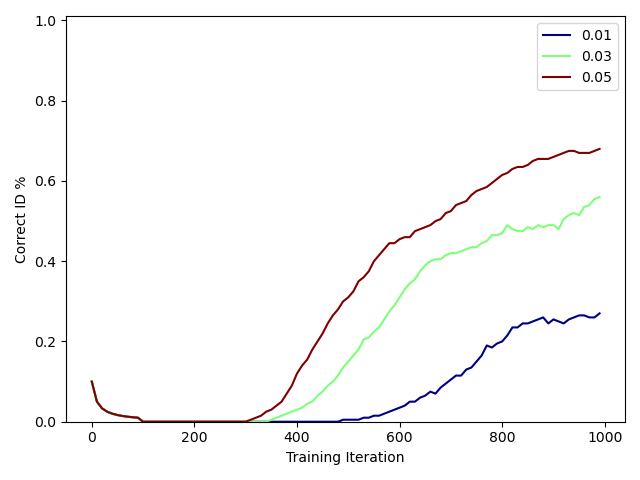}
        \caption{Natural Evaluation}
        \label{fig:nde-nested}
	\end{subfigure}
	\caption{A comparison of identifying NDE from $\bbZ = \{P(\*V), P(\*V_x)\}$ and $\cG$, with $\cG$ being the second graph of Fig.~\ref{fig:gan-id-expl-results}. Three different choices of $\tau$ are shown. Left figure shows results from evaluating the unnested form of NDE, while right figure shows results from performing the natural evaluation of NDE by definition.}
	\label{fig:natural-vs-unnest}
\end{figure}

\subsection{Data Generation}
\label{app:data-gen}

To generate data for our experiments, we use an implementation of what we call a \emph{regional canonical model}, inspired by canonical SCMs \citep{zhang:bareinboim21b}.

\begin{definition}[Regional Canonical Model (RCM)]
    Given a causal diagram $\cG$, a \emph{regional canonical model} $\cM_{\rcm}$ associated with $\cG$ is an SCM $\langle \*V, \*U_{\rcm}, \cF_{\rcm}, P(\*U_{\rcm}) \rangle$ where:
    \begin{enumerate}
        \item $\*U_{\rcm} = \{R_{\*C} : \forall \*C \in \bbC(\cG)\}$, where $\bbC(\cG)$ is the set of maximally confounded cliques of $\cG$ and $R_{\*C} \sim \unif(0, 1)$ for all $R_{\*C} \in \*U_{\rcm}$.
        
        \item For each $V \in \*V$, denote $\*R_V = \{R_{\*C} : \forall \*C \text{ s.t. } V \in \*C\}$. Then, each $V$ is associated with a set of regions $\*I_V = \{\*I_{V, i}\}_{i=1}^{r}$ such that each $\*I_{V, i} = \{[a, b]_{\*C, i} : R_{\*C} \in \*R_V\}$ with $a, b \in [0, 1]$, $a < b$. In words, each $\*I_{V, i}$ is a collection of intervals within 0 to 1 for each of the exogenous parents in $\*R_V$. Furthermore, each $\*I_{V, i}$ is associated with a corresponding function $h_{V, i}: \cD_{\Pai{V}} \rightarrow \cD_{V}$ mapping to values of $V$ from its parents (in $\cG$). Here, $r$ is a hyperparameter which decides the complexity of the model.
        
        \item Each $f^{\rcm}_V \in \cF_{\rcm}$ is defined as
        \begin{equation*}
            f^{\rcm}_V(\pai{V}, \*r_V) =
            \begin{cases}
                h_{V, i^*}(\pai{V}) & i^* \text{ is the largest } i \text{ s.t. } r_{\*C} \in [a, b]_{\*C, i} \quad \forall r_{\*C} \in \*r_{V} \\
                h_{V, \textsc{default}}(\pai{V}) & \text{ if no such } i^* \text{ exists.}
            \end{cases}
        \end{equation*}
    \end{enumerate}
\end{definition}

Explaining this model in words, a uniformly random exogenous variable is assigned to each confounded clique ($R_{\*C}$), and all variables within that clique will take that variable as an exogenous input to its functions. Each variable $V$ is then associated with a set of exogenous variables $\*R_V$ depending on the cliques of which it is a member. The space of the domains of $\*R_V$ form a unit hypercube with dimension $|\*R_V|$, and each $V$ is assigned $r$ rectangular ``regions'' in this hypercube ($\*I_V$). Each region is associated with a function for $V$ ($h_{V, i}$) given its parents, which is the behavior that describes $f^{\rcm}_V$ when the inputted $\*r_V$ falls within that region. If $\*r_V$ does not fall into any of the $r$ regions, then $f^{\rcm}_V$ will take a default behavior described by $h_{V, \textsc{default}}$ instead.

For our experimental settings, we randomly sample an RCM given the graph $\cG$ to be the ground truth, $\cM^*$, with $r=20$. Each interval $[a, b]_{\*C, i}$ is sampled by sampling two numbers from $\unif(0, 1)$ and sorting them. Since all variables are binary vectors, $h_{V, \textsc{default}}$ and each $h_{V, i}$ map each possible input of $\pai{V}$ to a randomly sampled binary vector (each bit distributed $\Bernoulli(0.5)$) corresponding to a value of $V$. All variables in the simulations are single-dimensional binary variables except for $Z$, which is a $d$-dimensional binary vector depending on the setting of $d$ in the experiment. To simulate the ground truth, samples can be drawn from the canonical SCM using the same procedure from Def.~\ref{def:l3-semantics}.

Similar to the canonical SCM in \citet{zhang:bareinboim21b}, the RCM assigns different functional behavior to each variable given its endogenous parents depending on the value of their exogenous parents. As implicated in the proof of Thm.~2.3 in \citet{zhang:bareinboim21b}, such a model is very expressive with sufficiently large variation in the space of exogenous parents, eventually being able to express any SCM on all three layers. In our case, this can be achieved with a large enough choice of $r$. This provides a richer experimental environment when compared to choosing a fixed ground truth $\cM^*$ or sampling $\cM^*$ from a more restricted parametric family of models (which may produce biased results depending on the family).

\subsection{Hypothesis Testing} \label{app:hyp-test}
Alg.~\ref{alg:ncm-solve-ctfid} is a search within the $\Omega(\cG)$ space for two \ncm{} parameterizations, $\bm{\theta}^*_{\min}$ and $\bm{\theta}^*_{\max}$, that minimizes/maximizes the desired query, respectively. After optimization is finished, the corresponding $P(\*y_* \mid \*x_*)$ of the two models can be compared to determine whether an effect is identifiable. With perfect optimization and unbounded resources, these two quantities will be equal in identifiable cases. In practice, the two quantities are unlikely to be exactly equal due to noise in the training procedure, so we rely on a hypothesis testing step such as 
\begin{equation}
    \label{eq:gap-test}
|f(\widehat{M}(\bm{\theta}_{\mathrm{max}})) - f(\widehat{M}(\bm{\theta}_{\mathrm{min}}))| < \tau
\end{equation}
for quantity of interest $f$ and a certain threshold $\tau$. This threshold is similar to a significance level in statistics and can be adjusted depending on the severity of implications of false positives or negatives. In general, $\tau$ can be determined by analyzing empirical results in which the identification ground truth is known, and we choose $\tau=0.05$ in our experiments for all cases.

\subsection{GAN-NCM Experiments}

In this section, we describe the architecture of the implicit GAN-NCM as described in Sec.~\ref{sec:ncm-estimation}.

\subsubsection{Model Architecture} \label{app:architecture}

The implementation of the GAN-NCM used in the experiments specifically utilizes Wasserstein GANs with gradient penalty \citep{gulrajani2017improved}. This choice was made to avoid the problem of mode collapse, a frequent problem that arises when training GANs. Wasserstein GANs (WGANs) \citep{arjovsky2017wasserstein} minimize Earth-Mover distance via the  Kantorovich-Rubinstein duality \citep{villani2009optimal}, producing the objective:
\begin{equation}
    \min_G \max_{D \in \cD_D} \bbE_{\*x \sim P_r}[D(\*x)] - \bbE_{\tilde{\*x} \sim P_g}[D(\tilde{\*x})],
\end{equation}
where $G$ is a generating model, $D$ is a discriminatory model (called a critic in this instance), $\cD_D$ is the set of 1-Lipshitz functions, $P_r$ is a real distribution (i.e. from $\cM^*$), and $P_g$ is the distribution induced by $G$.

Wasserstein GAN with gradient penalty (WGAN-GP) removes the hard 1-Lipschitz constraint over $\cD_D$ and enforces this constraint softly by using an additional penalty term on the gradient of the critic in the objective, namely
\begin{equation}
    L = \bbE_{\tilde{\*x} \sim P_g}[D(\tilde{\*x})] - \bbE_{\*x \sim P_r}[D(\*x)] + \lambda_{\gp}\bbE_{\hat{\*x} \sim P_{\gp}} \left[ \left( \norm{\nabla_{\hat{\*x}} D(\hat{\*x})}_2 - 1 \right)^2 \right],
\end{equation}
where data points from $P_{\gp}$ is sampled uniformly along straight lines between pairs of points sampled from $P_g$ and $P_r$, and $\lambda_{\gp}$ is a hyperparameter regulating the strength of this penalty. This objective is minimized in the optimization of the critic model. In our case, we use a generalized version of this objective:
\begin{equation}
    \label{eq:modified-wgangp}
    L = \bbE_{\tilde{\*x} \sim P_g}[D(\tilde{\*x})] - \bbE_{\*x \sim P_r}[D(\*x)] + \lambda_{\gp}\bbE_{\hat{\*x} \sim P_{\gp}} \left[ \left( \norm{\max \left(0, \nabla_{\hat{\*x}} D(\hat{\*x})\right)}_2 - c \right)^2 \right],
\end{equation}
where the desired Lipschitz constant $c$ can be adjusted, and only gradients larger than $c$ are penalized. For our experiments, we use $c=0.01$ and $\lambda_{\gp} = 10$, which produced the best results while hyperparameter tuning.

In a GAN-NCM $\hM = \langle \widehat{\*U}, \*V, \widehat{\cF}, P(\widehat{\*U}) \rangle$, the noise $P(\widehat{\*U})$ and functions $\cF$ collectively form a generative model over $\*V$. With the mutilation procedure, the same noise and functions form a generative model over $\*V_{\*x}$, where the functions for variables in $\*X$ are set to fixed values. Hence, we treat the parameters of $\widehat{\cF}$ as the parameters of the generator $G$ in the GAN objective. Another fully-connected multi-layer perceptron (MLP) is used as the discriminator/critic, $D$.

When fitting a specific data distribution $P(\*V_{\*x})$, the generator ($\widehat{\cF}_{\*x}$) attempts to maximize $\bbE_{\tilde{\*x} \sim P^{\hM}(\*V_{\*x})}[D(\tilde{\*x})]$, the expected score assigned by the critic for generated samples. The critic ($D$) attempts to minimize the objective in Eq.~\ref{eq:modified-wgangp}.

In cases where multiple datasets are fitted ($|\bbZ| > 1$), one could create a critic network for each dataset, but for the sake of computational efficiency, we use the same critic network $D$ for all datasets, with an additional input of an index clarifying which dataset is being evaluated. We did not find this to have any significant impact to performance.

In our experiments, each function $\hat{f}_{V_i} \in \widehat{\cF}$ is a fully connected multi-layer-perceptron (MLP) with 2 hidden layers of size 64, layer normalization \citep{ba2016layer} applied after each layer, and ReLU activations \citep{nair2010rectified} after each hidden layer. These choices were made as best practices for implementing WGAN-GP. More layers of larger size can be used for more complicated data. Since we work with binary data, a sigmoid activation function is placed at the end to constrain values between 0 and 1. Values are rounded at evaluation time. To simplify training, we increase the dimensionality of each $\widehat{U} \in \widehat{\*U}$ to 2. We recommend using larger dimensions for cases where bidirected cliques are very large, and a single $\widehat{U}$ variable is inputted to several functions.

For the critic $D$, we use an MLP with 2 hidden layers of size $64 \times |\*V|$, which scales depending on the number of variables in the dataset. We also apply layer normalization after each hidden layer and use ReLU activations. We do not use an activation after the last layer.

Weights of all networks are initialized using Glorot initialization \citep{glorot2010understanding}. The models are trained using the Adam optimizer \citep{KingmaB14}.

\subsubsection{Query Loss} \label{app:query-loss}

Since variables in our experiments are binary vectors, we compute $\bbD_Q$ as in Eq.~\ref{eq:ncm-div-loss} by using log loss. For example, if $Q = P(Y = 1)$, let $\hQ = \{\hat{y}_j\}_{j=1}^m$ be $m$ samples of $Y$ from the NCM $\hM$. Note that all outputs from $\hM$ are real numbers between 0 and 1 as a result of the sigmoid activation. If the goal was to maximize the query, we would compute
\begin{equation*}
    \bbD_Q(\hQ, Q) = \bbD_Q(\{\hat{y}_j\}_{j=1}^m, 1) = \sum_{j=1}^m -\log(\hat{y}_j)
\end{equation*}
which would approach 0 as $\hat{y}_j \rightarrow 1$ and approach $+\infty$ as $\hat{y}_j \rightarrow 0$. Penalizing this term would encourage the optimization process to generate samples of $\hat{y}_j$ closer to 1. In the case of minimization, we would replace $-\log(\hat{y}_j)$ with $-\log(1 - \hat{y}_j)$.

More generally, denote $\hat{y}_{X = x \mid Z = z}$ as a sample from $Y^{\widehat{M}}_{X = x}(\widehat{\*U})$ under the condition that $Z^{\widehat{M}}(\widehat{\*U}) = z$. In other words, $\hat{y}_{X = x \mid Z = z}$ is a sample from the NCM under training, $\widehat{M}$, under the intervention $X = x$ such that under the same sample of $\widehat{\*U}$, $Z$ evaluates to $z$ (obtained via rejection sampling). Then, for each Monte Carlo sample of the query, we add the corresponding loss term in Table \ref{tab:query-losses} depending on the query and whether we are minimizing or maximizing.

\begin{table}[h]
    \centering
    \begin{tabular}{l|l|l|l}
    \hline \hline
    Query & Calculation & Min Loss & Max Loss \\ \hline \hline
    ATE        & $P(Y_{X = 1} = 1)$          & $-\log(1 - \hat{y}_{X = 1})$        & $-\log(\hat{y}_{X = 1})$        \\ 
               & $-P(Y_{X = 0} = 1)$         & $-\log(\hat{y}_{X = 0})$            & $-\log(1 - \hat{y}_{X = 0})$      \\ \hline
    ETT        & $P(Y_{X = 1} = 1 \mid X = 1)$          & $-\log(1 - \hat{y}_{X = 1 \mid X = 1})$        & $-\log(\hat{y}_{X=1 \mid X=1})$        \\
               & $-P(Y_{X = 0} = 1 \mid X = 1)$ & $-\log(\hat{y}_{X = 0 \mid X = 1})$ & $-\log(1 - \hat{y}_{X=0 \mid X=1})$ \\ \hline
    NDE        & $P(Y_{X = 1, W_{X = 0}} = 1)$ & $-\log(1 - \hat{y}_{X=1, W_{X=0}})$        & $-\log(\hat{y}_{X=1, W_{X=0}})$        \\ 
               & $-P(Y_{X = 0} = 1)$ & $-\log(\hat{y}_{X=0})$ & $-\log(1 - \hat{y}_{X = 0})$ \\ \hline
    CtfDE        & $P(Y_{X = 1, W_{X = 0} = 1} \mid X = 1)$         & $-\log(1 - \hat{y}_{X = 1, W_{X = 0} \mid X = 1})$    & $-\log(\hat{y}_{X = 1, W_{X = 0} \mid X = 1})$        \\
               & $-P(Y_{X = 0} = 1 \mid X = 1)$ & $-\log(\hat{y}_{X = 0 \mid X = 1})$ & $-\log(1 - \hat{y}_{X = 0 \mid X = 1})$ \\ \hline
    \end{tabular}
    \caption{Log loss terms for $\bbD_Q$ in Eq.~\ref{eq:ncm-div-loss} for each of the queries in Fig.~\ref{fig:gan-id-expl-results}. Note that if $Y$ is a binary variable, then $\bbE[Y] = P(Y = 1)$.}
    \label{tab:query-losses}
\end{table}

\subsubsection{Identification Experiment Procedure}

We ran 20 trials for each setting, each with 4 reruns for the sake of hypothesis testing (see Sec.~\ref{app:hyp-test}). One trial of the experiment begins with by sampling $n=10^4$ data points from a randomly generated RCM (see Sec.~\ref{app:data-gen}). In the 1D case, we enforce positivity to accurately reflect our assumptions. In the 16D case, positivity is unreasonable for low sample sizes, so this is not enforced. This may be reflected in a lower accuracy in ID cases, since some identifiable cases are no longer identifiable when positivity does not hold.

For each rerun, one model is initialized and trained to fit the data while simultaneously maximizing the query, then a second model is initialized and trained to fit the data with simultaneously minimizing the query. Results are computed afterwards from the two models.

Each training epoch follows the procedure in Alg.~\ref{alg:ncm-learn-pv}. For each dataset in $\bbZ$, a batch is sampled from the real dataset alongside a batch from the corresponding distribution from the NCM, $\hM$. These batches are used to train the critic, $D$, as discussed in Sec.~\ref{app:architecture}. Following this, another batch is sampled from $\hM$ and evaluated by the critic. The resulting loss is summed across all datasets and added to the query loss explained in Sec.~\ref{app:query-loss}, multiplied by $\lambda$. This loss is used to train the generator, or in other words, the functions of $\widehat{\cF}$.

We use a learning rate of $\eta = 2 \cdot 10^{-5}$ and a batch size of 256 chosen from hyperparameter tuning results, but performance is relatively stable with other choices of learning rate and batch size when using the Adam optimizer. We run for 1000 epochs, but more epochs can be used for better results. We start $\lambda$ at $10^{-4}$ and exponentially decrease to $10^{-5}$ by the end of training. A wider range can be used for $\lambda$ given more epochs, but using values that are too extreme may disrupt the GAN training process. We found that with more datasets in $\bbZ$, the total sum of losses from $\bbD_P$ overpowers the query loss from $\bbD_Q$ compared to when $\bbZ$ has fewer datasets. We account for this by multiplying $\lambda$ by a factor of $|\bbZ|^2$. We use $m=10^4$ Monte Carlo samples for estimating Eq.~\ref{eq:ncm-ctf-eval} during training, but more can be used for a more accurate result at the expense of computational time.

\subsubsection{Estimation Experiment Procedure}
For estimation experiments, the procedure is largely the same as the identification experiments except only one model is trained, the query loss $\bbD_Q$ is not computed, and optimization is purely to fit the datasets in $\bbZ$. For each setting in Fig.~\ref{fig:gan-est-expl-results}, we ran 5 different settings of $n$, increasing exponentially from $10^3$ to $10^5$. We ran all settings for $d=1$ and $d=16$. We ran 10 trials for each setting, with means and confidence intervals computed across these 10 trials.

We use a learning rate of $\eta = 2 \cdot 10^{-5}$, a batch size of 1000, and run for up to 3000 epochs with early stopping. We use $m=10^6$ Monte Carlo samples to evaluate the query from $\hM$ via Eq.~\ref{eq:ncm-ctf-eval} after training.

\subsection{MLE-NCM Experiments}

\subsubsection{Explicit Counterfactual Modeling} \label{app:explicit-ncm}

The GAN-NCM is an \emph{implicit} version of the NCM, where it outputs direct samples of the data. This is in contrast to an \emph{explicit} version of the NCM such as the maximum likelihood estimation and which explicitly outputs the probabilities and likelihood values \citep{xia:etal21}. An explicit approach requires modifications to the NCM in contrast to what is stated in Def.~\ref{def:gncm}. Notably, the functions must be adapted to output probabilities instead of values from the variable domains, and sampling is not as straightforward (e.g., a trick through Gumbel random variables may be needed, and samples will not have gradients). This means that the method outlined in Alg.~\ref{alg:ncm-learn-pv} is incompatible, and accomplishing the same task using an explicit model will require a significantly more complicated procedure. First, recall the construction of an MLE-NCM (generalized to high dimensions):

$\widehat{M}(\bm{\theta}) := $
\begin{eqnarray}\label{eq:mle-ncm}
    \begin{cases}
        \mathbf V &:= \mathbf V, \hspace{+0.05in}
        \widehat{\*U} := \{ U_{\mathbf C}: \mathbf C \in \bbC(\mathcal G) \} \cup \{ G_{V_i} : V_i \in \mathbf V \}, \\
        \widehat{\mathcal F} &:= \left\{ f_{V_i} := \arg \max_{v_i \in \cD_{V_i}} g_{V_i = v_i} +
            \log \sigma(\phi_{V_i}(\pai{V_i}, \ui{V_i}^c; \theta_{V_i}))_{v_i} : V_i \in \mathbf{V}\right\}, \\
        P(\widehat{\mathbf U}) &:= \{ U_{\mathbf C} \sim \mathrm{Unif}(0, 1) : U_{\mathbf C} \in \mathbf U \} \; \cup \\
         & \hspace{0.17in}  \{ G_{V_i = v_i} \sim \mathrm{Gumbel}(0, 1) : V_i \in \mathbf V, v_i \in \cD_{V_i} \},
    \end{cases}
\end{eqnarray}

where $\sigma: \mathbb R^d \to (0, 1)^d$ is the softmax activation function; $\bbC(\mathcal G)$ is the set of maximal confounded cliques of $\mathcal G$; each $G_{V_i = v_i}$ is a standard Gumbel random variable \citep{gumbel1954statistical}; each $\phi_{V_i}(\cdot; \theta_{V_i})$ is a neural net parameterized by $\theta_{V_i} \in \bm{\theta}$; $\pai{V_i}$ are the values of the parents of $V_i$; and $\ui{V_i}^c$ are the values of $\Ui{V_i}^c := \{ U_{\mathbf C} : U_{\mathbf C} \in \mathbf U \text{ s.t. } V_i \in \mathbf C \}$.

Let $P(\*y_*) = P(\*y_{1[\*x_1]}, \*y_{2[\*x_2]}, \dots)$ be an unconditional counterfactual quantity. Then, one can compute
\begin{equation}
    \label{eq:mle-estimate-marg}
    P^{\hM}(\*y_*) = \sum_{\*v_j = (\*w_j, \*y_j) \mid \forall j, \forall \*w_j \in \cD_{\*V \setminus \*Y_j}}P^{\hM} \left(\bigwedge_{j} \*V_{\*x_j} = \*v_j \right).
\end{equation}
Denoting $P^{\hM}(\*v_*) := P^{\hM} \left(\bigwedge_{j} \*V_{\*x_j} = \*v_j \right)$ and $\*v_j = (v_{1j}, v_{2j}, \dots, v_{|\*V|j})$, this can be computed as
\begin{equation}
    P^{\hM}(\*v_*) = \bbE_{P(\*U^c)} \left[\prod_{v_{ij} \mid \forall j, \forall V_i \in \*V \setminus \*X_j} \sigma_{v_{ij}}\right] \approx \frac{1}{m} \sum_{k=1}^m \prod_{v_{ij} \mid \forall j, \forall V_i \in \*V \setminus \*X_j} \sigma_{v_{ij}},
\end{equation}
where $\sigma_{v_{ij}}$ denotes $\sigma(\phi_{V_{i}}(\pai{V_i, j}, \ui{V_i, k}^c; \theta_{V_i}))_{v_i}$ ($\pai{V_i, j}$ refers to the corresponding values of $\Pai{V_i}$ in $\*v_j$ from Eq.~\ref{eq:mle-estimate-marg}), the likelihood output of the model, $P^{\hM}(V_i = v_{ij} \mid \pai{V_i, j}, \ui{V_i, k}^c)$.
The right hand side of the equation is a Monte-Carlo approximation of the expectation over $P(\*U^c)$ using $k \in \{1, \dots, m\}$ samples of $\*U^c$, the set of uniform random variables of $\widehat{\*U}$.

A conditional counterfactual can then be computed as
\begin{equation}
    \label{eq:mle-ncm-estimate}
    P^{\widehat{M}}(\*y_* \mid \*x_*) = \frac{P^{\widehat{M}}(\*y_*, \*x_*)}{P^{\widehat{M}}(\*x_*)}.
\end{equation}

We work in the log space for numerical stability. Denote $\{\hP^{\cM^*}(\*V_{\*z_k})\}_{k=1}^{\ell}$ as the data sets of $\bbZ = \{P^{\cM^*}(\*V_{\*z_k})\}_{k=1}^{\ell}$, where $\*v_{\*z_k, i}$ refers to the $i$th data point of $\hP^{\cM^*}(\*V_{\*z_k})$ out of $n_k$ total data points. Denote $Q = P(\*y \mid \*x)$ as the query of interest. The MLE-NCM can be trained to fit the data sets as well as maximize the query via the loss
\begin{equation}
    \label{eq:mle-ncm-loss}
    L \left(\widehat{M}, \{\hP^{\cM^*}(\*V_{\*z_k})\}_{k=1}^{\ell} \right) := \sum_{k = 1}^{\ell} \frac{1}{n_k}\sum_{i=1}^{n} - \log P^{\widehat M}(\mathbf v_{\*z_k, i}) - \lambda \log P^{\widehat{M}}(\*y_* \mid \*x_*)
\end{equation}
where $\lambda$ is set initially to a high value and decreases during training. The left term is the negative log-likelihood of each data point in each data set, and the right term is the negative log-likelihood of the query. To minimize the query, we instead subtract $\lambda \log \left( 1 - P^{\widehat{M}}(\*y_* \mid \*x_*)\right)$ in the right term. The entire training procedure is shown in Alg.~\ref{alg:mle-ncm-train}.

\begin{figure}

\IncMargin{1em}
\begin{algorithm}[H]

    \DontPrintSemicolon
    \SetKw{notsymbol}{not}
    \SetKwData{ncmdata}{$\widehat{M}$}
    \SetKwData{paramdata}{$\bm{\theta}$}
    \SetKwData{pdata}{$\hat{p}$}
    \SetKwData{qdata}{$\hat{q}$}
    \SetKwData{lossdata}{$L$}
    \SetKwFunction{ncmfunc}{NCM}
    \SetKwFunction{estimate}{Estimate}
    \SetKwFunction{consistent}{Consistent}
    \SetKwFunction{sample}{Sample}
    \SetKwInOut{Input}{Input}
    \SetKwInOut{Output}{Output}
    
    \Input{Data $\{\hP^{\cM^*}(\*V_{\*z_k}) = \{\*v_{\*z_k, i}\}_{i=1}^{n_k}\}_{k=1}^{\ell}$, query $Q = P(\*y_* | \*x_*)$, causal diagram $\cG$, number of Monte Carlo samples $m$, regularization constant $\lambda$, learning rate $\eta$, training epochs $T$}
    \BlankLine
    $\hM \gets$ \ncmfunc{$\*V, \cG$} \tcp*{from Eq.~\ref{eq:mle-ncm}}
    Initialize parameters $\bm{\theta}_{\min}$ and $\bm{\theta}_{\max}$\;
    \For{$t \gets 1$ \KwTo $T$}{
        \For{$k \gets 1$ \KwTo $\ell$}{
            \For{$i \gets 1$ \KwTo $n$}{
                \tcp{\estimate from Eq.~\ref{eq:mle-ncm-estimate}}
                $\pdata_{\min} \gets$ \estimate{$P^{\widehat{M}(\bm{\theta}_{\min})}(\*v_{\*z_k, i}), m$}\;
                $\pdata_{\max} \gets$ \estimate{$P^{\widehat{M}(\bm{\theta}_{\max})}(\*v_{\*z_k, i}), m$}\;
                $\qdata_{\min} \gets$ \estimate{$P^{\widehat{M}(\bm{\theta}_{\min})}(\*y_* \mid \*x_*), m$}\;
                $\qdata_{\max} \gets$ \estimate{$P^{\widehat{M}(\bm{\theta}_{\max})}(\*y_* \mid \*x_*), m$}\;
                \tcp{\lossdata from Eq.~\ref{eq:mle-ncm-loss}}
                $\lossdata_{\min} \gets -\log \pdata_{\min} - \lambda \log(1 - \qdata_{\min})$\;
                $\lossdata_{\max} \gets -\log \pdata_{\max} - \lambda \log \qdata_{\max}$\;
                $\paramdata_{\min} \gets \paramdata_{\min} - \eta \nabla \lossdata_{\min}$\;
                $\paramdata_{\max} \gets \paramdata_{\max} - \eta \nabla \lossdata_{\max}$\;
            }
        }
    }
    \caption{Training MLE-NCM}
    \label{alg:mle-ncm-train}
\end{algorithm}
\DecMargin{1em}
\end{figure}


Nested counterfactuals can be computed as described in Appendix \ref{app:nested-ctf}, but in training, all computations of nested terms will not be able to have gradients due to the sampling requirement. 

\subsubsection{Model Architecture}
Following the discussion in Sec.~\ref{app:explicit-ncm}, we construct each $\hat{f}_{V_i} \in \widehat{\cF}$ using a masked autoencoder for density estimation (MADE) \citep{pmlr-v37-germain15}. We use Andrej Karpathy's implementation of MADE \citep{karpathy-made}, which is provided with the MIT License. Each function has 2 hidden layers of size 64, and outputs are passed through a log sigmoid activation to be converted into a probability. Weights are initialized using Glorot initialization. The models are trained using the Adam optimizer.

\subsubsection{Identification Experiment Procedure}

The identification procedure largely matches that of the GAN-NCM. We ran 20 trials for each setting, each with 4 reruns, and $n=10^4$ data points from the generating RCM. Positivity is enforced.

For each rerun, one model is initialized and trained to fit the data while simultaneously maximizing the query, then a second model is initialized and trained to fit the data with simultaneously minimizing the query. Results are computed afterwards from the two models.

Each training epoch follows the procedure in Alg.~\ref{alg:mle-ncm-train}. For each dataset in $\bbZ$, a likelihood value for each datapoint type is evaluated from the NCM, $\hM$. These likelihood values are maximized, weighted by the frequency of the datapoint. For the query loss, the likelihood values of the query distribution are either minimized or maximized. The full batch is used each epoch.

We use a learning rate of $\eta = 4 \cdot 10^{-3}$ and run for 1000 epochs. We start $\lambda$ at $1$ and exponentially decrease to $10^{-3}$ by the end of training. We use $m=10^4$ Monte Carlo samples for estimating Eq.~\ref{eq:mle-ncm-estimate} during training.

\subsubsection{Estimation Experiment Procedure}

As with the GAN-NCM, for estimation experiments, only one model is trained, the query loss $\bbD_Q$ is not computed, and optimization is purely to fit the datasets in $\bbZ$. For each setting in Fig.~\ref{fig:gan-est-expl-results}, we ran 5 different settings of $n$, increasing exponentially from $10^3$ to $10^5$, but only for $d=1$ due to tractability issues with higher dimensions. We ran 10 trials for each setting, with means and confidence intervals computed across these 10 trials.

We use the full batch per epoch. We use a learning rate of $\eta = 4 \cdot 10^{-3}$, and run for up to 3000 epochs with early stopping. We use $m=10^6$ Monte Carlo samples to evaluate the query from $\hM$ via the Gumbel-max trick.

\subsection{Runtime Experiment Procedure}
For the runtime experiments procedure, we conducted NCM-GAN and NCM-MLE on the same dataset generated in the first graph of Fig.~\ref{fig:gan-id-expl-results} as the dimensionality $d$ of $Z$ scales from 1 to 7. Similar to estimation experiments, only one model is trained, the query loss is not computed, and optimization is purely to fit the datasets. We record the first 100 epochs of training time for both methods. For each setting in Fig.~\ref{fig:runtime-results}, positivity is enforced when generating data and $n=10000$. We ran 4 trials for each setting, with means and confidence intervals computed across these 4 trials.

For GAN-NCM, we use a learning rate of $\eta = 2 \cdot 10^{-5}$ and a batch size of 1000. For MLE-NCM, we use a learning rate of $\eta = 4 \cdot 10^{-3}$ and the full batch per epoch.
\subsection{Hardware}

The MLE-NCM was trained on NVIDIA Tesla V100 GPUs provided by Amazon Web Services. All other experiments are run via jobs on a cluster, and each thread may be assigned either an NVIDIA K80 or NVIDIA P100 GPU. In total, approximately 10000 GPU hours were used for experiments.

\clearpage
\section{Connection to Related Works} \label{app:related-works}

This paper evaluates causal queries directly through a parameterized model in the form of an SCM trained on available data from the true SCM $\cM^*$, which we call a proxy model-based approach. In this section, we discuss the differences between the NCM presented in this paper and the models used in other works that also implement a proxy approach.

First, we note that many of these works implicitly benefited from the constraints of enforcing a causal diagram into the model structure without proving any properties about these constraints. One contribution of our work is to provide some explicit results about these constraints via Thms.~\ref{thm:gl3-consistency} and \ref{thm:l3-g-expressiveness}. Second, and more importantly, most of these works operated under different assumptions and problem settings, to be discussed in individual sections. When applied to our problem setting, these methods may not be as general as the method discussed in this paper. Some of these losses of generality are highlighted in the following propositions and illustrated through the further examples provided in App. \ref{app:examples}.

\begin{proposition}
    \label{prop:markov-expressive}
    Any model in which the Markovianity assumption (also known as causal sufficiency) is enforced is not $\cL_3$-$\cG$ expressive (i.e. an equivalent statement of Thm.~\ref{thm:l3-g-expressiveness} cannot be proven about such a model class).
    \hfill $\blacksquare$
    \begin{proof}
        See counterexample provided in Example \ref{ex:markov-expressive} in Sec.~\ref{app:examples-properties}.
    \end{proof}
\end{proposition}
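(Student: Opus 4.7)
The plan is to establish Proposition~\ref{prop:markov-expressive} by exhibiting a single small counterexample: an SCM $\cM^*$ inducing some causal diagram $\cG$ such that no SCM in the Markovian model class can be $\cL_3$-consistent with $\cM^*$. The strategic idea is that Markovianity collapses certain $\cL_2$ (and hence $\cL_3$) distinctions by forcing every interventional quantity to coincide with the matching observational conditional, whereas a non-Markovian $\cM^*$ can violate this collapse arbitrarily. Since $\cL_2(\cM) \subseteq \cL_3(\cM)$ in the sense that every single-world interventional distribution $P(\*V_{\*x})$ is marginally obtainable from a $\cL_3$ distribution, it suffices to exhibit a mismatch at Layer~2.

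First, I take the simplest graph that admits genuine hidden confounding, the bow graph $\cG = \{X \to Y,\ X \Confounded Y\}$. Next, I specify $\cM^*$ with binary variables as follows: a single exogenous variable $U \sim \Bernoulli(1/2)$ is shared by both mechanisms, and $f_X(u) = u$, $f_Y(x,u) = x \oplus u$. By Def.~\ref{def:scm-cg}, $\cM^*$ induces exactly $\cG$. A direct calculation gives the observational distribution $P^{\cM^*}(X=1) = 1/2$ and $P^{\cM^*}(Y=0) = 1$ (since $X \equiv U$ forces $X \oplus U \equiv 0$), while the $\cL_2$ quantity $P^{\cM^*}(Y_{X=1} = 0) = P(1 \oplus U = 0) = P(U=1) = 1/2$.

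Now I argue that no Markovian SCM $\cM$ over $\{X,Y\}$ can match $\cM^*$ on both $P(X,Y)$ and $P(Y_{X=1})$. In any Markovian model we have independent exogenous noises $U_X, U_Y$ with $X = f_X(U_X)$ and $Y = f_Y(X, U_Y)$, so that $Y_{X=x}$ depends only on $U_Y$, which is independent of $U_X$ (and hence of $X$). Consequently $P^{\cM}(Y_{X=x}=y) = P^{\cM}(Y=y \mid X=x)$ for all $x,y$. Matching $\cM^*$ on $\cL_1$ forces $P^{\cM}(Y=0 \mid X=1) = 1$, and therefore $P^{\cM}(Y_{X=1}=0) = 1 \neq 1/2 = P^{\cM^*}(Y_{X=1}=0)$. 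Hence no Markovian $\cM$ can be $\cL_2$-consistent with $\cM^*$, and \emph{a fortiori} none can be $\cL_3$-consistent with $\cM^*$. This refutes any analogue of Thm.~\ref{thm:l3-g-expressiveness} for Markovian model classes.

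The main obstacle is being careful about what the model class is allowed to do: a Markovian class might try to induce a different diagram than $\cG$ (with the bidirected edge dropped), so the counterexample must not rely on graph-induction per se but on the distributional signature $P(Y_x) = P(Y\mid x)$ that Markovianity imposes regardless of which diagram the proxy claims. The calculation above is graph-free on the Markovian side, which dodges this subtlety. A minor secondary care is ensuring positivity is not needed; since the mismatch is on a deterministic functional value, the argument works even in the degenerate support.
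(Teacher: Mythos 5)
Your proposal is correct and takes essentially the same route as the paper: the paper's own proof (Example~\ref{ex:markov-expressive}) is likewise a bow-graph counterexample in which unobserved confounding makes $P(Y_x)$ differ from what Markovianity forces it to be, so that no Markovian SCM can match $\cM^*$ on $\cL_1$ and $\cL_2$ simultaneously (the paper uses a confounded SCM where $X$ has no causal effect, you use an XOR construction — both work). Your reversed-orientation worry is handled in one line anyway, since a Markovian model with $Y \to X$ matching $\cL_1$ has $Y \equiv 0$ and hence $P(Y_{X=1}=0)=P(Y=0)=1 \neq 1/2$, so the conclusion is unaffected.
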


\begin{proposition}
    \label{prop:gid}
    There exist settings in which an $\cL_3$ query that is not identifiable from $\cL_1$ data ($P(\*V)$) alone may be identifiable given data sets from $\cL_2$.
    \hfill $\blacksquare$
    \begin{proof}
        Examples include ID settings 9, 10, 11, and 12 from Fig.~\ref{fig:gan-id-expl-results}, in which their counterparts without the $P(\*V_{x})$ datasets are not identifiable. Another example is shown in Example~\ref{ex:id} in Sec.~\ref{app:examples-properties}.
    \end{proof}
\end{proposition}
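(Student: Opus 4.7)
The plan is to exhibit a concrete triple $(\cG, Q, \bbZ)$ where the counterfactual query $Q$ is not identifiable from $\bbZ_1 = \{P(\*V)\}$ but becomes identifiable from $\bbZ_2 = \{P(\*V), P(\*V_{x})\}$. A natural candidate is the mediation graph augmented with unobserved confounding between treatment and outcome: $X \to W \to Y$ with $X \Confounded Y$, taking $Q$ to be a cross-world quantity such as the natural direct effect $P(Y_{x', W_x}) - P(Y_{x})$ or the effect of treatment on the treated $P(Y_{x'} \mid x)$. These are exactly the kinds of quantities tested in settings 9--12 of Fig.~\ref{fig:gan-id-expl-results}, so the example aligns with experiments already run elsewhere in the paper.

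For the non-identifiability direction from $\bbZ_1$, I would construct two SCMs $\cM_1, \cM_2$ inducing $\cG$ with $P^{\cM_1}(\*V) = P^{\cM_2}(\*V)$ yet $P^{\cM_1}(Q) \ne P^{\cM_2}(Q)$. The standard recipe is to fix the same observational marginal $P(X,W,Y)$ while varying the response-function distribution over the exogenous confounder $U_{XY}$: because counterfactuals containing a sub-term $Y_{x'}$ depend on cross-world correlations mediated by $U_{XY}$ that are not pinned down by $P(\*V)$, two such SCMs can be engineered over small discrete exogenous supports, and the disagreement on $Q$ can be verified by direct evaluation of Def.~\ref{def:l3-semantics}.

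For the identifiability direction, I would appeal to the existing symbolic counterfactual identification literature \citep{shpitser:pea09, correa:etal21}. Given $P(\*V_{x})$ alongside $P(\*V)$, several previously unidentifiable cross-world counterfactuals become expressible via the counterfactual unnesting theorem (Eq.~\ref{eq:cut}) combined with do-calculus applied to the augmented set of available distributions; this furnishes a closed-form expression for $Q$ in terms of $\bbZ_2$ and hence identifiability. Thm.~\ref{thm:ncm-ctfid-equivalence} can then be invoked if a neural-identifiability statement is desired instead of the purely symbolic one.

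The main obstacle is balancing simplicity against correctness: the example must be non-degenerate (to showcase a genuine gain from adding $\cL_2$ data), hand-checkable (two small SCMs agreeing on $P(\*V)$ but distinguishable on $Q$), and covered by a clean symbolic identifier once $P(\*V_x)$ is added. If constructing a transparent two-SCM counterexample proves algebraically messy, I would fall back on one of the settings 9--12 in Fig.~\ref{fig:gan-id-expl-results}, citing the corresponding derivations to certify both the failure of $\cL_1$-identifiability and the success of $\cL_2$-identifiability, thereby delivering the existential claim required by the proposition.
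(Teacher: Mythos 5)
Your proposal is correct and follows essentially the same route as the paper: the paper proves the proposition by pointing to concrete instances (the ID settings 9--12 of Fig.~\ref{fig:gan-id-expl-results}, whose $P(\*V)$-only counterparts are non-ID, and the worked Example~\ref{ex:id}, where an NDE-type query is derived symbolically via unnesting, c-factor decomposition, and do-calculus once the $\cL_2$ distributions are added), which is exactly the example-plus-symbolic-derivation strategy you outline, with your fallback being literally the paper's own proof. The only cosmetic difference is that your primary candidate places the unobserved confounder between treatment and outcome while Example~\ref{ex:id} places it between treatment and mediator; both instantiations work, and your explicit two-SCM construction for the $\cL_1$ non-identifiability direction is a standard bow-graph-style argument the paper leaves implicit.
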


\begin{proposition}
    \label{prop:markov-nonid}
    There exist settings in which an $\cL_3$ query is not identifiable even in the Markovian case and given both $\cG$ and all datasets from $\cL_2$.
    \hfill $\blacksquare$
    \begin{proof}
        Example is shown in Example \ref{ex:non-id} from Sec.~\ref{app:examples-properties}. Further, this is an example of a case where the query cannot even be bounded.
    \end{proof}
\end{proposition}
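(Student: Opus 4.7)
The plan is to build a minimal counterexample in the Markovian two-node diagram $X \to Y$ with binary variables, using the classical response-function parameterization. First I would observe that, under Markovianity in this two-variable DAG, the $\cL_2$ layer collapses to $P(X)$ together with the two interventional marginals $P(Y_{X=x})$ for $x \in \{0,1\}$: interventions on $Y$ yield point masses on $Y$ times $P(X)$, while $P(Y \mid X = x) = P(Y_x)$ follows from the independence of $U_X$ and $U_Y$. Matching any SCM's entire $\cL_2$ therefore reduces to matching these finitely many numbers.

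Next I would parameterize Markovian SCMs on this diagram by the distribution of $U_Y$ over the four response types for $Y$, namely constant-$0$, constant-$1$, identity, and negation, with probabilities $(p_0, p_1, p_2, p_3)$ summing to $1$. The $\cL_2$ constraints become the two linear equations $p_1 + p_3 = P(Y_{X=0}=1)$ and $p_1 + p_2 = P(Y_{X=1}=1)$, leaving one degree of freedom. The cross-world counterfactual $P(Y_{X=0}=0, Y_{X=1}=1)$ equals $p_2$ and varies freely over the remaining feasible interval, which already yields non-identifiability.

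Concretely, I would fix $P(X=1)$ arbitrarily and set $P(Y_{X=0}=1) = P(Y_{X=1}=1) = 1/2$. Then $\cM_1$ with $(p_0, p_1, p_2, p_3) = (1/2, 1/2, 0, 0)$ yields $P(Y_{X=0}=0, Y_{X=1}=1) = 0$, while $\cM_2$ with $(0, 0, 1/2, 1/2)$ yields $1/2$. Both models are Markovian, induce $\cG$, agree on every $\cL_2$ distribution, yet disagree on the target $\cL_3$ query, establishing the proposition.

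The main obstacle is the reduction step underlying the first paragraph: one must carefully argue that no interventional distribution beyond $P(X)$ and the $P(Y_x)$'s can distinguish the two models. This relies on Markovianity to equate $P(Y \mid X)$ with $P(Y_x)$ in a two-variable DAG, so that no observational conditional adds information, together with the fact that $do(Y=y)$ leaves $X$'s distribution unchanged. The side remark in the paper's proof sketch that the query additionally cannot be meaningfully bounded is strictly stronger than non-identifiability and would demand a richer construction, for instance enlarging the alphabet of $Y$ or inserting auxiliary variables so that the feasible range of the target saturates the trivial interval $[0,1]$; producing such an example, rather than the basic construction above, is the delicate extension I would anticipate having to work through.
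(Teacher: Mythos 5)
Your proposal is correct and takes essentially the same route as the paper: Example~\ref{ex:non-id} uses the same Markovian graph $X \to Y$ and, in effect, the same two competing models --- your response-type distributions $(1/2,1/2,0,0)$ and $(0,0,1/2,1/2)$ are exactly the paper's $\hM_1$ and $\hM_2$ --- the only difference being that the paper targets the conditional query $P(Y_{X=1}=1 \mid X=0, Y=0)$ rather than the cross-world joint $P(Y_{X=0}=0, Y_{X=1}=1)$. One small correction to your closing remark: no richer construction is needed for the ``cannot even be bounded'' claim, since that same conditional query already takes the values $0$ and $1$ on your two models, saturating the trivial interval $[0,1]$.
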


We discuss more specifically the differences with each paper in the following subsections. Whenever notation differs between the two works, we use our notation to maintain consistency and make the comparison more transparent.

\subsection{\citet{kocaoglu2018causalgan}}

Similar to our work, \citet{kocaoglu2018causalgan} used an implicit approach to fit a proxy SCM on the data via generative adversarial networks. The method was specifically trained on observational ($\cL_1)$ data and used to evaluate interventional ($\cL_2$) queries in the Markovian setting. Like for a $\cG$-NCM, their approach used a proxy model $\hM = \langle \widehat{\*U}, \*V, \widehat{\cF}, P(\widehat{\*U}) \rangle$, where $\widehat{\*U}$ consists of independent fixed noise variables and each $\hat{f}_{V_i} \in \widehat{\cF}$ is defined such that $\hat{f}_{V_i}$ takes $\Pai{V_i}$ and $\widehat{\*U}_{V_i}$ as input. Here, $\Pai{V_i}$ is the set of parents of $V_i$ in the graph $\cG$, and $\widehat{\*U}_{V_i}$ is a set of noise variables specific to variable $V_i$.

This work introduced two novel GAN architectures for $\cL_2$ inference, CausalGAN and CausalBEGAN, focused on applications in image generation. For this challenging task, the CausalGAN architecture was designed in a unique fashion in which the model was separated into two parts: one for generating the labels, and another for generating the image conditioned on these labels. This was in contrast to having one generator learning the entire joint distribution of labels and images. The approach for learning the image generator is similar to a conditional GAN in that the objective of the generator includes an additional term to maximize the prediction accuracy of a labeling network. However, this often results in a problem they described as \emph{label-conditioned mode collapse}, where the generator may have been complacent in generating a small number of fixed samples that are easy to label. Hence, in addition to this labeler, CausalGAN also introduced an anti-labeler network. The anti-labeler also attempts to label the generated images, but the objective of CausalGAN simultaneously maximizes the accuracy of the labeler while minimizing the accuracy of the anti-labeler. This prevents label-conditioned mode collapse because images that are too easily labeled will fail to minimize the accuracy of the anti-labeler. More formally, for a binary label $l$ with probability $\rho = P(l = 1)$, the objective of the CausalGAN generator is:
\begin{align}
    & \min_G \bbE_{x \sim p_g(x)} \left[ \log \left( \frac{1-D(x)}{D(x)} \right) \right] \nonumber \\
    &- \rho \bbE_{x \sim p_g^1(x)}[\log(D_{LR}(x)] - (1 - \rho)\bbE_{x \sim p_g^0 (x)}[log(1 - D_{LR}(x))] \nonumber \\
    &+ \rho \bbE_{x \sim p_g^1(x)}[\log(D_{LG}(x)] + (1 - \rho)\bbE_{x \sim p_g^0 (x)}[log(1 - D_{LG}(x))],
\end{align}
where $G$ is the generator, $D$ is the discriminator, $p_g(x)$ is the distribution induced by the generator, $p_g^i(x)$ is the distribution conditioned on $l=i$, $D_{LR}$ is the labeler, and $D_{LG}$ is the anti-labeler. The loss is composed of the loss from the discriminator subtracted with the loss from the labeler and adding the loss from the anti-labeler. The paper proved that the generator successfully matched the true conditional image distribution when $G$ was the optimum of this objective \citep[Thm.~2]{kocaoglu2018causalgan}. CausalBEGAN was another architecture introduced where optimization considered image quality margin and label margin. In both cases, interventions were computed using the mutilation procedure on the generating model $\hM$.

CausalGAN and CausalBEGAN were empirically shown to successfully learn the data distribution in several image generation tasks. Experiments demonstrated several settings in which intervening on a label resulted in a different image compared to conditioning on the label.

The reference produced impressive results in generating images under causal interventions using novel implicit GAN architectures. While the approach is similar to our work in that a GAN proxy model is used, there are a number of differences with our work.
\begin{enumerate}
    \item The paper worked in the space of inferring $\cL_2$ quantities given observational data from $\cL_1$. Our work generalizes this setting to inferring any counterfactual query from $\cL_3$ given any combination of datasets from $\cL_1$ and $\cL_2$.
    
    \item Unlike this work, we do not make the Markovianity assumption, meaning we allow unobserved confounding between variables. Many real world settings often have unobserved confounding, so the Markovianity assumption can be limiting. In a situation where the true SCM has unobserved confounding, a model class which enforces Markovianity will not be expressive enough to model the true SCM (Prop.~\ref{prop:markov-expressive}).
    
    \item By \citep[Def.~1]{kocaoglu2018causalgan}, the architecture of CausalGAN took the form of a causal diagram $\cG$, similar to how the $\cG$-constrained NCM is constructed in Def.~\ref{def:gncm}. In our work, we discuss the theoretical properties of the implied constraints via Thms.~\ref{thm:gl3-consistency} and \ref{thm:l3-g-expressiveness}. While $\cG^{(\cL_2)}$-consistency likely held for CausalGAN due to the nature of its construction, this property was not explicitly proven or discussed.
    
    \item The fact that all $\cL_2$ quantities are identifiable from $\cL_1$ data
     was acknowledged \citep[Prop.~1]{kocaoglu2018causalgan}. The identification question constitutes an additional challenge in our paper since we consider layer 3 queries and allow for unobserved confounding. It's understood that without Markovianity, even inferences about $\cL_2$ queries are not always identifiable from $\cL_1$ data. 
\end{enumerate}

\subsection{\citet{pawlowski2020deep}}

\citet{pawlowski2020deep} implemented a deep proxy SCM approach in which each mechanism $\hat{f}_{V_i}$ models a conditional distribution $P(V_i \mid \Pai{V_i})$ under the Markovianity assumption (lack of unobserved confounders). Under this assumption, the joint observational distribution $P(\*V)$ can be factorized as
\begin{equation}
    \label{eq:markov-factorization}
    P(\*V) = \prod_{V_i \in \*V} P(V_i \mid \Pai{V_i}),
\end{equation}
so each mechanism represents one conditional distribution in this factorization. They proposed three novel approaches to modeling this conditional distribution.
\begin{enumerate}
    \item The first approach involved modeling $P(X \mid \Pai{X})$ using normalizing flows. $\widehat{\*U}_{X}$ consists of a noise variable $\epsilon$, and $\hat{f}_{X}$ is an invertible function such that $x = \hat{f}_{X}(\epsilon; \pai{X})$. Then the explicit likelihood could be computed as $p(x \mid \pai{x}) = p(\epsilon) \cdot |\det \nabla_{\epsilon} \hat{f}_X (\epsilon; \pai{x})|^{-1}$, where $\epsilon$ could be computed as $\hat{f}_X^{-1}(x; \pai{x})$.
    
    \item The second approach extended the first approach to high dimensional settings by splitting $\epsilon$ into two independent noise variables, $z$ and $u$, and splitting $\hat{f}_X$ into two components: a non-invertible function $g$ and an invertible function $h$. Then $\hat{f}_X(\epsilon, \pai{x}) = h(u; g(z; \pai{x}), \pai{x})$, where $g$ is expected to capture the high-level aspects of the data. This is helpful in high dimensional settings as the size of the invertible component $h$ can be reduced because working with the outputs of $g$ is easier than directly working with the data. Since $p(x \mid \pai{x})$ is no longer tractable, the model is instead trained using variational inference by maximizing the evidence lower bound (ELBO) with variational distribution $Q(z \mid x, \pai{x})$,
    \begin{equation}
        \log p(x \mid \pai{x}) \geq \bbE_Q(z \mid x, \pai{x}) [\log p(x \mid z, \pai{x})] - D_{\kl}[Q(z \mid x, \pai{x}) || P(z)].
    \end{equation}
    
    \item Although not demonstrated in the paper, a third approach was presented suggesting the use of a GAN-based approach to implicitly model the likelihood.
\end{enumerate}

The method primarily used in the paper was the 2nd approach. They evaluated counterfactual quantities using the 3-step procedure from \citet[Thm.~7.1.7]{pearl:2k}:
\begin{enumerate}
    \item \textbf{Abduction} -- The observations $\*v$ of all variables $\*V$ are used to predict $P(\*U \mid \*v)$, where $P(\*U \mid \*v)$ decomposes as $\prod_{V_i \in V} P(\epsilon_{V_i} \mid v_i, \pai{V_i})$ due to the Markovianity assumption. This is then computed as
    \begin{align}
    P(\epsilon_{V_i} \mid v_i, \pai{V_i}) &= P(z_{V_i} \mid v_i, \pai{V_i})P(u_{V_i} \mid z_{V_i}, v_i, \pai{V_i}) \nonumber \\
    &\approx Q(z_{V_i} \mid e_{V_i}(v_i; \pai{V_i})) \delta(u_{V_i})
    \end{align}
    where $e(v_i; \pai{V_i})$ is a deterministic encoder mapping $v_i$ and $\pai{V_i}$ to $z_{V_i}$, and $\delta$ is a Dirac delta distribution centered at $h_{V_i}^{-1}(v_i; g_{V_i}(z_{V_i}; \pai{V_i}), \pai{V_i})$.
    
    \item \textbf{Action} -- For an intervention over sets of variables $\*X$, each mechanism $\hat{f}_X$ for $X \in \*X$ is replaced either with a fixed value $x$ (atomic) or a surrogate mechanism $\tilde{f}_{X}(\epsilon_X \mid \widetilde{\mathbf{pa}}_X)$ (stochastic).
    
    \item \textbf{Prediction} -- Under this distribution of $P(\*U \mid \*v)$, and with the modified mechanisms, the deep SCM is evaluated to produce a result for the counterfactual quantity $P(\*V_{\*x} \mid \*v)$.
\end{enumerate}

The approach was demonstrated empirically on a synthetic MNIST dataset and on a real world brain imaging dataset.

The paper presented a powerful approach to modeling a deep SCM using normalizing flows and variational inference, allowing for efficient and precise training even in high-dimensional cases. Still, there are a number of differences between this paper and our work.
\begin{enumerate}
    \item Unlike this work, we do not make the Markovianity assumption, meaning we allow unobserved confounding between variables. Many real world settings often have unobserved confounding, so the Markovianity assumption can be limiting. In a situation where the true SCM has unobserved confounding, a model class which enforces Markovianity will not be expressive enough to model the true SCM (Prop.~\ref{prop:markov-expressive}). Moreover, it is not a trivial task to extend the normalizing flow approach in this paper to a case with unobserved confounders. In such a case, Eq.~\ref{eq:markov-factorization} would no longer be valid, and it is not immediately clear how to train invertible mechanisms that can share noise.
    
    \item Like our work, this paper assumed that the causal graph $\cG$ is provided. However, in our work, we discuss the theoretical properties of the implied constraints via Thms.~\ref{thm:gl3-consistency} and \ref{thm:l3-g-expressiveness}, while this paper implicitly benefited from these constraints without discussing these properties. 
    
    \item The identification problem is not considered in this work while it is a major focus in our work. As shown in Prop.~\ref{prop:markov-nonid}, there exist counterfactual quantities that cannot be identified given the graph $\cG$, and all available data from $\cL_1$ and $\cL_2$, even under the Markovianity assumption. If these quantities were to be estimated from a model trained on such data, the result will likely be incorrect and/or misleading. For that reason, we make sure to identify all queries before attempting to estimate them.
    
    \item While this paper only considered input in the form of observational data from $\cL_1$, our work considers cases where additional experimental datasets from $\cL_2$ are provided (even cases where perhaps the observational distribution $P(\*V)$ is not provided). As shown in Prop.~\ref{prop:gid}, there do exist counterfactual queries that are not identifiable given observational data but are identifiable given datasets from $\cL_2$.
    
    \item This paper evaluated counterfactual quantities using the 3-step procedure of abduction, action, and prediction. While this is a sound algorithm for evaluating applicable quantities, it does not apply generally to all counterfactual quantities. Specifically, it only applies in quantities of the form $P(\*Y_{\*x} \mid \*z)$, in which only one intervention is applied before the conditioning bar, and observations are present after the conditioning bar. This excludes quantities such as $P(y_x, y'_{x'})$, which is a joint probability of two counterfactual terms in different worlds (one in $X = x$ and another in $X = x'$), as well as quantities such as $P(y_x \mid y'_{x'})$, in which a term in an intervened world $(X = x')$ is after the conditioning bar.
    
    \item Finally, although an implicit approach via GANs was discussed in  \citep{pawlowski2020deep}, it was not explored. Our paper realizes an implementation of GAN approach in the non-Markovian setting.
\end{enumerate}

\subsection{\citet{witty2021}}

\citet{witty2021} introduced simulation-based identifiability (SBI), which uses a Bayesian approach to solve the identification problem for $\cL_2$ and some $\cL_3$ queries given $\cL_1$ data. In their approach, the goal was to identify a particular query $Q$ given a fixed observational distribution $P(\*V)$ and a causal diagram $\cG$. An SCM $\cM^* = \langle \*U^*, \*V, \cF^*, P(\*U^*) \rangle$ is sampled from a prior over SCMs (specifically their functions and exogenous variables), $P(\cF, \*U)$. In each iteration, $n$ samples are drawn from $P^{\cM^*}(\*V)$, the observational dataset of $\cM^*$. Similar to our work, two parameterized SCMs, $\hM_1$ and $\hM_2$ are optimized to simultaneously fit $P^{\cM^*}(\*V)$ while maximizing the difference in the induced queries of the two SCMs, $\hQ_1$ and $\hQ_2$. This is done by maximizing the objective
\begin{equation}
    L(\hM_1, \hM_2, \{\*v_i\}_{i = 1}^n) = \sum_{i=1}^n \left(\log P^{\hM_1}(\*v_i) + \log P^{\hM_2}(\*v_i)\right) + \lambda |\hQ_1 - \hQ_2|,
\end{equation}
where $\{\*v_i\}_{i=1}^n$ are a collection of $n$ samples from $P^{\cM^*}(\*V)$. Maximizing this objective is equivalent to jointly maximizing the likelihood of the data for both $\hM_1$ and $\hM_2$ as well as maximizing the gap between $\hQ_1$ and $\hQ_2$. Identifiability is concluded if, following a hypothesis testing procedure, $|\hQ_1 - \hQ_2|$ is under a certain threshold. Non-identifiability is concluded otherwise. They proved that $Q$ is identifiable w.r.t. $\cM^*$ if and only if there does not exist any parameterized SCM $\hM$ fitted to the data from $P^{\cM^*}(\*V)$ such that $P^{\hM}(\*V)$ converges to $P^{\cM^*}(\*V)$ as $n \rightarrow \infty$, $Q^{\hM} \neq Q^{\cM^*}$, and $\hM$ is in the support of the prior \citep[Thm.~3.2]{witty2021}.

In non-parametric settings, there exist sound and complete symbolic methods such as do-calculus that could efficiently and correctly identify any $\cL_2$ query \cite{pearl:2k}. Hence, as we understand, the paper primarily focused its attention on parametric settings, where the model class of SCMs was further restricted to specific function classes (e.g. linear), and symbolic approaches to identification under this constrained class were not as well-studied.

For the identification task, SBI shares many similarities with our approach in Alg.~\ref{alg:ncm-solve-ctfid}, notably that they also trained two proxy SCMs to minimize/maximize the query while simultaneously fitting the data. Still, there are many differences between the two works.
\begin{enumerate}
    \item The paper considered a general form of the query $Q(\*V_{\*X = \*x}, \*V_{\*X = \*x'})$, which was a function of the counterfactual terms of $\*V$ in the two worlds of $\*X = \*x$ and $\*X = \*x'$. For example, average treatment effect (ATE) given a binary treatment $X$ could be computed as $Q(\*V_{\*X = \*x}, \*V_{\*X = \*x'}) = \bbE[Y_{X = 1}] - \bbE[Y_{X = 0}]$, which was what was used in their experiments. While in theory, this may have included certain counterfactual queries, these queries were not explored, so the work largely focused on the space of $\cL_2$ queries. Moreover, they only considered data from $\cL_1$. Our work generalizes this setting to inferring any counterfactual query from $\cL_3$ given any combination of datasets from $\cL_1$ and $\cL_2$.
    
    \item Like our work, this paper assumed that the causal graph $\cG$ is provided. However, in our work, we discuss the theoretical properties of the implied constraints via Thms.~\ref{thm:gl3-consistency} and \ref{thm:l3-g-expressiveness}, while this paper implicitly benefited from these constraints without discussing these properties.
    
    \item The paper was largely concerned with identifying quantities in parametric cases (and functions of SCMs were fixed to certain parametric families), while our work is largely focused on the non-parametric case. We make no claims about expressivity or $\cG$-consistency in parametric settings.
    
    \item Our work assumes that there exists a true SCM $\cM^*$, which generates a specific dataset from which we aim to identify and estimate our query. In contrast, \citep{witty2021} specifically aimed to identify a query without a true SCM in mind and instead matched the distribution with an SCM sampled from a prior distribution over the space of SCMs. While estimation could theoretically be performed after concluding identifiability with SBI, this was not a focus of \citep{witty2021}, and the estimation would not be meaningful unless $\cM^*$ corresponded to some true SCM in a real application.
    
    \item In terms of the approach, \citep{witty2021} differs from our work in that the two parametrized models, $\hM_1$ and $\hM_2$ are optimized jointly, while in our case, the two models are optimized separately. The goal in SBI is to maximize the gap between the two models in the query, while Alg.~\ref{alg:ncm-solve-ctfid} does this implicitly when maximizing or minimizing the query in the two models separately. Further, our implementation in practice utilizes GANs, which is not explored in this paper.
\end{enumerate}


\subsection{\citet{xia:etal21}}
\citet{xia:etal21} introduced the NCM, which is the model class considered in our paper. They proved that NCMs are expressive on all three layers \citep[Thm.~1]{xia:etal21} but then subsequently proved the Neural Causal Hierarchy Theorem \citep[Corol.~1]{xia:etal21} which states that despite this expressivity, an NCM trained only on lower layer data would almost never match the true SCM on higher layers. To proceed, they introduced a graphical inductive bias in the form of a causal diagram $\cG$ derived from the true SCM. They defined a $\cG$-constrained version of the NCM (Def.~\ref{def:gncm} in our paper).

They proved that the class of $\cG$-constrained NCMs is both $\cG$-consistent \citep[Thm.~2]{xia:etal21} and $\cL_2$-$\cG$ expressive \citep[Thm.~3]{xia:etal21}. $\cG$-consistency ($\cG^{(\cL_2)}$-consistency in the context of our work) constrained the class to a smaller set of models which may have potentially agreed on the causal query, while $\cL_2$-$\cG$ expressiveness guaranteed that $\cG$-NCMs could still model any SCM within this constrained space. Still, these results only considered up to layer 2 of the causal hierarchy, and made no claims about layer 3. In contrast, Thms.~\ref{thm:gl3-consistency} and \ref{thm:l3-g-expressiveness} in our work are much stronger claims, proving these results on layer 3. That is, $\cG$-NCMs have all of the constraints implied on the third layer while still being able to express any true SCM on all three layers within this constrained setting.

The paper then defined the neural identification problem specifically for $\cL_2$ queries. They proved a duality stating that neural identification for $\cL_2$ quantities is equivalent to graphical identification \citep[Thm.~4]{xia:etal21} and showed that the mutilation procedure applied to a proxy $\cG$-NCM can be used to estimate identifiable queries \citep[Corol.~2]{xia:etal21}. This definition of neural identification did not consider $\cL_3$ quantities or $\cL_2$ input data. In our case, we define neural counterfactual identification (Def.~\ref{def:ncm-l3-id}) generalizing the neural identification problem to handle $\cL_3$ queries given arbitrary datasets from $\cL_1$ and $\cL_2$. We prove a stronger duality in this setting (Thm.~\ref{thm:ncm-ctfid-equivalence}) and that counterfactual quantities can be evaluated directly from the proxy $\cG$-NCM, akin to the mutilation procedure for $\cL_2$ quantities (Corol.~\ref{cor:op-id}).

Following these identification results, \citep{xia:etal21} introduced an algorithm \citep[Alg.~1]{xia:etal21} which is both sound and complete for identifying and estimating $\cL_2$ queries \citep[Corol.~4]{xia:etal21}. The algorithm was implemented in practice through a maximum likelihood approach (MLE-NCM) through \citep[Alg.~2]{xia:etal21}. This involved training two $\cG$-NCM parameterizations, where one was trained to maximize the query, while the other was trained to minimize it, and both models were trained to fit the observational data $P(\*V)$. The algorithm in our counterfactual setting (Alg.~\ref{alg:ncm-solve-ctfid}) is similar on the surface, but there are many additional challenges in solving this more general problem. First, while an interventional query might be easy to compute and optimize, it is not as clear how to compute a counterfactual quantity with gradients such that it can be maximized or minimized. After all, counterfactual quantities often have complicated forms such as nesting (e.g. $P(Y_{Z_X})$) or having the same variable in multiple worlds (e.g. $P(Y_{X=0}, Y_{X=1})$). Second, fitting a single dataset is standard practice, which is what needs to be done when only observational data is available. It is not as clear how to simultaneously fit several datasets. Our work solves both of these problems in practice using the GAN-NCM approach in Sec.~\ref{sec:ncm-estimation}, but also explains how one might solve this problem using the MLE-NCM in Sec.~\ref{app:explicit-ncm}.

To demonstrate the ability of the MLE-NCM to identify and estimate $\cL_2$ queries in practice, the paper showed results from identifying average treatment effect (ATE) from 8 typical graphs encountered in the literature. They also provided estimation results for the identifiable cases. While our experiments mainly focused on identifying $\cL_3$ queries, we reproduced the experiments of \citep{xia:etal21} in Sec.~\ref{app:more-experiments} using the GAN-NCM to demonstrate that our approach is general and can be applied to the $\cL_2$ case as well. Further, although the reason why one might prefer to use an NCM for causal inference is due to the ability of neural models to scale inferences to high dimensions, this ability was not emphasized in the experimental results of this paper. We explicitly showed how the GAN-NCM can perform nearly as well even in 16-dimensional cases while showing that the MLE-NCM scales poorly due to tractability issues (see Fig.~\ref{fig:runtime-results} in Sec.~\ref{sec:experiments}). One additional issue of the MLE-NCM is its bias towards returning ID results even in non-ID cases. This is evident in Fig.~\ref{fig:gan-id-expl-results} and can be potentially problematic if used in practice since attempting to estimate a non-ID query will likely produce an incorrect and misleading result. For more details, see the discussion surrounding Fig.~\ref{fig:bdm-id-results} in Sec.~\ref{app:more-experiments}.

To summarize, this paper introduced the NCM and proved several properties about it, including results on expressiveness, constraints, and identification. 
Our work also uses the NCM and proves many analogous results, but there are still several differences:
\begin{enumerate}
    \item The paper primarily worked in the space of inferring $\cL_2$ quantities given observational data from $\cL_1$. Our work generalizes this setting to inferring any counterfactual query from $\cL_3$ given any combination of datasets from $\cL_1$ and $\cL_2$. This is reflected in the theoretical results, the algorithms, and the practical implementation. We show that the $\cG$-NCM has all $\cL_3$ constraints (Thm.~\ref{thm:gl3-consistency}) but can still express any model on all three layers in this constrained setting (Thm.~\ref{thm:l3-g-expressiveness}). We show that the $\cG$-NCM can be used to solve the identification and estimation problem in the more general counterfactual setting (Thm.~\ref{thm:ncm-ctfid-equivalence}, Corol.~\ref{cor:op-id}, Alg.~\ref{alg:ncm-solve-ctfid}, Corol.~\ref{thm:ncm-ctfid-correctness}). These results drastically generalize the capabilities of the NCM to $\cL_3$.
    
    \item While the MLE-NCM implemented in this paper is an option for counterfactual identification and estimation (as shown in  Sec.~\ref{app:explicit-ncm}), we develop a new type of architecture for the NCM using a generative adversarial approach. Our approach, detailed in Sec.~\ref{sec:ncm-estimation}, has many advantages over the MLE-NCM, notably its design simplicity, its ability to scale to high dimensions, and its robustness in solving the ID problem. We even show that the GAN-NCM can solve all of the experimental settings originally presented in \citep{xia:etal21} in Sec.~\ref{app:more-experiments}.
\end{enumerate}

From all of these results, both theoretical and empirical, it is evident that our approach, in fact,  subsumes the work in \citep{xia:etal21}.

\subsection{\citet{zecevic2021relating}}

\citet{zecevic2021relating} related the class of graph neural networks (GNNs) to the class of Markovian SCMs. Given a DAG $\cG$ with variables $\*V$, they defined GNN layers using the functions
\begin{equation}
    \label{eq:gnn-ncm}
    h_{V_i} = \phi \left( d_{V_i}, \bigoplus_{V_j \in \cN(V_i)} \psi (d_{V_i}, d_{V_j}) \right),
\end{equation}
where $d_{V_i}$ is either data associated with $V_i$ or a feature computed from a previous layer, $\cN(V_i)$ is the set of neighbors of $V_i$ in $\cG$, $\bigoplus$ refers to an aggregating function such as sum, and $h_{V_i}$ is a feature of $V_i$ for the current layer. They proved \citep[Thm.~1]{zecevic2021relating}, which states that for any Markovian SCM $\cM = \langle \*U, \*V, \cF, P(\*U) \rangle$, there exists a GNN construction such that $h_{V_i} = f_{V_i}$ for all $f_{V_i} \in \cF$, and such a construction (but without shared $\phi$ and $\psi$) is therefore a special case of the NCM introduced in \citep{xia:etal21} (\citep[Corol.~1]{zecevic2021relating}). Using a GNN interpretation of an SCM, one could compute quantities from an intervention $\*X = \*x$ through the mutilation approach by evaluating Eq.~\ref{eq:gnn-ncm} but with edges into variables of $\*X$ removed from $\cN$.

Leveraging results from variational graph autoencoders (VGAEs), \citep{zecevic2021relating} demonstrated an approach to learning the observational distribution with a GNN architecture. They introduced interventional VGAEs (iVGAEs) which defined inferences from VGAEs on layer 2 quantities via the mutilation procedure on the GNN layers. They proved a wealth of results related to this class of models following similar results proven in \citep{xia:etal21}: $\cL_2$-expressivity \citep[Thm.~3]{zecevic2021relating}, a causal hierarchy theorem result between layers 1 and 2 \citep[Corol.~2]{zecevic2021relating}, and equivalence between neural-identification from iVGAEs and classical identification of $\cL_2$ quantities \citep[Thm.~4]{zecevic2021relating}.

\citep{zecevic2021relating} introduced an interesting discussion on the relationship between GNNs, SCMs, and NCMs. The work differs from our work in a variety of ways.
\begin{enumerate}
    \item Like \citep{xia:etal21}, \citep{zecevic2021relating} primarily worked in the space of inferring $\cL_2$ quantities given observational data from $\cL_1$. Our work generalizes this setting to inferring any counterfactual query from $\cL_3$ given any combination of datasets from $\cL_1$ and $\cL_2$.
    
    \item Unlike this work, we do not make the Markovianity assumption, meaning we allow unobserved confounding between variables. Many real world settings often have unobserved confounding, so the Markovianity assumption can be limiting. In a situation where the true SCM has unobserved confounding, a model class which enforces Markovianity will not be expressive enough to model the true SCM (Prop.~\ref{prop:markov-expressive}). The results proven in this paper do not extend trivially to cases with unobserved confounders. Traditional GNNs are typically defined on adjacency matrices, which cannot describe bidirected edges in causal diagrams.
    
    \item Like our work, this paper assumed that the causal graph $\cG$ was provided. However, in our work, we discuss the theoretical properties of the implied constraints via Thms.~\ref{thm:gl3-consistency} and \ref{thm:l3-g-expressiveness}. There does not appear to be a result in the paper  demonstrating that $\cG$-GNNs or iVGAEs are $\cG^{(L_2)}$-consistent. It is not immediately obvious that $\cG$-constrained GNNs have all of the $\cL_2$ constraints of $\cG$ given the unique architecture of GNNs.
    
    \item The identification problem is one of the main focuses of our work. While the paper discussed the identification problem of $\cL_2$ queries given $\cL_1$ data, they did not provide a method of solving the problem using iVGAEs. Rather, they focused on the estimation problem under the assumption of identifiability. In fact, all $\cL_2$ quantities are identifiable from $\cL_1$ data under the Markovianity assumption \citep[Corol.~2]{bareinboim:etal20}, so an approach for solving the identification problem in this setting may not be meaningful.
    
    \item Finally, we note that while this paper explored some novel aspects of using GNN and VGAE architectures for causal inference, our paper focuses on GANs, an entirely different class of models.
\end{enumerate}

\subsection{\citet{sanchezmartin2022}}

\citet{sanchezmartin2022} introduced Variational Causal Autoencoders (VACA), a model for estimating interventional and counterfactual quantities under the Markovianity assumption using a variational graph autoencoder architecture (VGAE) and graph neural network (GNN) architecture like \citep{zecevic2021relating}. VACA is defined as a tuple $\langle A, p(\*Z), p_{\theta}, q_{\phi}\rangle$, where
\begin{itemize}
    \item $A$ is an adjacency matrix representing the causal DAG $\cG$;
    \item $p(\*Z) = \prod_{V_i \in \*V} p(Z_{V_i})$ is a prior distribution over independent latent variables $\*Z = \{Z_{V_i} : V_i \in \*V\}$;
    \item The decoder $p_{\theta}(\*V \mid \*Z, A)$ is a GNN parameterized by $\theta$ mapping $\*Z$ and $A$ to the likelihood of $\*V$ given these values;
    \item The encoder $q_{\phi}(\*Z \mid \*V, A)$ is a GNN parameterized by $\phi$ mapping $\*V$ and $A$ to the posterior approximation of $\*Z$ given these values.
\end{itemize}

The model was trained to maximize the likelihood of observational data $P(\*V)$, which can be expressed as
\begin{equation}
    P(\*V) \approx \int p_{\theta}(\*V \mid \*Z, A)p(\*Z)d\*Z.
\end{equation}

Here, $\*Z$ is similar to the role of $\*U$ in an SCM. Although $\*Z$ does not have to be equal to $\*U$ from the true SCM, each $Z_{V_i}$ provides a source of noise for variable $V_i$ that captures the same information that $\*U_{V_i}$ contributes to $V_i$ in the true SCM. The difference between $\*Z$ and $\*U$ does not matter as long as the VACA model can match the true observational distribution.

During evaluation, interventional ($\cL_2$) quantities can be estimated via
\begin{equation}
    \label{eq:vaca-l2}
    p(\*V_{\*X = \*x}) \approx \int \int p_{\theta}(\*V \mid \tilde{\*Z}, \tilde{\*Z}^{\cI}, A^{\cI})p(\tilde{\*Z})q_{\phi}(\tilde{\*Z}^{\cI} \mid A^{\cI}, \*X) d\tilde{\*Z} d\tilde{\*Z}^{\cI},
\end{equation}
where $A^{\cI}$ is the adjacency matrix with edges into variables of $\*X$ removed, $\tilde{\*Z}^{\cI}$ is the set of representations of $\*X$, and $\tilde{\*Z}$ is the subset of $\*Z$ corresponding to $\*V \setminus \*X$. In words, intervened variables are assigned a representation from the encoder under a graph with the edges into $\*X$ mutilated, while the remaining variables are assigned the same representation.

Further, the following counterfactual ($\cL_3$) quantity can also be estimated:
\begin{align}
    & P(\*V_{\*X = \*x} = \*v^{CF} \mid \*v^{F}) \nonumber \\
    &\approx \int \int p_{\theta}(\*V = \*v^{CF} \mid \tilde{\*Z}^{F}, \tilde{\*Z}^{\cI}, A^{\cI})q_{\phi}(\tilde{\*Z}^{\cI} \mid \*x, A^{\cI})q_{\phi}(\tilde{\*Z}^{F} \mid \*v^{F}, A) d\tilde{\*Z}^{\cI} d\tilde{\*Z}^{F},
\end{align}
where $\*v^{F}$ is the observed factual value of $\*V$, $\*v^{CF}$ is the counterfactual value of $\*V$ to be queried, and $\tilde{\*Z}^F$ is the set of representations associated with $\*v^F$ except for the variables in $\*X$. In words, the 3-step procedure for evaluating counterfactuals \citep[Thm.~7.1.7]{pearl:2k} is applied here. The abduction step is performed by computing the factual representations $\tilde{\*Z}^F$ through the encoder $q_{\phi}(\tilde{\*Z}^F \mid \*v^F, A)$ applied to the factual observations $\*v^F$. The action step is performed by assigning the intervened representations $\tilde{\*Z}^{\cI}$ as discussed in Eq.~\ref{eq:vaca-l2}. Finally, the prediction step is performed by using the interventional representations $\tilde{\*Z}^{\cI}$ from $\*X$ and the factual representations $\tilde{\*Z}^{F}$ from $\*V \setminus \*X$ in the encoder $p_{\theta}(\*V \mid \tilde{\*Z}^F, \tilde{\*Z}, A^{\cI})$ with the mutilated graph $A^{\cI}$ to produce a counterfactual outcome $\*v^{CF}$ for $\*V$ in this setting.

VACA presented interesting ideas for evaluating interventional and counterfactual quantities in the Markovian case using GNNs and VGAEs. Still, they differ from our work in several ways.
\begin{enumerate}
    \item Unlike this work, we do not make the Markovianity assumption, meaning we allow unobserved confounding between variables. Many real world settings often have unobserved confounding, so the Markovianity assumption can be limiting. In a situation where the true SCM has unobserved confounding, a model class which enforces Markovianity will not be expressive enough to model the true SCM (Prop.~\ref{prop:markov-expressive}). VACA does not extend trivially to cases with unobserved confounders. Traditional GNNs are typically defined on adjacency matrices, which cannot describe bidirected edges in causal diagrams.
    
    \item Like our work, this paper assumed that the causal graph $\cG$ was provided. However, in our work, we discuss the theoretical properties of the implied constraints via Thms.~\ref{thm:gl3-consistency} and \ref{thm:l3-g-expressiveness}. This paper possibly benefited from these constraints implicitly without discussing these properties, but it is not actually clear if VACA is a $\cG^{(\cL_3)}$-consistent model.
    
    \item The identification problem was not considered in this work while it is a major focus in our work. As shown in Prop.~\ref{prop:markov-nonid} above, there exist counterfactual quantities that cannot be identified given the graph $\cG$, and all available data from $\cL_1$ and $\cL_2$, even under the Markovianity assumption. If these quantities were to be estimated from a model trained on such data, the result will likely be incorrect and misleading. For that reason, we make sure to identify all queries before attempting to estimate them.
    
    \item While this paper only considered input in the form of observational data from $\cL_1$, our work considers cases where additional experimental datasets from $\cL_2$ are provided (even cases where perhaps the observational distribution $P(\*V)$ is not provided). As shown in Prop.~\ref{prop:gid}, there do exist counterfactual queries that are not identifiable given observational data but are identifiable given datasets from $\cL_2$.
    
    \item This paper evaluated counterfactual quantities using the 3-step procedure of abduction, action, and prediction. While this is a sound algorithm for evaluating applicable quantities, it does not apply generally to all counterfactual quantities. Specifically, it only applies in quantities of the form $P(\*Y_{\*x} \mid \*z)$, in which only one intervention is applied before the conditioning bar, and observations are present after the conditioning bar. This excludes quantities such as $P(y_x, y'_{x'})$, which is a joint probability of two counterfactual terms in different worlds (one in $X = x$ and another in $X = x'$), as well as quantities such as $P(y_x \mid y'_{x'})$, in which a term in an intervened world $(X = x')$ is after the conditioning bar.
    
    \item Finally, we note that while this paper explored some novel aspects of using GNN and VGAE architectures for causal inference, our paper focuses on GANs, a different class of models.
\end{enumerate}

\clearpage
\section{Further Discussions and Examples} \label{app:examples}

In this section, we supplement the main points of the paper with further discussions and examples. In Section \ref{sec:cg-assumption}, we expand on the impossibility of performing inferences across layers of the PCH in general and why assumptions are necessary. In particular, we discuss the implications of the causal diagram assumption and compare it with alternatives. In Section \ref{app:cli-discussion}, we expand on the reasoning behind introducing the discussion of constraints separately for each layer (as in Def.~\ref{def:gli-consistency}). In Section \ref{app:examples-properties}, we ground the theoretical results of Secs.~\ref{sec:data-structure} and \ref{sec:ncm-ctf-id} by walking through examples with actual constructions of NCMs. In Section \ref{app:markovianity}, we discuss the Markovianity assumption, why it simplifies causal inference, whether it's plausible, and why the approach proposed in this paper is more general for \emph{not} making this assumption. 

\subsection{Necessity of Causal Assumptions}
\label{sec:cg-assumption}

As emphasized at the beginning of Sec.~\ref{sec:data-structure}, the entire discussion on constraints in this paper is due to the Neural Causal Hierarchy Theorem (N-CHT) \citep[Corol.~1]{xia:etal21}, following the Causal Hierarchy Theorem from \citet{bareinboim:etal20} that states that distributions from lower layers of the hierarchy almost surely underdetermine higher layers in a measure-theoretic sense. In other words, it is not possible to make a statement about a higher layer (e.g. interventions, counterfactuals) with only knowledge from lower layers (e.g. observations). 
This result is well-understood in the causal inference field, and Pearl eloquently states in the Book of Why that 
``without the causal model, we could not go from rung [layer] one to rung [layer] two. This is why deep-learning systems (as long as they only use rung-one data and do not have a causal model) will never be able to answer questions about interventions (...)'' \citep[p.~32]{pearl:mackenzie2018}. 

To further ground this discussion, consider the following examples.

\begin{example}
\label{ex:l1-l2-nocollapse}
Consider a scenario where the board of education of a city is deciding on policies for school curricula. Data is collected from schools across the city, noting the textbooks ($X$) used in the schools and the GPA of students from that school ($Y$). For simplicity, let $X$ and $Y$ be binary variables, where $X=1$ represents the use of a new textbook, and $Y=1$ represents a high average GPA. Suppose the true model $\cM^*$ describing the relationship between $X$ and $Y$ is specified as follows:
\begin{equation}
\cM^* = 
\begin{cases}
    \*U &= \{U_{XY}, U_{Y1}, U_{Y2}, U_{Y3}\} \\
    \*V &= \{X, Y\} \\
    \cF &=
    \begin{cases}
        f_X(u_{XY}) &= u_{XY} \\
        f_Y(x, u_{XY}, u_{Y1}, u_{Y2}, u_{Y3}) &=
        \begin{cases}
            \neg x \wedge u_{Y1} & \text{if } u_{XY} = 0 \\
            (x \oplus u_{Y2}) \vee u_{Y3} & \text{if } u_{XY} = 1
        \end{cases}
    \end{cases} \\
    P(\*U) &= U_{XY}, U_{Y1} \sim Bern(0.2), U_{Y2} \sim Bern(0.5), U_{Y3} \sim Bern(0.6)
\end{cases}
\end{equation}
Here, $\neg$, $\wedge$, $\vee$, and $\oplus$ indicate bitwise negation, AND, OR, and XOR operators, respectively. The board of education would like to see if they should implement a policy to switch all schools to the new textbook. Formally, they will decide to do this if 
\begin{equation}
P(Y_{X=1} = 1) > P(Y_{X=0} = 1) 
\end{equation}
that is, if the  effect of incorporating the new textbook on high GPA is higher than not incorporating it.

\begin{table}[h]
    \centering
    \begin{tabular}{l|l|l}
    \hline
    $X$ & $Y$ & $P(X, Y)$ \\ \hline \hline
    0   & 0   & 0.64      \\ \hline
    0   & 1   & 0.16      \\ \hline
    1   & 0   & 0.04      \\ \hline
    1   & 1   & 0.16      \\ \hline
    \end{tabular}
    \caption{Observational data for Example \ref{ex:l1-l2-nocollapse}.}
    \label{tab:ex-l1-l2-pv}
\end{table}

However, the board does not have access to the true SCM $\cM^*$. Instead, they have only collected observational data, shown in terms of probabilities in Table \ref{tab:ex-l1-l2-pv}. One can verify that these are the true observational probabilities as induced by $\cM^*$. Given this data, one may be tempted to train a model to match this observational distribution. For example, suppose the board's data science team learns the following model: 

\begin{equation*}
\hM = 
\begin{cases}
    \widehat{\*U} &= \{\widehat{U}_X, \widehat{U}_Y\} \\
    \*V &= \{X, Y\} \\
    \widehat{\cF} &=
    \begin{cases}
        \hat{f}_X(\widehat{u}_X) &= \mathbbm{1}\{\widehat{u}_X < 0.2\} \\
        \hat{f}_Y(x, \widehat{u}_Y) &=
        \begin{cases}
            \mathbbm{1}\{\widehat{u}_Y < 0.2\} & \text{if } x = 0 \\
            \mathbbm{1}\{\widehat{u}_Y < 0.8\} & \text{if } x = 1
        \end{cases}
    \end{cases} \\
    P(\widehat{\*U}) &= \widehat{U}_X, \widehat{U}_Y \sim Unif(0, 1)
\end{cases}
\end{equation*}

One can verify that $\widehat{M}$ induces the same $P(\*V)$ and therefore matches $\cM^*$ in $\cL_1$. Computing the interventional effect from this model leads to: 
\begin{eqnarray} 
P^{\hM}(Y_{X=0} = 1) = P(\widehat{U}_Y < 0.2) = 0.2, \\
P^{\hM}(Y_{X=1} = 1) = P(\widehat{U}_Y < 0.8) = 0.8
\end{eqnarray}
 Since $P^{\hM}(Y_{X=1} = 1) > P^{\hM}(Y_{X=0} = 1)$, this seems to suggest that implementing the new textbook is beneficial to the students' population. However, computing the same values from  $\cM^*$ reveals that
 \begin{eqnarray}
P^{\cM^*}(Y_{X=0} = 1) = 0.32, \\ 
P^{\hM}(Y_{X=1} = 1) = 0.16.  	
 \end{eqnarray}
In other words, implementing the new textbook would actually harm students' performance on average.

The implemented model $\hM$ assigns $P(y_x) = P(y \mid x)$, failing to account for confounding between $X$ and $Y$. Still, even if confounding is considered, it is unclear to what extent $X$ and $Y$ are confounded. In fact, even the causal direction is not known. For example, perhaps schools with better-performing students opted to use the new textbook to challenge the students, resulting in a flipped causal relationship. Without further information about the generating model, one cannot infer anything about the interventional distribution $P(y_x)$, even if the trained model matches the true model on the observational level.,

\hfill $\blacksquare$
\end{example}

\begin{example}
\label{ex:l2-l3-nocollapse}
Consider again the board of education situation from Example \ref{ex:l1-l2-nocollapse}. Suppose, after conducting further experimental studies, the board discovers that (1) $Y$ has no causal effect on $X$ ($P(x_y) = P(x)$), and (2) $P(Y_{X=0}=1) = 0.32$ and $P(Y_{X=1} = 1) = 0.16$ as discovered in the previous example. Now, all $\cL_2$ information from $\cM^*$ is provided as well. The board realizes that implementing the new textbook may not be the best idea and is now wondering whether schools that have already implemented the textbook and have poorly performing students would see improvement if the new textbook was no longer used. Formally, they are wondering about the counterfactual probability $P(Y_{X=0} = 1 \mid X=1, Y=0)$. They construct the following model:

\begin{equation}
\hM = 
\begin{cases}
    \widehat{\*U} &= \{\widehat{U}_{XY}\} \\
    \*V &= \{X, Y\} \\
    \widehat{\cF} &=
    \begin{cases}
        \hat{f}_X(\widehat{u}_{XY}) &= \mathbbm{1}\{\widehat{u}_{XY} < 0.2\} \\
        \hat{f}_Y(x, \widehat{u}_{XY}) &=
        \begin{cases}
            1 & \text{if } \widehat{u}_{XY} < 0.16  \\
            0 & \text{if } 0.16 \leq \widehat{u}_{XY} < 0.84 \\
            \neg x & \text{otherwise}
        \end{cases}
    \end{cases} \\
    P(\widehat{\*U}) &= \widehat{U}_{XY} \sim Unif(0, 1)
\end{cases}
\end{equation}

Note that $\hM$ induces the same $\cL_1$ and $\cL_2$ distributions as $\cM^*$. When querying $\hM$ with the query of interest, one obtains 
\begin{equation}
P^{\hM}(Y_{X=0} = 1 \mid X=1, Y=0) = 0, 	
\end{equation}
indicating that removing the book from the curriculum would have no changes on student performance. However, in reality,
\begin{equation}
P^{\cM^*}(Y_{X=0} = 1 \mid X=1, Y=0) = 1,
\end{equation}
 meaning that removing the book would guarantee an improvement. 
 
 Once again, despite fully matching $\cL_1$ and $\cL_2$ of the true model, $\hM$ is not able to provide any meaningful information about an $\cL_3$ query.
\hfill $\blacksquare$
\end{example}

These examples are written in SCM notation, but equivalent NCM constructions can be found with the same distributions (following from \citep[Thm.~1]{xia:etal21}). These examples illustrate that there can be many models which match on a lower layer but fail to match on a higher layer. Hence, a model (such as an NCM) trained on data from a lower layer cannot be expected to produce correct inferences from higher layers (i.e. crossing between layers is impossible). For more examples, see Examples 7-9 from \citet{bareinboim:etal20}. Also, Example 2 from \citet{xia:etal21} shows a case where four different NCMs all match in the given observational data but have drastically differing outputs for an $\cL_2$ query, and none of them are correct. Evidently, NCMs cannot circumvent the impossibility described by the CHT despite their expressiveness, as expressivity does not guarantee that the learned content will match that of the true generating model.

Given this impossibility, the natural course of action to proceed in problems of cross-layer inference is to study under what assumptions inferences could potentially be made. Obviously, the problem could be solved if the mechanisms and noise of the true generating SCM are known, but this is almost never the case in practice. In this work, we assume that we are given partial knowledge of the true generating SCM in the form of a causal diagram, a qualitative description of the underlying mechanisms and noise. This assumption is commonly used in the causal inference literature starting from \citet{pearl:95a}, where the field started in its modern form, and later shown in several classic causal inference applications such as in the identification of causal effects with do-calculus \citep[Ch.~4]{pearl:2k} or counterfactual identification and their axioms \citep[Ch.~7]{pearl:2k}. There are compelling cases in which a data analyst may have a qualitative understanding of the causal relationships of the environment to form a causal graph \citep{bottou2013counterfactual,pearl:2k}. These relationships are explored in several previous counterfactual studies using symbolic approaches such as \citet{galles:pea98}, \citet{shpitser:pea07}, and \citet{correa:etal21}, to cite a few.

The reason why the causal diagram assumption can be utilized to solve counterfactual inference problems is due to the constraints implied by it. While many SCMs may agree on lower layer quantities but disagree on higher layer quantities, the pool of plausible models shrinks dramatically after falsifying all SCMs that do not match the given constraints. Specifically these constraints are the $\cL_i$ equality constraints discussed in Appendix \ref{app:proof-G-cons}, which led to the notion of $\cG^{(\cL_i)}$-consistency discussed in Sec.~\ref{sec:data-structure}. Still, even with these assumptions, inferring counterfactual quantities is a non-trivial task. Despite having a constrained setting, it is still not guaranteed that a counterfactual quantity is computable from observational and interventional data, nor is it obvious how to compute such a quantity even if it is computable. This results in the counterfactual identification and estimation problems discussed in Secs.~\ref{sec:ncm-ctf-id} and \ref{sec:ncm-estimation}.

\begin{figure}[h]
    \centering
    \begin{tikzpicture}[xscale=1.5, yscale=1]
		\node[draw, circle] (X) at (-1, 0) {$X$};
		\node[draw, circle] (Z) at (0, 1) {$\mathbf{Z}$};
		\node[draw, circle] (Y) at (1, 0) {$Y$};
		
		\path [-{Latex[length=2mm]}] (X) edge (Y);
		\path [-{Latex[length=2mm]}] (Z) edge (X);
		\path [-{Latex[length=2mm]}] (Z) edge (Y);
		\path [{Latex[length=2mm]}-{Latex[length=2mm]}, dashed, bend left] (Z) edge (Y);
	\end{tikzpicture}
	\caption{Graph with backdoor set $\mathbf{Z}$.}
	\label{fig:bd-graph}
\end{figure}
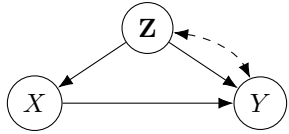

While assumptions are necessary to perform counterfactual inference, other forms of assumptions aside from the causal diagram may also lead to positive results. For example, many works in the intersection of neural models and causal inference have achieved success under the assumption of conditional ignorability (see references cited in Footnote \ref{ft:bd-models}). This assumption states that, under treatment $X$, outcome $Y$, and co-variates $\mathbf{Z}$, we have
\begin{equation}\label{eq:ignorability}
    X \indep Y_{x} \mid \mathbf{Z},
\end{equation}
or in words, $X$ is independent of the counterfactual variable $Y_x$ when conditioned on $\*Z$. This assumption is also known as back-door admissibility and is encoded in causal diagrams like Fig.~\ref{fig:bd-graph}. 
In this paper, we take the graphical language to validate assumptions, which may appear in different forms than Eq.~\ref{eq:ignorability} above. 

Another possibility is to assume the availability of all $\cL_2$ data. This data, combined with the faithfulness assumption, can be used to learn the true causal diagram \citep{kocaoglu:etal17}. If this is the approach used, then the work in this paper can be applied on top of this learned causal diagram. There are examples of counterfactual queries that are still not identifiable even when given the graph and all $\cL_2$ information (see Example \ref{ex:non-id}). Still, this relies on the user having an abundance of experimental data, which is not the most common setting in practice. The work in this paper is agnostic to the approach for acquiring the causal diagram, so it can still be used whether the graph is obtained from experiments or from other means such as through expert knowledge.

Finally, although it is out of the scope of this work, one may envision ways to relax the causal diagram assumption. One example is to work with an equivalence class (EC) of causal diagrams which can be learned from purely observational ($\cL_1$) data \citep{jaber2020cd}. 
Performing causal inferences using the EC comes with its own merits and faces a different set of challenges. In terms of merits, the work relies on fewer assumptions as it is entirely data-driven. In terms of challenges, for example, many interventional and counterfactual quantities are not identifiable since the equivalence class is strictly weaker than the causal diagram itself (for obvious reasons). Such works are complementary to this work since the data scientist can choose for themselves whether they would prefer to make more assumptions for stronger results. 

Ultimately, the goal of this work is not to promote one type of assumption over another but to offer a toolbox to solve counterfactual inference problems once the assumptions are made. The results of this paper are simply the first steps for solving counterfactual inferences in a well-understood setting. A promising direction for future work is to further relax and generalize these approaches when the diagram is not fully known.

\subsection{Discussion on $\cL_i$ Constraints}
\label{app:cli-discussion}

\begin{example}
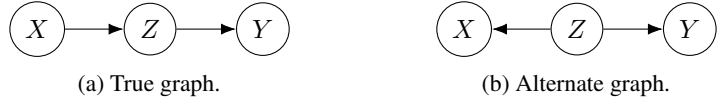
\begin{figure}[h]
	\centering
	\begin{subfigure}{0.4\textwidth}
	    \centering
	    \begin{tikzpicture}[xscale=1.5, yscale=1]
    		\node[draw, circle] (X) at (-1, 0) {$X$};
    		\node[draw, circle] (Z) at (0, 0) {$Z$};
    		\node[draw, circle] (Y) at (1, 0) {$Y$};
    		
    		\path [-{Latex[length=2mm]}] (X) edge (Z);
    		\path [-{Latex[length=2mm]}] (Z) edge (Y);
    	\end{tikzpicture}
    	\caption{True graph.}
	\end{subfigure}
	\begin{subfigure}{0.4\textwidth}
	    \centering
	    \begin{tikzpicture}[xscale=1.5, yscale=1]
    		\node[draw, circle] (X) at (-1, 0) {$X$};
    		\node[draw, circle] (Z) at (0, 0) {$Z$};
    		\node[draw, circle] (Y) at (1, 0) {$Y$};
    		
    		\path [-{Latex[length=2mm]}] (Z) edge (X);
    		\path [-{Latex[length=2mm]}] (Z) edge (Y);
    	\end{tikzpicture}
    	\caption{Alternate graph.}
	\end{subfigure}
	\caption{Graphs for illustrating equality constraints.}
	\label{fig:ex-constraint-graph}
\end{figure}

To illustrate this concept of graphical constraints on different layers with an example, consider the graphs in Fig.~\ref{fig:ex-constraint-graph}. Suppose graph (a) is the causal diagram of the true SCM $\cM^*$. Graph (a) implies different constraints across all three layers. For example, on layer 1, we know by d-separation that $P^*(y \mid z, x) = P^*(y \mid z)$. One could easily train a model $\cM$ given data on $P^*(X, Y, Z)$ such that the model outputs $P^{\cM}(X, Y, Z) = P^*(X, Y, Z)$, and hence, the equality constraint $P^{\cM}(y \mid z, x) = P^{\cM}(y \mid z)$ holds in $\cM$ as well. In this case $\cM$ would be considered $\cG^{(\cL_1)}$-consistent with $\cM^*$.

However, this is not to say that $\cM$ takes the constraints of $\cG$ on higher layers. For example, $\cM$ could be another SCM with causal diagram (b), which also implies the constraint $P(y \mid z, x) = P(y \mid z)$. The similarities break down on higher layer constraints. For example, on layer 2, graph (a) implies that $P^*(x_z) = P^*(x)$ (i.e. $Z$ has no causal effect on $X$) and $P^*(z_x) = P^*(z \mid x)$ (i.e. the causal effect of $X$ on $Z$ is equal to the association). If graph (b) truly reflects the mechanisms of $\cM$, then neither of these two constraints would hold for $\cM$ since the causal direction is reversed. $\cM$ would \emph{not} be $\cG^{(\cL_2)}$-consistent with $\cM^*$, greatly reducing its inferential capabilities on layer 2 quantities induced by $\cM^*$.

Layer 1 and layer 2 constraints are formalized through the concepts of bayesian networks and causal bayesian networks, respectively. Although not yet formally studied, there are further constraints encoded by $\cG$ corresponding to layer 3 distributions. For example, graph (a) implies that $P^*(y_z \mid z') = P^*(y_z)$ and that $P^*(y_{x'}, z_x) = P^*(y_z, z_x)$. These constraints are vital for performing inference on layer 3, which is why $\cG^{(\cL_3)}$-consistency is an important property.

\hfill $\blacksquare$
\end{example}

\subsection{Properties of the $\cG$-NCM}
\label{app:examples-properties}

Sec.~\ref{sec:data-structure} emphasizes that the $\cG$-NCM has the properties of $\cG^{(\cL_3)}$-consistency and $\cL_3$-$\cG$ expressiveness. In this section, we expand on why these two properties are nontrivial.

The proof that the $\cG$-NCM is $\cG^{(\cL_3)}$-consistent arises naturally from its structure and the fact that it is a type of SCM. We first note that any other type of structure may struggle to achieve $\cG^{(\cL_3)}$-consistency. To illustrate this point, consider the following example.

\begin{example}
    \label{ex:l3-expression-wrapper}
    \begin{figure}[h]
	    \centering
	    \begin{tikzpicture}[xscale=1.5, yscale=1]
    		\node[draw, circle] (X) at (-1, 0) {$X$};
    		\node[draw, circle] (Z) at (0, 0) {$Z$};
    		\node[draw, circle] (Y) at (1, 0) {$Y$};
    		
    		\path [-{Latex[length=2mm]}] (X) edge (Z);
    		\path [-{Latex[length=2mm]}] (Z) edge (Y);
    	\end{tikzpicture}
    	\caption{Graph for following examples.}
    	\label{fig:cg-ex-l3-wrapper}
	\end{figure}
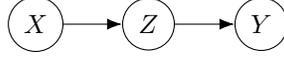
    
    When representing a collection of distributions, perhaps the most simple model that comes to mind is an expression wrapper.
    \begin{definition}[Expression Wrapper]
        Given a collection of distributions $\bbP$, an \emph{expression wrapper} $M^{\bbP}$ defined over $\bbP$ is a collection of probability models $M_{\bbP} = \{M_P : P \in \bbP\}$, where each $M_P$ is an $\cL_1$-expressive model such as a normalizing flow that can express any distribution $P$.
        \hfill $\blacksquare$
    \end{definition}
    It is obvious by construction that regardless of $\bbP$, there exists an expression wrapper defined over $\bbP$ that can express all of the distributions in $\bbP$. If we are interested in an application modeled by true SCM $\cM^*$, and $\bbP = \cL_3(\cM^*)$, that implies that there exists an expression wrapper that is $\cL_3$-consistent with $\cM^*$. The trouble arises when we aim to enforce $G^{(\cL_3)}$-consistency. Consider the simple 3 variable case where $\cG$ is shown in Fig.~\ref{fig:cg-ex-l3-wrapper}. Even on layer 2, $\cG$ encodes several constraints, including but not limited to:
    \begin{enumerate}
        \item $P(x) = P(x_z) = P(x_y) = P(x_{zy})$
        \item $P(z) = P(z_y)$
        \item $P(y_{xz}) = P(y_z) = P(y \mid z) = P(y \mid x, z)$
        \item $P(z_{xy}) = P(z_x) = P(z \mid x)$
        \item $P(y_x) = P(y \mid x)$
        \item $P(\{x, z\}_y) = P(x, z)$
        \item $P(\{y, z\}_x) = P(y, z \mid x)$
    \end{enumerate}
    There are even more constraints on Layer 3, such as \\
    $P(\{x, z\}_y) = P(x, z) = P(x, z_y) = P(x_y, z) = P(x_y, z_{y'}) = P(x_{y'}, z_y) = P(x_{y'}, z_{y'}) = P(x_{y'}, z_{y''}) = P(x_z, z) = P(x_z, z_y) = P(x_z, z_{y'}) = P(x_{z'}, z) = P(x_{z'}, z_y) = P(x_{z'}, z_{y'}) = P(x_{z, y}, z) = P(x_{z, y}, z_y) = P(x_{z, y}, z_{y'}) = P(x_{z', y}, z) = P(x_{z', y}, z_y) = P(x_{z', y}, z_{y'}) = P(x_{z, y'}, z) = P(x_{z, y'}, z_y) = P(x_{z, y'}, z_{y'}) = P(x_{z, y'}, z_{y''}) = P(x_{z', y'}, z) = P(x_{z', y'}, z_y) = P(x_{z', y'}, z_{y'}) = P(x_{z', y'}, z_{y''}) = P(x, z_x) = P(x, z_{x, y}) = P(x, z_{x, y'}) = P(x_y, z_x) = P(x_y, z_{x, y}) = P(x_y, z_{x, y'}) = P(x_{y'}, z_x) = P(x_{y'}, z_{x, y}) = P(x_{y'}, z_{x, y'}) = P(x_{y'}, z_{x, y''}) = P(x_z, z_x) = P(x_z, z_{x, y}) = P(x_z, z_{x, y'}) = P(x_{z'}, z_x) = P(x_{z'}, z_{x, y}) = P(x_{z'}, z_{x, y'}) = P(x_{z, y}, z_x) = P(x_{z, y}, z_{x, y}) = P(x_{z, y}, z_{x, y'}) = P(x_{z', y}, z_x) = P(x_{z', y}, z_{x, y}) = P(x_{z', y}, z_{x, y'}) = P(x_{z, y'}, z_x) = P(x_{z, y'}, z_{x, y}) = P(x_{z, y'}, z_{x, y'}) = P(x_{z, y'}, z_{x, y''}) = P(x_{z', y'}, z_x) = P(x_{z', y'}, z_{x, y}) = P(x_{z', y'}, z_{x, y'}) = P(x_{z', y'}, z_{x, y''})$, \\
    among several others. Furthermore, each of these constraints must hold for every value of $x, y, z, x', y', z'$, etc., in their respective domains.
    
    The sheer number of constraints even in a 3-variable case is so large that it is difficult to list them all, let alone train a model to actually enforce all of these constraints. The causal diagram parsimoniously encodes all of these constraints implicitly, and therefore, the NCM enforces these constraints by construction when it is used as an inductive bias. However, these constraints must be enforced manually when using a model such as the expression wrapper. Not only must every distribution $P \in \bbP$ be trained, which is already a difficult task due to the size of $\bbP$, all of these exponentially many constraints must be simultaneously enforced if $\cG^{(\cL_3)}$-consistency is desired. This is simply not feasible.
    \hfill $\blacksquare$
\end{example}

There are also models that achieve $\cG^{(\cL_2)}$-consistency but not $\cG^{(\cL_3)}$-consistency. Consider the following example.

\begin{example}
    \label{ex:cbn-expressive}
    Recall the definition of Causal Bayesian networks.
    \begin{definition}[Causal Bayesian Network (CBN) {\citep[Def.~16]{bareinboim:etal20}}]
        \label{def:cbn}
        Given observed variables $\mathbf{V}$, let $\mathbf{P}^*$ be the collection of all interventional distributions $P(\mathbf{V} \mid do(\mathbf{x}))$, $\mathbf{X} \subseteq \mathbf{V}$, $\mathbf{x} \in \cD_{\mathbf{X}}$. A causal diagram $\cG$ is a Causal Bayesian Network for $\mathbf{P}^*$ if for every intervention $do(\mathbf{X} = \mathbf{x})$ and every topological ordering $<$ of $\cG_{\overline{X}}$ through its directed edges,
        
        \begin{enumerate}[label=(\roman*)]
            \item $P(\mathbf{V} \mid do(\mathbf{X} = \mathbf{x}))$ is semi-Markov relative to $\cG_{\overline{\mathbf{X}}}$.
            
            \item For every $V_i \in \mathbf{V} \setminus \mathbf{X}$, $\mathbf{W} \subseteq \mathbf{V} \setminus (\Pai{i}^{\mathbf{X}+} \cup \mathbf{X} \cup \{V_i\})$:
            $$P(v_i \mid do(\mathbf{x}), \pai{i}^{\mathbf{x}+}, do(\mathbf{w})) = P(v_i \mid do(\mathbf{x}), \pai{i}^{\mathbf{x}+})$$,
            
            \item For every $V_i \in \mathbf{V} \setminus \mathbf{X}$, let $\Pai{i}^{\mathbf{X}+}$ be partitioned into two sets of confounded and unconfounded parents, $\Pai{i}^c$ and $\Pai{i}^u$ in $\cG_{\overline{\mathbf{X}}}$. Then
            \begin{align*}
                & P(v_i \mid do(\mathbf{x}), \pai{i}^c, do(\pai{i}^u)) \\
                &= P(v_i \mid do(\mathbf{x}), \pai{i}^c, \pai{i}^u)
            \end{align*}
        \end{enumerate}
        
        Here, $\Pai{V_i}^{\*x+} = \Pai{}(\{V \in \*C_{\overline{\*X}}(V_i) : V \leq V_i\})$, with $\*C_{\overline{\*X}}$ referring to the corresponding C-component in $\cG_{\overline{\*X}}$ and $\leq$ referring to the topological ordering.
        \hfill $\blacksquare$
    \end{definition}
    To clarify notation used in this definition, $P(\*y \mid do(\*x), \*z) = P(\*y_{\*x} \mid \*z_{\*x})$ for any $\*X, \*Y, \*Z \subseteq \*V$, and $\cG_{\overline{\*X}}$ refers to the graph $\cG$ with the edges into $\*X$ removed.
    
    In words, a causal diagram $\cG$ is a CBN for a collection of distributions $\*P^*$ if $\*P^*$ satisfies the $\cL_2$ equality constraints induced by $\cG$. The CBN explicitly states these constraints, and more details can be found in \citet{bareinboim:etal20}.
    
    Let $\cG$ be some causal diagram induced by true SCM $\cM^*$. Let $M_{\cbn} = \langle \cG, \*P^* \rangle$ be some CBN defined over distributions $\*P^*$. Then, by definition, $M_{\cbn}$ is $\cG^{(\cL_2)}$-consistent with $\cM^*$. However, $M_{\cbn}$ would not be $\cG^{(\cL_3)}$-consistent with $\cM^*$ since the constraints in the definition do not include any distributions outside of layer 2. In fact, $\*P^*$ does not even have to be defined with layer 3 distributions.
    
    As an example, suppose $\cG$ is the graph in Fig.~\ref{fig:cg-ex-l3-wrapper}, and that $\cL_1(\cM^*) = \cL_1(M_{\cbn})$, that is, $\*P^*$ matches the layer 1 distribution of $\cM^*$. Then, since $M_{\cbn}$ is $\cG^{(\cL_2)}$-consistent, one can derive $P(y_x) = P(y \mid x)$ and conclude that $P^{M_{\cbn}}(y_x) = P^{\cM^*}(y_x)$, successfully identifying a layer 2 quantity. However, if a model were $\cG^{(\cL_3)}$-consistent with $\cM^*$, one could derive quantities such as $P(y_x, z_x) = P(y_{xz}, z_x) = P(y_z, z_x) = P(y_z)P(z_x) = P(y \mid z)P(z \mid x)$, also computable from observational quantities. This is not guaranteed to be true with the CBN, since counterfactual quantities like $P(y_z, z_x)$ are not guaranteed to be defined in $\*P^*$, and even if they were, they could take any values since no part of the definition of CBNs constrains them.
    \hfill $\blacksquare$
\end{example}

There also exist models that may be $\cG^{(\cL_3)}$-consistent but are not $\cL_3$-$\cG$ expressive. Any model class in which Markovianity is enforced is an example of such a case, as shown in the following.

\begin{example}
    \label{ex:markov-expressive}
    
    \begin{figure}[h]
    	\centering
    	\begin{subfigure}{0.4\textwidth}
    	    \centering
        	\begin{tikzpicture}[xscale=1, yscale=1]
        		\node[draw, circle] (X) at (-1, 0) {$X$};
        		\node[draw, circle] (Y) at (1, 0) {$Y$};
        		
        		\path [-{Latex[length=2mm]}] (X) edge (Y);
        		\path [{Latex[length=2mm]}-{Latex[length=2mm]}, dashed, bend left, out=45, in=135] (X) edge (Y);
        	\end{tikzpicture}
        	\caption{True graph.}
        	\label{fig:cg-ex-markov-true}
    	\end{subfigure}
    	\begin{subfigure}{0.4\textwidth}
    	    \centering
    	    \begin{tikzpicture}[xscale=1, yscale=1]
        		\node[draw, circle] (X) at (-1, 0) {$X$};
        		\node[draw, circle] (Y) at (1, 0) {$Y$};
        		
        		\path [-{Latex[length=2mm]}] (X) edge (Y);
        	\end{tikzpicture}
        	\caption{Markovian subgraph.}
        	\label{fig:cg-ex-markov-alt}
    	\end{subfigure}
    	\caption{Graphs for Example \ref{ex:markov-expressive}.}
    	\label{fig:cg-ex-markov}
    \end{figure}
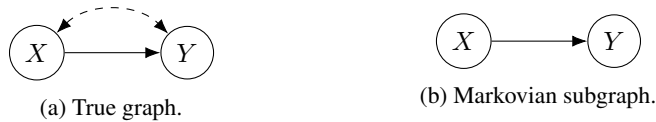
    
    Given the causal diagram $\cG$ from the true SCM $\cM^*$, consider $\widetilde{\Omega}(\cG)$, the set of all Markovian SCMs that induce causal diagrams with a subset of the edges of $\cG$. By construction, $\widetilde{\Omega}(\cG)$ is $\cG^{(\cL_3)}$-consistent with $\cM^*$ since no constraint can be removed by removing edges, so any SCM in $\widetilde{\Omega}(\cG)$ has at least as many $\cL_3$ constraints as those induced by $\cG$.
    
    Suppose $\cG$ is the bow graph as shown in Fig.~\ref{fig:cg-ex-markov}, and the true SCM $\cM^*$ is defined as
    \begin{equation*}
        \cM^* =
        \begin{cases}
            \*U & = \{U_{XY}, U_Y\}, \text{ both binary}\\
            \*V &= \{X, Y\}, \text{ both binary}\\
            \cF &= \begin{cases}
                f_X(u_{XY}) &= u_{XY} \\
                f_Y(x, u_{XY}, u_{Y}) &= u_{XY} \oplus u_Y
            \end{cases} \\
            P(\*U) & \text{ defined such that } U_{XY} \sim \bern(0.5), U_{Y} \sim \bern(0.1)
        \end{cases}
    \end{equation*}
    Evidently, $X$ and $Y$ are highly correlated, as $P(Y = 1 \mid X = 1) = 0.9$, but $P(Y = 1) = 0.5$. However, this correlation is not causal, as the input $X$ remains unused by $f_Y$, implying that $P(Y_{X = 1} = 1) = P(Y = 1) = 0.5$. It is clear that no model in $\widetilde{\Omega}(\cG)$ will be able to represent $\cM^*$ on all three layers. For example, one may construct $\cM' \in \widetilde{\Omega}(\cG)$ as follows
    \begin{equation*}
        \cM' =
        \begin{cases}
            \*U' & = \{U_{X}, U_Y\}, \text{ both binary}\\
            \*V &= \{X, Y\}, \text{ both binary}\\
            \cF' &= \begin{cases}
                f'_X(u_{X}) &= u_{X} \\
                f'_Y(x, u_{Y}) &= x \oplus u_Y
            \end{cases} \\
            P(\*U') & \text{ defined such that } U_{X} \sim \bern(0.5), U_{Y} \sim \bern(0.1)
        \end{cases}
    \end{equation*}
    This model induces the graph $\cG'$ as shown in Fig.~\ref{fig:cg-ex-markov-alt}, so it is a valid model from $\widetilde{\Omega}(\cG)$. Moreover, it can even reproduce $\cL_1$ quantities from $\cM^*$, as it is clear that $P^{\cM'}(Y = 1 \mid X = 1) = 0.9$, and $P^{\cM'}(Y = 1) = 0.5$. This makes $\cM'$ $\cL_1$-consistent with $\cM^*$. However, on layer 2, the similarities break down. While in the true model, $X$ has no effect on $Y$, in this case, the Markovian nature of $\cM'$ means that $P^{\cM'}(Y_{X=1} = 1) = P^{\cM'}(Y = 1 \mid X = 1) = 0.9$, resulting in a different value.
    
    Clearly $\cM'$ is not $\cL_2$-consistent with $\cM^*$, let alone $\cL_3$-consistent. However, no other model from $\widetilde{\Omega}(\cG)$ would be able to achieve $\cL_2$-consistency either. In a Markovian model, the correlation between $X$ and $Y$ must be modeled through a causal effect from either $X$ to $Y$ or from $Y$ to $X$, and both of these options would violate $\cL_2$-consistency with $\cM^*$.
    \hfill $\blacksquare$
\end{example}

Other examples of model classes that are $\cG^{(\cL_3)}$-consistent but not $\cL_3$-$\cG$ expressive include parametric SCMs in which the parameterization is not fully expressive, such as linear SCMs.

As shown in Thm.~\ref{thm:l3-g-expressiveness}, the class of $\cG$-NCMs is $\cL_3$-$\cG$ expressive. It may not be immediately clear why this expressivity holds on all three layers instead of only on layers 1 or 2. The intuition is that with the expressivity of neural networks, a $\cG$-NCM can directly model the functions of the true SCM $\cM^*$ instead of indirectly matching its distributions. By matching the functions and utilizing the probability integral transform on uniform random variables to match $P(\*U)$, it is guaranteed that one can construct a $\cG$-NCM to directly mimic the behavior of $\cM^*$ on the functional level, implying equivalence on all three layers. Consider the following example.

\begin{example}
    \label{ex:l3-g}
    \begin{figure}[h]
    	\centering
	    \begin{tikzpicture}[xscale=1, yscale=1]
    		\node[draw, circle] (X) at (-1, 0) {$X$};
    		\node[draw, circle] (Y) at (1, 0) {$Y$};
    		
    		\path [-{Latex[length=2mm]}] (X) edge (Y);
    	\end{tikzpicture}
    	\caption{Graph for Example \ref{ex:l3-g}.}
    	\label{fig:cg-ex-l3-g}
    \end{figure}

    Suppose we are studying the effects of a drug ($X$) on the recovery of a disease ($Y$). For simplicity, $X$ and $Y$ are both binary, $X=1$ represents the drug being taken, and $Y=1$ represents a full recovery. There are two hypothesized models. In $\cM_1$, taking the drug helps people without a certain gene ($U_Y = 1$) but actively harms people with the gene ($U_Y = 0$). In $\cM_2$, the drug has no effect. The two SCMs are clarified as follows:
    \begin{equation*}
        \cM_1 =
        \begin{cases}
            \*U_1 & = \{U_X, U_Y\}, \text{ both binary}\\
            \*V &= \{X, Y\}, \text{ both binary}\\
            \cF_1 &= \begin{cases}
                f^1_X(u_{X}) &= u_{X} \\
                f^1_Y(x, u_{Y}) &= x \oplus u_Y
            \end{cases} \\
            P(\*U_1) & \text{ defined such that } U_{X}, U_Y \sim \bern(0.5)
        \end{cases}
    \end{equation*}
    \begin{equation*}
        \cM_2 =
        \begin{cases}
            \*U_2 & = \{U_X, U_Y\}, \text{ both binary}\\
            \*V &= \{X, Y\}, \text{ both binary}\\
            \cF_2 &= \begin{cases}
                f^2_X(u_{X}) &= u_{X} \\
                f^2_Y(x, u_{Y}) &= u_Y
            \end{cases} \\
            P(\*U_2) & \text{ defined such that } U_{X}, U_Y \sim \bern(0.5)
        \end{cases}
    \end{equation*}
    It turns out that these two hypothesized models match in both $\cL_1$ and $\cL_2$. For example, $P^{\cM_1}(Y_{X = 1} = 1) = P^{\cM_2}(Y_{X = 1} = 1) = 0.5$. The difference lies in their induced distributions for $\cL_3$. For example, $P^{\cM_1}(Y_{X = 1} = 1 \mid X = 0, Y = 1) = 0$. In words, in $\cM_1$, if it is observed that someone did not take the drug and recovered from the disease, then it is guaranteed that they would fail to recover if they had taken the drug. On the other hand, $P^{\cM_2}(Y_{X=1} = 1 \mid X = 0, Y = 1) = P^{\cM_2}(Y = 1 \mid Y = 1) = 1$ since $X$ simply has no effect on the disease.
    
    The reasoning for this difference at the functional level is that both $f^1_Y$ and $f^2_Y$ have two different behaviors with respect to $X$ depending on the setting of $U_Y$. When $U_Y = 0$, $f^1_Y(x) = x$, while $f^2_Y(x) = 0$. When $U_Y = 1$, $f^1_Y(x) = 1 - x$, while $f^2_Y(x) = 1$. In both cases, the functions are different between $\cM_1$ and $\cM_2$. Notably, the output of $f^1_Y$ depends on $X$ while the output of $f^2_Y$ does not.
    
    We can reproduce this difference in behavior when constructing $\cG$-NCMs to represent these SCMs.
    
    Recall the binary step activation function,
    \begin{equation}
        \sigma(z) =
        \begin{cases}
            0 & z < 0 \\
            1 & z \geq 0.
        \end{cases}
    \end{equation}
    Using this activation function, we can create various useful neural network functions like the following:
    \begin{align*}
        \hat{f}_{OR}(z_1, \dots, z_n) &= \sigma \left( \sum_{i = 1}^{n} z_i - 1 \right) & \text{bitwise OR} \\
        \hat{f}_{AND}(z_1, \dots, z_n) &= \sigma \left( \sum_{i = 1}^{n} z_i - n \right) & \text{bitwise AND} \\
        \hat{f}_{NOT}(z) &= \sigma(-z) & \text{bitwise NOT} \\
        \hat{f}_{\geq p}(z) &= \sigma(z - p) & \text{1 if } z \geq p \\
        \hat{f}_{<p}(z) &= \hat{f}_{NOT}(\hat{f}_{\geq p}(z)) & \text{1 if } z < p
    \end{align*}
    
    With these functions, consider the following $\cG$-NCMs, $\hM_1$ and $\hM_2$ that replicate the behavior of $\cM_1$ and $\cM_2$ respectively.
    
    \begin{equation*}
        \hM_1 =
        \begin{cases}
            \widehat{\*U}_1 & = \{\widehat{U}_X, \widehat{U}_Y\} \\
            \*V &= \{X, Y\}, \text{ both binary}\\
            \widehat{\cF}_1 &= \begin{cases}
                \hat{f}^1_X(\widehat{u}_{X}) &= \hat{f}_{<0.5}(\widehat{u}_{X}) \\
                \hat{f}^1_Y(x, \widehat{u}_{Y}) &= \hat{f}_{OR}
                \begin{cases}
                    \hat{f}_{AND}(x, \hat{f}_{NOT}(\hat{f}_{<0.5}(\widehat{u}_Y))) \\
                    \hat{f}_{AND}(\hat{f}_{NOT}(x), \hat{f}_{<0.5}(\widehat{u}_Y))
                \end{cases}
            \end{cases} \\
            P(\widehat{\*U}_1) & \text{ defined such that } \widehat{U}_{X}, \widehat{U}_Y \sim \unif(0, 1)
        \end{cases}
    \end{equation*}
    \begin{equation*}
        \hM_2 =
        \begin{cases}
            \widehat{\*U}_2 & = \{\widehat{U}_X, \widehat{U}_Y\} \\
            \*V &= \{X, Y\}, \text{ both binary}\\
            \cF_2 &= \begin{cases}
                \hat{f}^2_X(\widehat{u}_{X}) &= \hat{f}_{<0.5}(\widehat{u}_{X}) \\
                \hat{f}^2_Y(x, \widehat{u}_{Y}) &= \hat{f}_{<0.5}(\widehat{u}_Y)
            \end{cases} \\
            P(\widehat{\*U}_2) & \text{ defined such that } \widehat{U}_{X}, \widehat{U}_Y \sim \unif(0, 1)
        \end{cases}
    \end{equation*}
    
    Note that $\hM_1$ and $\hM_2$ match $\cM_1$ and $\cM_2$, respectively, on all three layers. For example, $P^{\hM_1}(Y_{X = 1} = 1 \mid X = 0, Y = 1) = 0$ and $P^{\hM_2}(Y_{X = 1} = 1 \mid X = 0, Y = 1) = 1$. This is because the construction of $\hM_1$ and $\hM_2$ were designed to directly match the functions of $\cM_1$ and $\cM_2$.
    \hfill $\blacksquare$
\end{example}

As a disclaimer, note that $\cL_3$-$\cG$ expressiveness means the existence of a model that matches on all three layers, but it does not necessarily mean that such a model can be found. In the case of Example \ref{ex:l3-g}, we could create $\cG$-NCMs to match the hypothesized SCMs because the full specifications of the SCMs were provided. In practice, this is rarely the case, and it may not be possible to determine the correct NCM from training on data alone, which is why the identification problem is important to determine when inferences can be made from a trained NCM.


Determining whether a counterfactual quantity is identifiable from the given data and constraints is an important problem even in the Markovian case. Consider the following simple example.

\begin{example}
    \label{ex:non-id}
    \begin{figure}[h]
        \centering
        \begin{tikzpicture}[xscale=1.5, yscale=1.0]
    		\node[draw, circle] (X) at (-1, 0) {$X$};
    		\node[draw, circle] (Y) at (1, 0) {$Y$};
    		
    		\path [-{Latex[length=2mm]}] (X) edge (Y);
    	\end{tikzpicture}
    	\caption{Graph for Example \ref{ex:non-id}.}
    	\label{fig:ex-non-id}
    \end{figure}
    Consider a simple study on the effect of a new drug ($X$) on survival rate of a sick patient ($Y$), modeled by the following SCM $\cM$:
    \begin{equation*}
        \cM^* =
        \begin{cases}
            \*U & = \{U_X, U_{Y1}, U_{Y2}\}, \text{ all binary}\\
            \*V &= \{X, Y\}, \text{ both binary}\\
            \cF &= \begin{cases}
                f_X(u_X) &= u_X \\
                f_Y(x, u_{Y1}, u_{Y2}) &= 
                \begin{cases}
                    x \oplus u_{Y1} & u_{Y2} = 0 \\
                    u_{Y1} & u_{Y2} = 1
                \end{cases}
            \end{cases} \\
            P(\*U) & \text{ defined such that } U_X, U_{Y1}, U_{Y2} \sim \bern(0.5)
        \end{cases}
    \end{equation*}
    This SCM induces the graph $\cG$ in Fig.~\ref{fig:ex-non-id}. Evidently from observing $\cM^*$, this drug has some effect on patients with this disease. However, $\cM^*$ is unobserved, and we only have data. Suppose we have both observational data and experimental data from testing on $X$. Experiments on $Y$ are not needed since $Y$ has no causal effect on $X$, so this essentially means we have data on all of layer 1 and layer 2. This data is shown in terms of probabilities in Table.~\ref{tab:ex-non-id}.
    
    \begin{table}[h]
        \centering
        \begin{tabular}{l|l|l}
        \hline \hline
        $X$ & $Y$ & $P(X, Y)$ \\ \hline \hline
        0 & 0 & 0.25    \\ \hline
        0 & 1 & 0.25    \\ \hline
        1 & 0 & 0.25    \\ \hline
        1 & 1 & 0.25    \\ \hline
        \end{tabular}
        \quad
        \begin{tabular}{l|l|l}
        \hline \hline
        $x$ & $Y_{X=x}$ & $P(Y_{X=x})$ \\ \hline \hline
        0 & 0 & 0.5    \\ \hline
        0 & 1 & 0.5    \\ \hline
        1 & 0 & 0.5    \\ \hline
        1 & 1 & 0.5    \\ \hline
        \end{tabular}
        \caption{Data for Example \ref{ex:non-id}}
        \label{tab:ex-non-id}
    \end{table}
    
    The data seems to suggest that the drug has no effect on the disease since $P(y_x) = P(y \mid x) = P(y)$ for all values of $y$ and $x$. However, just to be sure, we are interested in evaluating the query $P(Y_{X = 1} = 1 \mid X = 0, Y = 0)$, the probability that someone who did not take the drug and did not recover \emph{would have} recovered had they taken the drug. This quantity is called the probability of sufficiency \citep[Def.~9.2.2]{pearl:2k}.
    
    Running the procedure in Alg.~\ref{alg:ncm-solve-ctfid} on $\cG$ and $\bbZ = \{P(\*V), P(\*V_x)\}$, we may achieve the following two $\cG$-NCMs (with the notation from Example \ref{ex:l3-g})
    \begin{equation*}
        \hM_1 =
        \begin{cases}
            \widehat{\*U} & = \{\widehat{U}_X, \widehat{U}_Y\} \\
            \*V &= \{X, Y\} \\
            \widehat{\cF}_1 &= \begin{cases}
                \hat{f}_X^{(1)}(\widehat{u}_X) &= \hat{f}_{< 0.5} (\widehat{u}_X) \\
                \hat{f}_Y^{(1)}(x, \widehat{u}_Y) &= \hat{f}_{< 0.5}(\widehat{u}_Y)
            \end{cases} \\
            P(\widehat{\*U}) & \text{ defined such that } \hU_X, \hU_Y \sim \unif(0, 1)
        \end{cases}
    \end{equation*}
    
    \begin{equation*}
        \hM_2 =
        \begin{cases}
            \widehat{\*U} & = \{\widehat{U}_X, \widehat{U}_Y\} \\
            \*V &= \{X, Y\} \\
            \widehat{\cF}_2 &= \begin{cases}
                \hat{f}_X^{(2)}(\widehat{u}_X) &= \hat{f}_{< 0.5} (\widehat{u}_X) \\
                \hat{f}_Y^{(2)}(x, \widehat{u}_Y) &= \hat{f}_{OR}(\hat{f}_{AND}(x, \hat{f}_{< 0.5}(\widehat{u}_Y)), \hat{f}_{AND}(\hat{f}_{NOT}(x), \hat{f}_{\geq 0.5}(\widehat{u}_Y)))
            \end{cases} \\
            P(\widehat{\*U}) & \text{ defined such that } \hU_X, \hU_Y \sim \unif(0, 1)
        \end{cases}
    \end{equation*}
    Note that $\hat{f}_Y^{(2)}$ simply resembles the XOR function. Evaluating $P^{\hM_1}(X, Y)$, $P^{\hM_2}(X, Y)$, $P^{\hM_1}(Y_x)$, and $P^{\hM_2}(Y_x)$, one can verify that they match the results in Table \ref{tab:ex-non-id}. However, for our query, observe that
    \begin{align*}
        P^{\hM_1}(Y_{X = 1} = 1 \mid X = 0, Y = 0) &= \frac{P^{\hM_1}(Y_{X = 1} = 1, X = 0, Y = 0)}{P^{\hM_1}(X = 0, Y = 0)} \\
        &= \frac{P(\hU_Y < 0.5, \hU_X \geq 0.5, \hU_Y \geq 0.5)}{P(\hU_X \geq 0.5, \hU_Y \geq 0.5)} \\
        &= \frac{0}{0.25} = 0,
    \end{align*}
    and
    \begin{align*}
        P^{\hM_2}(Y_{X = 1} = 1 \mid X = 0, Y = 0) &= \frac{P^{\hM_2}(Y_{X = 1} = 1, X = 0, Y = 0)}{P^{\hM_2}(X = 0, Y = 0)} \\
        &= \frac{P(\hU_Y < 0.5, \hU_X \geq 0.5, \hU_Y < 0.5)}{P(\hU_X \geq 0.5, \hU_Y < 0.5)} \\
        &= \frac{0.25}{0.25} = 1.
    \end{align*}
    In words, according to $\hM_1$, someone who did not take the drug or recover is guaranteed \emph{not} to recover even if they had taken the drug. On the other hand, according to $\hM_2$, the opposite is true, and that person would be guaranteed to recover if they had taken the drug. Remarkably, despite the fact that we have the graph $\cG$ and \emph{all} data from $\cL_1$ and $\cL_2$, we cannot narrow down the value of our query at all, as it can take any possible value between 0 and 1.
    
    Moreover, when we evaluate the true result of the query from $\cM^*$, we get
    \begin{align*}
        P^{\cM^*}(Y_{X = 1} = 1 \mid X = 0, Y = 0) &= \frac{P^{\cM^*}(Y_{X = 1} = 1, X = 0, Y = 0)}{P^{\cM^*}(X = 0, Y = 0)} \\
        &= \frac{P(U_X = 0, U_{Y1} = 0, U_{Y2} = 0)}{P(U_X = 0, U_{Y1} = 0)} \\
        &= \frac{0.125}{0.25} = \frac{1}{2}.
    \end{align*}
    
    In other words, neither $\hM_1$ nor $\hM_2$ captured the true value of the query. This query is not identifiable from $\bbZ$ and $\cG$. Arbitrarily training a model to fit $\bbZ$ and $\cG$ is almost guaranteed to fail to capture the true query in this case.
    
    \hfill $\blacksquare$
\end{example}

Although Example \ref{ex:non-id} shows a compelling case where one would not want to estimate the query due to non-identifiability, the NCM serves as a robust proxy model in cases where the query is identifiable.

\begin{example}
    \label{ex:id}

    \begin{figure}[h]
        \centering
        \begin{tikzpicture}[xscale=1.5, yscale=1.0]
    		\node[draw, circle] (R) at (-1, 0) {$R$};
    		\node[draw, circle] (J) at (1, 0) {$J$};
    		\node[draw, circle] (E) at (0, -1) {$E$};
    		
    		\path [-{Latex[length=2mm]}] (R) edge (J);
    		\path [-{Latex[length=2mm]}] (R) edge (E);
    		\path [-{Latex[length=2mm]}] (E) edge (J);
    		\path [{Latex[length=2mm]}-{Latex[length=2mm]}, dashed, bend right] (R) edge (E);
    	\end{tikzpicture}
    	\caption{Graph for Example \ref{ex:id}.}
    	\label{fig:ex-id}
    \end{figure}
    Suppose we are conducting a study on whether race ($R$) affects an applicant's chances of getting a certain job ($J$). We also collect data on the applicant's education background ($E$). For simplicity, suppose all three are binary variables, $R=1$ implies that the applicant is from a protected class (e.g. black, hispanic), $E=1$ implies that the applicant graduated from a prestigious university, and $J=1$ implies that the applicant got the job. Experts suggest that race may affect application results both directly and indirectly through education background, but race and education may share confounding factors such as location, as shown in Fig.~\ref{fig:ex-id}. The full relationship between variables is modeled through SCM $\cM^*$:
    \begin{equation*}
        \cM^* =
        \begin{cases}
            \*U & = \{U_{RE}, U_E, U_J\}, \text{ all binary}\\
            \*V &= \{R, E, J\} \\
            \cF &= \begin{cases}
                f_R(u_{RE}) &= u_{RE} \\
                f_E(r, u_{RE}, u_{E}) &= (\neg u_{RE} \wedge \neg r) \oplus u_E \\
                f_J(r, e, u_J) &= (\neg r \vee e) \oplus u_J
            \end{cases} \\
            P(\*U) & \text{ defined such that } U_{RE} \sim \bern(0.5), U_E, U_J \sim \bern(0.25)
        \end{cases}
    \end{equation*}
    
    Relevant quantities and their corresponding probability for each setting of $\*U$ is shown in Table \ref{tab:ex-id-truth}.
    
    \begin{table}[h]
        \begin{tabular}{l|l|l|l|l|l|l|l|l|l|l|l}
        \hline \hline
        $P(\*U)$ & $U_{RE}$ & $U_E$ & $U_J$ & $R$ & $E$ & $J$ & $E_{R=0}$ & $E_{R=1}$ & $J_{R=0}$ & $J_{R=1}$ & $J_{R=1, E_{R=0}}$ \\ \hline \hline
        $p_0 = 9/32$     & 0        & 0     & 0     & 0   & 1   & 1   & 1         & 0         & 1         & 0         & 1                  \\ \hline
        $p_1 = 3/32$     & 0        & 0     & 1     & 0   & 1   & 0   & 1         & 0         & 0         & 1         & 0                  \\ \hline
        $p_2 = 3/32$     & 0        & 1     & 0     & 0   & 0   & 1   & 0         & 1         & 1         & 1         & 0                  \\ \hline
        $p_3 = 1/32$     & 0        & 1     & 1     & 0   & 0   & 0   & 0         & 1         & 0         & 0         & 1                  \\ \hline
        $p_4 = 9/32$     & 1        & 0     & 0     & 1   & 0   & 0   & 0         & 0         & 1         & 0         & 0                  \\ \hline
        $p_5 = 3/32$     & 1        & 0     & 1     & 1   & 0   & 1   & 0         & 0         & 0         & 1         & 1                  \\ \hline
        $p_6 = 3/32$     & 1        & 1     & 0     & 1   & 1   & 1   & 1         & 1         & 1         & 1         & 1                  \\ \hline
        $p_7 = 1/32$     & 1        & 1     & 1     & 1   & 1   & 0   & 1         & 1         & 0         & 0         & 0                  \\ \hline
        \end{tabular}
        \caption{Relevant quantities calculated for each setting of $\*U$ in $\cM^*$ for Example \ref{ex:id}.}
        \label{tab:ex-id-truth}
    \end{table}
    
    We are given the causal diagram $\cG$ from Fig.~\ref{fig:ex-id} as well as the datasets $\bbZ = \{P(\*V), P(\*V_{R=0}), P(\*V_{R=1})\}$. The observational data from $P(\*V)$ was collected by simply asking random participants to list their race, education level, and application results. The experimental data from $P(\*V_{R=0})$ and $P(\*V_{R=1})$ is obtained similarly, except participants are asked to write their race as $R=0$ or $R=1$ respectively in their school and job applications regardless of their actual race. This data, computed from Table \ref{tab:ex-id-truth}, is shown in Table \ref{tab:ex-id-data}.
    
    \begin{table}[h]
        \centering
        \begin{tabular}{l|l|l|l}
        \hline \hline
        $R$ & $E$ & $J$ & $P(\*V)$ \\ \hline \hline
        0 & 0 & 0 & $p_3 = 1/32$    \\ \hline
        0 & 0 & 1 & $p_2 = 3/32$    \\ \hline
        0 & 1 & 0 & $p_1 = 3/32$    \\ \hline
        0 & 1 & 1 & $p_0 = 9/32$    \\ \hline
        1 & 0 & 0 & $p_4 = 9/32$    \\ \hline
        1 & 0 & 1 & $p_5 = 3/32$    \\ \hline
        1 & 1 & 0 & $p_7 = 1/32$    \\ \hline
        1 & 1 & 1 & $p_6 = 3/32$    \\ \hline
        \end{tabular}
        \quad
        \begin{tabular}{l|l|l|l}
        \hline \hline
        $r$ & $E_{R=r}$ & $J_{R=r}$ & $P(\*V_{R=r})$ \\ \hline \hline
        0 & 0 & 0 & $p_3 + p_5 = 4/32$    \\ \hline
        0 & 0 & 1 & $p_2 + p_4 = 12/32$    \\ \hline
        0 & 1 & 0 & $p_1 + p_7 = 4/32$    \\ \hline
        0 & 1 & 1 & $p_0 + p_6 = 12/32$    \\ \hline
        1 & 0 & 0 & $p_0 + p_4 = 18/32$    \\ \hline
        1 & 0 & 1 & $p_1 + p_5 = 6/32$    \\ \hline
        1 & 1 & 0 & $p_3 + p_7 = 2/32$    \\ \hline
        1 & 1 & 1 & $p_2 + p_6 = 6/32$    \\ \hline
        \end{tabular}
        \caption{Data for Example \ref{ex:id}}
        \label{tab:ex-id-data}
    \end{table}
    
    When looking at the average treatment effect of race on job success from the data, we see that
    \begin{equation*}
        P(J_{R=1} = 1) - P(J_{R=0} = 1) = \frac{12}{32} - \frac{24}{32} = -\frac{12}{32} = -0.375
    \end{equation*}
    which is a rather large disparity between the success rates of protected classes compared to others. To avoid accusations of discrimination, the company claims that this disparity is not a result of the company's hiring practices. The company says that they only look at education levels, and the reason why protected classes are less likely to get the job is because the fault lies with the education system for discriminating against protected classes in their admissions process. To verify this claim, we may be interested in the \emph{natural direct effect} query, formally defined in this context as
    \begin{equation}
        Q = P(J_{R=1, E_{R=0}} = 1) - P(J_{R=0} = 1).
    \end{equation}
    In words, the first term is the probability that someone will get the job if we change their race to a protected class, but change their education level as if their race was not from a protected class. The second term is simply their natural chance of getting the job if their race was fixed to a non-protected class. In other words, this query calculates the direct effect of race on the hiring process, irrespective of education. We do not have data on the first term of $Q$, but it turns out that $Q$ is identifiable from $\cG$ and $\bbZ$. Running Alg.~1 would return two NCMs that induce the same result for $Q$. For example, the first term can be derived symbolically as
    \begin{align*}
        & P(J_{R=1, E_{R=0}} = 1) \\
        &= P(J_{R=1, E=0} = 1, E_{R=0} = 0) + P(J_{R=1, E=1} = 1, E_{R=0} = 1) & \text{Unnesting} \\
        &= P(J_{R=1, E=0} = 1)P(E_{R=0} = 0) + P(J_{R=1, E=1} = 1)P(E_{R=0} = 1) & \text{C-factor decomp.} \\
        &= P(J = 1 \mid R = 1, E = 0)P(E_{R=0} = 0) & \text{Rule 2} \\
        &+ P(J = 1 \mid R = 1, E = 1)P(E_{R=0} = 1) \\
        &= \left(\frac{3}{12} \right) \left(\frac{16}{32}\right) + \left(\frac{3}{4} \right) \left(\frac{16}{32}\right) = \frac{1}{8} + \frac{3}{8} = \frac{1}{2} & \text{From data}
    \end{align*}
    From the data, we also have $P(J_{R=0} = 1) = \frac{24}{32} = \frac{3}{4}$. Hence, we can compute
    \begin{align*}
        Q = P(J_{R=1, E_{R=0}} = 1) - P(J_{R=0} = 1) = \frac{1}{2} - \frac{3}{4} = -\frac{1}{4} = -0.25,
    \end{align*}
    revealing that while the direct effect is slightly lower than the average treatment effect, the company still shares some responsibility in the discrimination of protected classes in their hiring practices. Indeed, one could construct a $\cG$-NCM that matches $\cM^*$ on the given data sets, for example:
    \begin{equation*}
        \hM =
        \begin{cases}
            \widehat{\*U} & = \{\widehat{U}_{RE}, \widehat{U}_J\} \\
            \*V &= \{R, E, J\} \\
            \widehat{\cF} &= \begin{cases}
                \hat{f}_R(\widehat{u}_{RE}) &= \hat{f}_{\geq 0.5}(\widehat{u}_{RE}) \\
                \hat{f}_E(r, \widehat{u}_{RE}) &= \hat{f}_{OR}
                \begin{cases}
                    \hat{f}_{AND} \left(\hat{f}_{<0.375}(\widehat{u}_{RE}), \hat{f}_{NOT}(r) \right) \\
                    \hat{f}_{AND} \left(\hat{f}_{<0.125}(\widehat{u}_{RE}), r \right) \\
                    \hat{f}_{\geq 0.875}(\widehat{u}_{RE})
                \end{cases} \\
                \hat{f}_J(r, e, \widehat{u}_J) &= \hat{f}_{OR}
                \begin{cases}
                    \hat{f}_{AND} \left(\hat{f}_{NOT}(r), \hat{f}_{< 0.75}(\widehat{u}_J)\right) \\
                    \hat{f}_{AND} \left(r, \hat{f}_{NOT}(e), \hat{f}_{< 0.25}(\widehat{u}_J)\right) \\
                    \hat{f}_{AND} \left(r, e, \hat{f}_{< 0.75}(\widehat{u}_J)\right) \\
                \end{cases}
            \end{cases} \\
            P(\widehat{\*U}) & \text{ defined such that } \widehat{U}_{RE}, \widehat{U}_J \sim \unif(0, 1)
        \end{cases}
    \end{equation*}
    One can verify that the quantities induced by $\hM$ match those in Table ~\ref{tab:ex-id-data}. Interestingly, the functions of $\hM$ are quite different from those of $\cM^*$, and in fact, they will likely not match in non-identifiable quantities. However, if we evaluate $\hM$ on our query of interest $Q$, we see that
    \begin{enumerate}
        \item $E_{R=0} = 1$ when $\hU_{RE} < 0.375$ or $\hU_{RE} \geq 0.875$, 0 otherwise;
        \item When $E_{R=0} = 0$, $J_{R=1, E_{R=0}} = 1$ when $\hU_{J} < 0.25$, 0 otherwise;
        \item When $E_{R=0} = 1$, $J_{R=1, E_{R=0}} = 1$ when $\hU_{J} < 0.75$, 0 otherwise.
    \end{enumerate}
    Hence, we can compute
    \begin{align*}
        & P^{\hM}(J_{R=1, E_{R=0}} = 1) \\
        &= P(0.375 \leq \hU_{RE} < 0.875)P(\hU_J < 0.25) + P(\hU_{RE} < 0.375 \vee \hU_{RE} \geq 0.875)P(\hU_J < 0.75) \\
        &= (0.5)(0.25) + (0.5)(0.75) = 0.5,
    \end{align*}
    which matches the symbolic result. Additionally, we see that $P^{\hM}(J_{R=0} = 1) = P(\hU_J < 0.75) = 0.75$, so we see that the result induced by $\hM$ for $Q$ is indeed $0.5 - 0.75 = -0.25$.
    
    Computing the true value of the query from $\cM^*$ using values from Table \ref{tab:ex-id-truth}, we see that
    \begin{align*}
        & P^{\cM^*}(J_{R=1, E_{R=0}} = 1) - P^{\cM^*}(J_{R=0} = 1) \\
        &= p_0 + p_3 + p_5 + p_6 - p_0 - p_2 - p_4 - p_6 = p_3 + p_5 - p_2 - p_4 \\
        &= \frac{1}{32} + \frac{3}{32} - \frac{3}{32} - \frac{9}{32} = -\frac{8}{32} = -0.25.
    \end{align*}
    
    Remarkably, by simply fitting the $\cG$-NCM on the data and querying it as if it were $\cM^*$ itself, we achieve the correct value of $Q$ matching both the true value from $\cM^*$ and the result from the symbolic derivation. As long as the training succeeds in matching the data, it does not matter what functions are learned, and the evaluation requires no knowledge of the symbolic approach, demonstrating the strength of using the NCM for evaluating identifiable queries.
    \hfill $\blacksquare$
\end{example}

\subsection{Discussion on Markovianity}
\label{app:markovianity}

\begin{figure}
    	\centering
    	\begin{subfigure}[t]{0.3\textwidth}
    	    \centering
            \begin{tikzpicture}[xscale=1.5, yscale=1.5]
        		\node[draw, circle] (X) at (-1, 0) {$X$};
        		\node[draw, circle] (Z) at (0, 1) {$\mathbf{Z}$};
        		\node[draw, circle] (Y) at (1, 0) {$Y$};
        		
        		\path [-{Latex[length=2mm]}] (X) edge (Y);
        		\path [-{Latex[length=2mm]}] (Z) edge (X);
        		\path [-{Latex[length=2mm]}] (Z) edge (Y);
        	\end{tikzpicture}
        	\caption{Example of a graph from a Markovian SCM. Note the lack of bidirected edges. $Z$ is an observed confounder between $X$ and $Y$.}
        	\label{fig:bd-markov-graph}
    	\end{subfigure}
    	\hfill
    	\begin{subfigure}[t]{0.3\textwidth}
    	    \centering
            \begin{tikzpicture}[xscale=1.5, yscale=1.5]
        		\node[draw, circle] (X) at (-1, 0) {$X$};
        		\node[draw, circle, dashed] (Z) at (0, 1) {$\mathbf{Z}$};
        		\node[draw, circle] (Y) at (1, 0) {$Y$};
        		
        		\path [-{Latex[length=2mm]}] (X) edge (Y);
        		\path [-{Latex[length=2mm]}, dashed] (Z) edge (X);
        		\path [-{Latex[length=2mm]}, dashed] (Z) edge (Y);
        	\end{tikzpicture}
        	\caption{If $Z$ is unobserved, then Markovianity no longer holds because $Z$ becomes an unobserved confounder between $X$ and $Y$.}
        	\label{fig:bd-unobserved}
    	\end{subfigure}
    	\hfill
    	\begin{subfigure}[t]{0.3\textwidth}
    	    \centering
            \begin{tikzpicture}[xscale=1.5, yscale=1.5]
        		\node[draw, circle] (X) at (-1, 0) {$X$};
        		\node[draw, circle] (Y) at (1, 0) {$Y$};
        		
        		\path [-{Latex[length=2mm]}] (X) edge (Y);
        		\path [{Latex[length=2mm]}-{Latex[length=2mm]}, dashed, bend left=45] (X) edge (Y);
        	\end{tikzpicture}
        	\caption{In non-Markovian settings, unobserved confounding between two observed variables is denoted as a bidirected edge.}
        	\label{fig:bow-non-markov}
    	\end{subfigure}
    	\caption{Graphs for Markovianity discussion.}
    	\label{fig:markov-graphs}
    \end{figure}
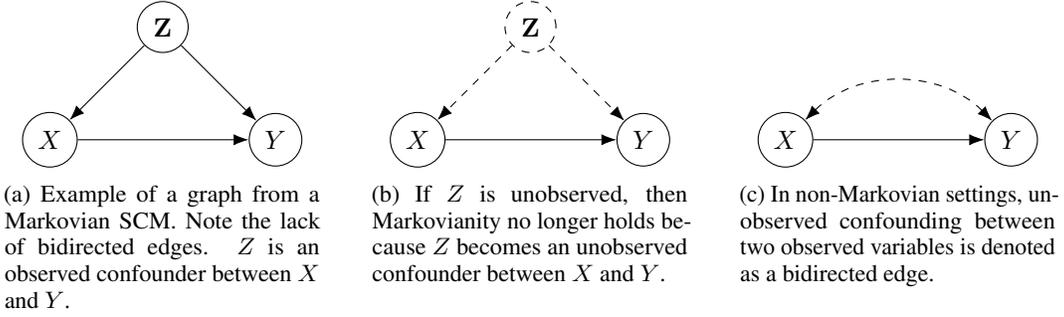

One of the major contributions emphasized in this work is the generality of the approach. In particular, Alg.~\ref{alg:ncm-solve-ctfid} solves the counterfactual identification and estimation problems for any counterfactual query with any given input in any causal graph setting. Notably, we do not make the \emph{Markovianity} assumption about the generating SCM. 
Markovianity restricts the set of unobserved variables $\*U$, requiring them to be independent and having each $U \in \*U$ as an input to at most one function in $\cF$. This translates to the lack of bidirected edges in any causal diagram generated by a Markovian SCM (Fig.~\ref{fig:bd-markov-graph}, for example). In other words, since no unobserved variable can be a common cause of two observed variables, Markovianity encodes the fact that there is no unobserved confounding.

Markovianity is frequently assumed in many works (including several related works in Appendix \ref{app:related-works}) because it drastically simplifies the problem. When attempting to infer $\cL_2$ queries from $\cL_1$ data and the causal diagram $\cG$, it turns out that any interventional query $P(\*y_{\*x})$ can trivially be identified as
\begin{equation}
    \label{eq:markov-l2}
    P(\*y_{\*x}) = \sum_{\*z} P(\*y \mid \*z, \*x)P(\*z),
\end{equation}
where $\*Z$ is the set of non-descendants of $\*X$; for details, see \citep[Corol.~2]{bareinboim:etal20}.

Several observations follow from this observation. First, this implies that works that attempt to infer $\cL_2$ queries from $\cL_1$ data under the Markovianity assumption do not even need to consider the identification problem, as they are all identifiable given the graph. \citet{kocaoglu2018causalgan} is an example of a work that can directly estimate the query without having to be concerned about the identification problem due to the Markovianity assumption. 

Second, when attempting to estimate the $\cL_2$ query, the estimand in Eq.~\ref{eq:markov-l2} is the same for every query, so only one form of the query needs to be considered when designing an estimator. For example, although not explicitly assuming Markovianity, the works in Footnote \ref{ft:bd-models} assume conditional ignorability, which implies the lack of unobserved confounding between the treatment $X$ and the outcome $Y$. This allows the causal effect $P(\*y_{\*x})$ to be identified as stated in Eq.~\ref{eq:markov-l2}, so these works can focus directly on estimating the specific equation in Eq.~\ref{eq:markov-l2}. On the counterfactual level, the identification problem may still need to be considered as there are non-identifiable $\cL_3$ queries even in the Markovian case. Still, the problem is drastically simplified as confounding is no longer an issue. In fact, works like \citet{pawlowski2020deep} and \citet{sanchezmartin2022} rely on this simplification for counterfactual estimation in the Markovian case and do not consider the identification problem despite potential the lack of identifiability of various queries.

Third, another notable simplification introduced by Markovianity is that distributions from the SCM can often be factorized into a convenient form. For example, the observational distribution $P(\*v)$ can be factorized as the product: 
\begin{equation*}
    P(\*v) = \prod_{V \in \*V} P(v \mid \pai{V}),
\end{equation*}
where $\Pai{V}$ refer to the set of variables with an edge into $V$ in $\cG$, and $v$ and $\pai{V}$ are consistent with $\*v$. Similarly, an interventional distribution $P(\*v_{\*x})$ can be factorized as
\begin{equation*}
    P(\*v_{\*x}) = \prod_{V \in \*V \setminus \*X} P(v \mid \pai{V}),
\end{equation*}
where $v$ and $\pai{V}$ are consistent with $\*v$ and $\*x$. Counterfactual distributions follow a similar pattern. For example, when considering the graph $\cG$ in Fig.~\ref{fig:bd-markov-graph}, $P(y_x, x'_z)$ factorizes as
\begin{equation*}
    P(y_x, x'_z) = \sum_{z'} P(y \mid x, z')P(x' \mid z)P(z'), 
\end{equation*}
where $x \neq x'$. The pattern is that in every case, each term in the estimand can be written as the probability of a variable given its parents. This allows for models to encode the mechanism for each variable $V$ as simply the conditional distribution $P(V \mid \Pai{V})$, abstracting away the functions $\cF$ and exogenous noise $P(\*U)$ from the SCM. Architectures that benefit from this include DeepSCM from \citet{pawlowski2020deep}, which utilizes normalizing flows to model each conditional probability $P(V \mid \Pai{V})$, and VACA from \citet{sanchezmartin2022}, which fits a variational graph autoencoder to the factorized distributions. In contrast, in non-Markovian cases, these factorizations typically do not hold. For example, if the true SCM induces the graph in Fig.~\ref{fig:bow-non-markov}, then $P(y_x) \neq P(y \mid x)$ despite $X$ being the only observed parent of $Y$. Works that rely on Markovian factorizations often do not have a straightforward generalization to non-Markovian cases.

Fourth, the works cited in this section that leverage the Markovianity assumption have still provided important contributions to the understanding and practical approaches to causal and counterfactual inference, and they are widely praised for their successes. However, Markovianity is a very restricting assumption and is unrealistic in many practical settings. For example, consider a simple study on the effects of exercise $X$ on cholesterol $Y$, with age $Z$ as a confounder. The relationship between variables can be modeled as shown in Fig.~\ref{fig:bd-markov-graph}. If, in the study, data on age is simply not collected, then $Z$ becomes an unobserved confounder (see Fig.~\ref{fig:bd-unobserved}), and Markovianity is immediately violated. The corresponding graph would look like Fig.~\ref{fig:bow-non-markov} with $Z$ omitted. In many cases, collecting the data may not even be an option. For example, if $X$ is smoking, $Y$ is cancer, and $Z$ is the expressiveness of a gene, perhaps collecting data on gene expressivity is too difficult or even impossible. In general, one cannot expect to be able to collect data on every relevant variable in the study, so unobserved confounding is to be expected in realistic settings.

The solution presented in this paper is therefore important to understand the more complex, realistic, and general setting of potentially having unobserved confounding. Since the lack of the Markovianity assumption greatly complicates the problem of counterfactual inference, many of the theoretical results in this work (e.g. Thms.~\ref{thm:l3-g-expressiveness} and \ref{thm:ncm-ctfid-equivalence}) required more involved proofs. The result is that using Alg.~\ref{alg:ncm-solve-ctfid}, $\cG$-NCMs can solve any instance of counterfactual identification and estimation even under unobserved confounding. This is demonstrated in the experiments, which consider several settings in which Markovianity is violated (e.g. the second, third, and fourth rows in Fig.~\ref{fig:gan-id-expl-results}, where the unobserved confounding is shown visually in the causal diagrams as red dashed bidirected arrows).

\clearpage
\section{Frequently Asked Questions}

\begin{enumerate}[label=Q\arabic*.]
    \item Why do you assume that the causal diagram exists instead of learning it? Is it reasonable to expect that the causal diagram is available?
    
    \textbf{Answer}. For a detailed discussion on this topic, see Sec.~\ref{sec:cg-assumption}. In summary, the assumption of the causal diagram is made out of necessity. The Neural Causal Hierarchy Theorem (N-CHT) \citep[Corol.~1]{xia:etal21} states that distributions from lower layers almost always underdetermine higher layers in a measure-theoretic sense \citep[Thm.~1]{bareinboim:etal20}. This means that despite its universal expressivity, an NCM trained only on lower layer data (e.g., observational) cannot be reliably used for inferences on higher layers (e.g., interventional or counterfactual). Therefore, assumptions are necessary to make inferences across the PCH's layers. The causal diagram is a flexible assumption that is used throughout the literature and encodes a qualitative description of the generating model, which is often much easier to obtain than the actual mechanisms of the generating SCM \citep{pearl:2k, spirtes:etal00,peters:17}. Given the N-CHT, any attempts to learn the causal diagram will be incorrect or not meaningful unless other assumptions are made in this process (e.g., faithfulness). While other types of assumptions can be made, all such assumptions have their own benefits and drawbacks. The goal of this paper is not to decide which set of assumptions is the best, but rather to provide the toolkit for data analysts to perform the inferences once the assumptions have already been made.
    \\
    
    \item How could the graphical assumption be relaxed?
    
    \textbf{Answer}. Relaxing the graphical assumption is out of the scope of this work since the goal of this paper is first to establish a principled approach of using neural models to perform counterfactual inferences under general well-understood assumptions. Still, finding a solution to the problem when only partial information about the graph is available is an interesting direction for future work. One such possible approach, discussed at the end of Sec.~\ref{sec:cg-assumption}, is to utilize equivalence classes of causal diagrams, learnable from data, to perform inferences \citep{jaber2018causal,jaber2019identification,jaber2022calc}. However, this approach has its own drawbacks, notably, that equivalence classes are not as descriptive as the true causal diagram, and therefore fewer queries will be identifiable. Still, this is one approach that can be considered in cases where assuming the availability of the full causal diagram is too stringent of a condition.
    \\
    
    \item Why do you assume non-Markovianity? Is this more general than solutions that assume Markovianity?
    
    \textbf{Answer}.
    That is an interesting question. We do not assume ``non-Markovianity'', but rather we do not make any assumption on Markovianity at all. Typically, Markovianity is the assumption and not the other way around. Markovianity requires that there exists no unobserved confounding, but works that do not assume Markovianity do not enforce such a requirement and are therefore more general because they work in both Markovian and non-Markovian cases. As mentioned earlier in the manuscript, it is hard to ascertain Markovianity in many practical settings, so not making the assumption is more reasonable in many natural settings. 
    Unlike many related works like those in Appendix \ref{app:related-works}, this paper considers cases with unobserved confounding, which is represented graphically through dashed bidirected edges. This approach is more general in the sense that Alg.~\ref{alg:ncm-solve-ctfid} can be used even in the presence of unobserved confounders, which is one of the major contributions of this paper. Markovianity is often assumed to simplify the problem setting, but it is often unrealistic to expect that data on all potential confounding variables are collected in a study. For a detailed discussion on this topic, see Appendix \ref{app:markovianity}.
    \\
    
    \item Why do you use the $\mathcal{G}$-NCM definition in Def.~\ref{def:gncm} with feedforward neural networks and uniform noise? Could the results in this paper also hold for other function classes and noise?
    
    \textbf{Answer}. The architectural choices made in Def.~\ref{def:gncm} are standard choices used to ground the discussion to a more concrete model. The same architecture is also used in the experiments. Still, the specific choice of architecture is not important to the theoretical results as long as key properties persist (e.g. $\cG^{(\cL_3)}$-consistency, $\cL_3$-$\cG$ expressiveness), which will often be the case, since the arguments to prove the theoretical results in this work are mostly independent of the architecture. In particular, Sec.~D.1 of \citet{xia:etal21}, which introduces the $\cG$-NCM, discusses generalizations of NCMs in terms of functions and noise which still guarantee the relevant theoretical results, so long as the functions are sufficiently expressive and the noise variables have well-defined probability density functions.
    \\
    
    \item Given that neural networks are universal approximators, it seems unsurprising that $\cG$-NCMs can perform counterfactual inference. What makes $\cG$-NCMs so special compared to other model classes?
    
    \textbf{Answer}. The structure of this paper follows the delicate balance between expressiveness and constraint. As emphasized by the Causal Hierarchy Theorem \citep[Thm.~1]{bareinboim:etal20} and \citep[Corol.~1]{xia:etal21} , expressivity alone is not enough to infer higher layer quantities from lower layer data. Without assumptions, any model, including NCMs, trained only on $\cL_1$ or $\cL_2$ data will not contain any meaningful information about $\cL_3$ quantities. Hence, assumptions in the form of constraints are discussed in Sec.~\ref{sec:data-structure}. One important question is how to properly incorporate such constraints into a model. Constraining a model too much may hinder its expressiveness, and it may no longer be able to express the true model. Therefore, a proper model class for counterfactual inference must be able to incorporate the constraints without losing expressivity within the constrained class.
    
    The $\cG$-NCM model class has both the critical properties of expressiveness and constraint (Thms.~\ref{thm:gl3-consistency} and \ref{thm:l3-g-expressiveness}) that are not enjoyed by many model classes. Examples of these properties are shown in Appendix \ref{app:examples-properties}. Example \ref{ex:l3-expression-wrapper} shows an example of a model class that, although expressive on all three layers, fails to properly incorporate the constraints of the graph, $\cG$. Examples \ref{ex:cbn-expressive} and \ref{ex:markov-expressive} show examples of model classes that incorporate the constraints of $\cG$ but fail to achieve expressiveness within the constrained class. In contrast to these examples, the $\cG$-NCM model class has a natural and intuitive approach to incorporating the constraints of $\cG$ without damaging its expressiveness. These properties empower the $\cG$-NCM the capability of solving the problems of counterfactual identification and estimation (see Thm.~\ref{thm:ncm-ctfid-equivalence}, Corols.~\ref{cor:op-id} and \ref{thm:ncm-ctfid-correctness}, and Alg.~\ref{alg:ncm-solve-ctfid}). Examples \ref{ex:non-id} and \ref{ex:id} concretely demonstrate this ability.
    
    When compared to similar works, like the ones in Appendix \ref{app:related-works}, these properties are not as clearly established. While many of the works do incorporate graphical constraints, these constraints are often taken for granted as they are not formally discussed. In contrast, the ideas of graphical constraints are developed formally in this work in Sec.~\ref{sec:data-structure} and Appendix \ref{app:proof-G-cons}. Many models in the related works assume Markovianity, which fails in terms of expressiveness for the same reason as Example \ref{ex:markov-expressive}. Some of the works only focus on the interventional ($\cL_2$) level rather than the counterfactual ($\cL_3$) level.
    \\
    
    \item Why would one prefer to identify counterfactual quantities using NCMs? Would it not be easier first to identify the query using symbolic approaches, then use the NCM for estimation?
    
    \textbf{Answer}. For a detailed discussion on the difference between symbolic and optimization approaches, we encourage the reader to read Appendix C.4 of \citet{xia:etal21}. In summary, the approach in this paper utilizes $\cG$-NCMs to serve as a proxy model for the true generating SCM. In contrast, existing symbolic approaches like \citet{correa:etal21} work directly with the graph and data, which is on a level of abstraction without the generating SCM. These works typically derive an exact answer for the target query by directly using the constraints of the graph. Indeed, one could use a hybrid approach of using a symbolic algorithm to identify a counterfactual query followed by training a $\cG$-NCM to obtain an estimate. This is certainly an option, and the goal of this paper is to provide another solution route, not to decide for the data analyst which approach is better.
    
    The proxy model approach presented in this paper is insightful to the problem of counterfactual inference because it shows that $\cG$-NCMs are capable of jointly solving both counterfactual identification and estimation, which typically appear together in practice. Solving both problems together indicates that the $\cG$-NCM is capable of both reasoning whether the query has any counterfactual meaning through identification and providing the practical benefits of statistical modeling through estimation. There are many reasons why one may prefer a proxy-based approach. First, the generative properties of the learned proxy model allow it to serve as an infinite source of samples for the user. Second, the trained model could provide quick estimation results for multiple queries without the need for retraining, provided the queries are identifiable. Third, the idea of minimizing and maximizing the query in Alg.~\ref{alg:ncm-solve-ctfid} is central to identification problems. Hence, a similar procedure can be used in similar identification problem settings without an existing symbolic solution. Fourth, this idea can potentially be used to bound non-identifiable queries, which is an exciting avenue for future work.
    \\
    
    \item Why would one want to learn an entire SCM to solve the estimation problem for a specific query instead of just directly estimating the query?
    
    \textbf{Answer}.
    Indeed, if the user simply wants to estimate one query and is knowledgeable about the estimand, it may be more efficient, statistically or computationally, to focus efforts on just estimating that query. However, there are many reasons why the proxy model approach used in this work may be beneficial. First, by learning an entire SCM and estimating queries using Eq.~\ref{eq:ncm-mc-sampling}, one can apply the same procedure to any potential query, reducing the complexity of the solution. This relieves the user of the need to derive an estimand. Even if the estimand is known, developing a specific approach to estimating this estimand can be difficult depending on the form of the estimand. Second, if the user wants to estimate multiple queries, they could potentially reuse the same $\cG$-NCM without retraining, provided that the queries are identifiable. Third, as a generative model, the $\cG$-NCM can be used as an infinite source of samples should the user desire this in addition to having an estimate for the query.
    \\
    
    \item Why is the GAN architecture used for the experiments?
    
    \textbf{Answer}. 
    We found that using a GAN approach was the most straightforward way to implement the $\cG$-NCM in practice. When using an implicit generative model approach, one can construct a $\cG$-NCM exactly as described in Def.~\ref{def:gncm} and train it as specified in Alg.~\ref{alg:ncm-learn-pv}. Using a discriminator network as the divergence function ($\bbD_P$) provides a flexible and practical approach to generating a loss for optimization, as understood in GAN literature. Alternative approaches to modeling the divergence functions all have practical drawbacks. For example, KL divergence is typically intractable to compute directly. Appendix \ref{app:explicit-ncm} explains why an explicit generative model approach like the MLE-NCM (used in \citet{xia:etal21}) is significantly more complicated. We also include experiments comparing against the MLE-NCM, which demonstrated some key flaws of the MLE approach, such as poor runtime scalability (see Fig.~\ref{fig:runtime-results}) or having a bias to output ``ID'' even in non-ID settings (see Fig.~\ref{fig:gan-id-expl-results}). For further results, see Appendix \ref{app:more-experiments}. Still, the key theoretical results in this work do not depend on the specific architecture used in practice, and the choice of using GANs is unimportant in this discussion. Users can use any architecture they see fit while still applying the results in this work.
    \\
    
    \item Given the training instability of GANs, is the GAN-NCM approach guaranteed to work?
    
    \textbf{Answer}. Indeed, the theoretical results described in Sec.~\ref{sec:ncm-ctf-id} assume that infinite data is provided and optimization is perfect, which is unrealistic in practice. Hence, a hypothesis testing procedure is required to decide identifiability, described in Appendix \ref{app:hyp-test}. This means that the results from solving the identification and estimation problems will have some degree of error in practice. While deriving bounds on the error could be an interesting direction for future work, the error is shown to be quite manageable empirically using GANs, as shown in the experiments in Sec.~\ref{sec:experiments}. In the estimation experiments, the error is shown to decrease as samples increase, demonstrating a level of statistical consistency.
    \\

    \item Does the work apply to cases with continuous variables?
    
    \textbf{Answer}. The theoretical results in this work assume that the variable domains are finite, and no claims are made about continuous variables. If extending to the continuous case, on the theoretical side, counterfactual distributions and queries would have to be properly defined and new machinery based on measure theory evoked. It is non-trivial to prove some of the theoretical results of this paper in continuous settings. 
    While the exogenous variables and structural functions of the true model can take any form, this paper relies on the discreteness of the endogenous variables to ensure that there exists a $\cG$-NCM that can exactly match the true model (see also \citet{zhang:bareinboim21b}). It is currently an open problem to generalize these results to the continuous case and in generality.
    
    On the experimental side, sampling or maximizing/minimizing the query from a $\cG$-NCM may have complications depending on the definition of a counterfactual query/distribution in a continuous setting. Intuitively, however, if the query can be sampled via Alg.~\ref{alg:ncm-ctf-sample}, then similar techniques applied in our paper can be extended without much change (although currently without theoretical guarantees). For example, if the query is $E[Y_{X=x} | X \geq 0]$ with both continuous $X$ and $Y$, one could  (1) sample several instances of $X$, (2) filter out negative samples, (3) sample $Y$ using the values of $\mathbf{U}$ from the remaining instances, and (4) compute the average out of these values of $Y$. Then, this Monte Carlo estimate can be used in Alg.~\ref{alg:ncm-learn-pv} for maximizing/minimizing the query and estimating the final result if identifiable. If the query is not so easily compatible with Alg.~\ref{alg:ncm-ctf-sample}, then the approach may need to be adjusted to accommodate continuous definitions.
    \\

    \item Why are there no empirical studies comparing other neural estimation methods (such as the ones in Appendix \ref{app:related-works})?
    
    \textbf{Answer}. Despite their notable contributions, as discussed earlier, related works like the one in Appendix \ref{app:related-works} are not as general as the approach described in this work. For example, some works assume Markovianity, some only solve queries from $\cL_2$ or specific queries from $\cL_3$, and some do not consider identifiability (see Appendix \ref{app:related-works} for a detailed discussion on each of these works). The works in Footnote \ref{ft:bd-models} assume conditional ignorability or backdoor admissibility, which only hold in a few graphs, such as graph (a) from Fig.~\ref{fig:id-r80} (it does not hold in any other graph in Fig.~\ref{fig:id-r80}). The approach presented in this paper is the first work to use neural models to identify any $\cL_3$ query given any combination of $\cL_1$ and $\cL_2$ data and any graph setting (even non-Markovian cases). It will also perform the estimation for identifiable cases, but it will return ``FAIL'' instead of estimating non-identifiable cases. Hence, comparisons with these specific related works are not appropriate since they cannot handle such general settings.
    \\

    \item How does the proposed approach scale to higher dimensional variables or with a greater number of variables?
    
    \textbf{Answer}. The training of the GAN-NCM scales tractably with higher dimensionality or variables, as shown in Fig.~\ref{fig:runtime-results}. While having higher dimensions or more variables may require a larger architecture, and therefore more training time, typically this also comes with a higher data requirement. Hence, the architecture can be scaled with the complexity of the problem, and although complex problems will require more training time, the user may be willing to spend more computational resources on a more complex problem. We hope this work can be a baseline and act as a catalyst to the study of different neural architectures with different computational and statistical properties. 

\end{enumerate}

\end{document}